%% file: manuscript.tex
\documentclass{article}

\PassOptionsToPackage{numbers, compress}{natbib}

\usepackage[preprint]{neurips_2026}

\usepackage[utf8]{inputenc} 
\usepackage[T1]{fontenc}    
\usepackage{hyperref}       
\usepackage{url}            
\usepackage{booktabs}       
\usepackage{amsfonts}       
\usepackage{nicefrac}       
\usepackage{microtype}      
\usepackage{xcolor}         
\usepackage{orcidlink}

\usepackage{amsmath}
\usepackage{amssymb}
\usepackage{mathtools}
\usepackage{amsthm}

\theoremstyle{definition}
\newtheorem{theorem}{Theorem}[section]
\newtheorem{proposition}[theorem]{Proposition}
\newtheorem{definition}{Definition}[section]

\usepackage{overpic}
\usepackage{subcaption}
\usepackage[capitalize,noabbrev]{cleveref}
\usepackage{tikz}
\usepackage[dvipsnames]{xcolor}
\usepackage{comment}
\usepackage{array}
\usepackage{arydshln} 
\usepackage{algorithm}
\usepackage{algorithmic}
\usepackage{enumitem}
\usepackage{fullwidth}
\usepackage{wrapfig}

\newcolumntype{H}{>{\setbox0=\hbox\bgroup}c<{\egroup}}

\newcommand{\gr}[1]{\textcolor{gray}{#1}}

\definecolor{windowsblue}{HTML}{0173b2}
\definecolor{amber}{HTML}{d55e00}
\definecolor{palered}{HTML}{d55e00}
\definecolor{greyish}{HTML}{808080}
\definecolor{fadedgreen}{HTML}{029e73}
\definecolor{dustypurple}{HTML}{825f87}
\definecolor{dustyorange}{HTML}{de8f05}
\definecolor{pinka}{HTML}{d42aff}
\definecolor{lightbluea}{HTML}{5fd3bc}
\definecolor{lightblueb}{HTML}{2ad4ff}
\definecolor{lightgreena}{HTML}{abc837}

\title{The Role of Node Features in Graph Pooling}

\author{%
  Jan von Pichowski~\mbox{\orcidlink{0009-0001-1541-6477}} \\
  Center for AI and Data Science \\
  University of W{\"u}rzburg, Germany  \\
  \texttt{jan.pichowski@uni-wuerzburg.de}  \\
  \And
  Al{\v z}beta Hrabo{\v s}ov{\'a}~\mbox{\orcidlink{0009-0006-1350-8454}} \\
  Center for AI and Data Science \\
  University of W{\"u}rzburg, Germany \\
  \texttt{alzbeta.hrabosova}\\ \texttt{@stud-mail.uni-wuerzburg.de}
  \AND
  Ingo Scholtes~\mbox{\orcidlink{0000-0003-2253-0216}} \\
  Center for AI and Data Science \\
  University of W{\"u}rzburg, Germany \\
  \texttt{ingo.scholtes@uni-wuerzburg.de} \\
  \And
  Christopher Bl{\"o}cker\thanks{Also at Center for AI and Data Science, University of W{\"u}rzburg, Germany}~~\mbox{\orcidlink{0000-0001-7881-2496}}\\
  Department of Computing Science \\
  Ume{\aa} University, Sweden \\
  \texttt{christopher.blocker@umu.se} \\
}

\begin{document}

\maketitle

\begin{abstract}
    Graph pooling is commonly applied in graph classification, yet its empirical gains over standard WL-1 expressive GNNs are often marginal or inconsistent.
    We study this gap by analysing the interaction between node features and graph topology and their effect on pooling objectives.
    Our analysis reveals that pooling operators require node features that are well-aligned with the graph's topology---a condition often overlooked and not guaranteed in empirical networks.
    We formalise fundamental requirements for node features to enable effective pooling, and introduce a quantitative measure of feature quality.
    Our empirical evaluation shows that, when these requirements are satisfied, pooling can be beneficial and improve performance on appropriate datasets.
\end{abstract}

\section{Introduction}
Graph Neural Networks (GNNs) achieve state-of-the-art performance across graph-learning tasks from various domains, such as molecular property prediction \cite{pmlr-v162-stark22a,10.5555/3692070.3694247}, and analysing scientific collaborations \cite{kipf2017semisupervised,bojchevski2018deep}.
The basic idea behind GNNs is simple yet powerful: They use the graph's topology for message passing (MP), and learn representations for nodes, edges, or entire graphs through iterative refinement \cite{bronstein2021geometricdeeplearninggrids}.
The messages passed between nodes are the nodes' embeddings, which initially correspond to the nodes' features.

Deep graph pooling reduces graphs stepwise, similar to image pooling operations in computer vision.
Graph pooling is applied in graph classification models that reduce entire graphs to vectorial representations~\cite{NEURIPS2018_e77dbaf6, pmlr-v119-bianchi20a} and in U-shaped GNN architectures~\cite{graph_unets}, however, it has not yet achieved the promising results seen in computer vision---benchmark results vary substantially between works.
Often, not using pooling operators yields results nearly as good as, or even better than, those obtained with pooling~\cite{pmlr-v119-bianchi20a,pmlr-sag_pool, castellana2025bnpoolbayesiannonparametricapproach}.
We attribute this behaviour to the interplay between node features and graph topology.
While GNNs pool nodes based on features or embeddings, they use the graph's topology to assess the quality of node clusters, but these two are generally not aligned.
However, graph pooling often silently assumes a correlation between node features and graph topology.
We show that this assumption is often violated, and demonstrate that features play an underestimated role in the proper functioning of pooling operators.

\clearpage
Specifically, our contributions are:
\begin{itemize}[itemsep=0pt,topsep=-2pt]
  \item We show that node features are the determining factor in the performance of pooling operators, a role often overlooked in deep graph pooling.
  \item We introduce a graph-colouring-based framework to assess the quality of node features and their suitability for graph pooling.
  \item We show that pooling is helpful in certain datasets. In those cases, incorporating positional encodings enhances pooling and improves performance.
  \item Different from previous works that blame poor pooling performance on the shortcomings of pooling operators, our results suggest that pooling suitability is an inherent dataset property.
\end{itemize}

\section{Related Work and Preliminaries}
\label{sec:related}

\textbf{Graph Pooling.}
Recent works have proposed a multitude of graph pooling methods that tackle graph pooling from different perspectives:
Assignment-based methods, such as DiffPool \cite{NEURIPS2018_e77dbaf6}, MinCut \cite{pmlr-v119-bianchi20a}, NOCD \cite{shchur2019overlapping}, DMoN \cite{JMLR:v24:20-998}, Neuromap \cite{NEURIPS2024_1f59562c}, JBPool \cite{Bianchi_2023_jb}, or BN-Pool \cite{castellana2025bnpoolbayesiannonparametricapproach}, create assignment matrices that group nodes into possibly overlapping clusters.
Some of these assignment-based methods build on community-detection approaches from network science \cite{doi:10.1073/pnas.0601602103,doi:10.1073/pnas.0706851105}.
Score-based methods, such as TopK-Pool \cite{graph_unets, nips19_understanding_att}, SAG-Pool \cite{pmlr-sag_pool, nips19_understanding_att}, EC-Pool \cite{ec_pool}, or k-MIS \cite{k_mis}, identify important nodes and choose them as representatives for relevant structures in the network.
All these methods can be cast into a generic framework that describes pooling as a combination of three operations: a learnable select ($\textsc{SEL}_\Theta$), reduce ($\textsc{RED}$), and connect ($\textsc{CON}$)~\cite{sel-red-con}.

\textbf{Weisfeiler-Leman Test.}
The Weisfeiler-Leman test is a graph isomorphism test related to the expressivity of GNNs \cite{xu2018how,10.1609/aaai.v33i01.33014602}.
It relies on the colour refinement procedure first defined in~\cite{Morgan1965MorganAlgorithm}.
We apply this colour-refinement method to argue about the distinguishability of nodes.
Unlike typical applications, we do not compare the stable colourings of two graphs; instead, we focus on the stable colours of nodes derived from the same initial colours within a single graph.

\textbf{Positional Encodings as Node Features.}
To equip GNNs with a sense of node locality and global position, various Positional Encodings (PEs) have been developed. 
Early works focused on leveraging the graph's geometry through the Laplacian spectrum~\citep{pe_laplacian}.
In contrast, structural approaches such as Random Walk PEs~\citep{pe_rw} and Node2Vec~\cite{10.1145/2939672.2939754} capture local topology by utilising transition probabilities and shallow node embeddings, respectively.
More recently, GPSE~\citep{pe_gpse} has introduced deep node embeddings by employing message-passing layers to predict structural descriptors.
Despite the increasing sophistication of these methods, they are rarely incorporated into the evaluation of graph pooling models.
In this work, we demonstrate the need to incorporate appropriate PEs to ensure a robust and equitable evaluation of deep graph pooling architectures.

\textbf{Spectral Clustering.}
In this work, we consider pooling from the perspective of spectral clustering, allowing us to remain within the same conceptual framework, clustering nodes with four approaches:
(1) Traditional spectral clustering algorithms are applied to point data, such as features.
(2) Applying such a traditional spectral clustering algorithm to the graph Laplacian leads to clusters defined by the graph's topology. \cite{fiedler73, 868688}
(3) Graph convolutional networks (GCN) perform spectral convolution \cite{kipf2017semisupervised}.
(4) MinCut applies a spectral clustering objective to train a GNN for community detection \cite{pmlr-v119-bianchi20a}.
To understand the failure modes of pooling, we study the assumption that node features are correlated with clusters \cite{pmlr-v119-bianchi20a}, however, as we will show, this is generally not the case.

\textbf{Preliminaries.}
We focus on community-based pooling methods, such as MinCut \cite{pmlr-v119-bianchi20a}, DiffPool \cite{NEURIPS2018_e77dbaf6}, DMoN \cite{JMLR:v24:20-998}, and MDL-Pool \cite{mdlpool}.
Let $G = \left(V, E\right)$ be a graph with nodes $V$ and links $E \subseteq V \times V$, adjacency matrix $\mathbf{A}$, and features $\mathbf{X}$.
Community-based pooling methods assign nodes to communities via a soft cluster-assignment matrix $\mathbf{S} \in \mathbb{R}^{\left|V\right| \times k}$, where the entry $\mathbf{S}_{ij}$ indicates how strongly node $v_i$ belongs to community $g_j$, and $k$ is the number of communities.
Often, but not always, these entries are constrained to sum to 1 per node via softmax.
Based on $\mathbf{S}$, the graph is then coarse-grained by aggregating node communities into ``super-nodes'' and summarising links accordingly.
However, these methods often produce disconnected communities, which is why we will refer to the identified node groups simply as \textit{groups}, rather than \textit{communities}, to avoid implying topological regularities.

Node groups can overlap, and nodes can belong to more than one group.
However, for simplicity, we assume non-overlapping groups where every node belongs to exactly one group, also called \textit{partitions}.
Then, $\mathbf{S}$ is binary and encodes group memberships, where $\mathbf{S}_{ij} = 1$ iff node $v_i$ belongs to group $g_j$, and $0$ otherwise.
Corresponding to $\mathbf{S}$, we define partition $P_{\mathbf{S}} = \left\{ g_j \right\}_{j=1}^{k}$ as induced by $\mathbf{S}$.
Conversely, for a given partition $P_\mathbf{S}$, we can construct the corresponding cluster assignment matrix $\mathbf{S}$; for simplicity, we will simply refer to partitions as $P$ and drop the index $\mathbf{S}$.

Community-based pooling methods aim to generate optimal group assignments and are trained via a combination of a supervised downstream task, typically based on graph-level labels, and an unsupervised pooling loss.
Pooling losses $\mathcal{L}_{topo}\left(G, \mathbf{S}\right)$ rely on the graphs' topology to determine the goodness of an assignment $\mathbf{S}$.
However, there is a gap between the working principles of GNNs and such topological loss functions:
While GNNs combine topology and node features to construct an assignment $\mathbf{S}$, the unsupervised pooling objective, $\mathcal{L}_{topo}\left(G, \mathbf{S}\right)$, usually ignores node features.

\section{Motivation -- When Does Pooling Work?} 
\label{sec:motivation}
\begin{figure*}[t!]
  \begin{overpic}[width=.15\linewidth]{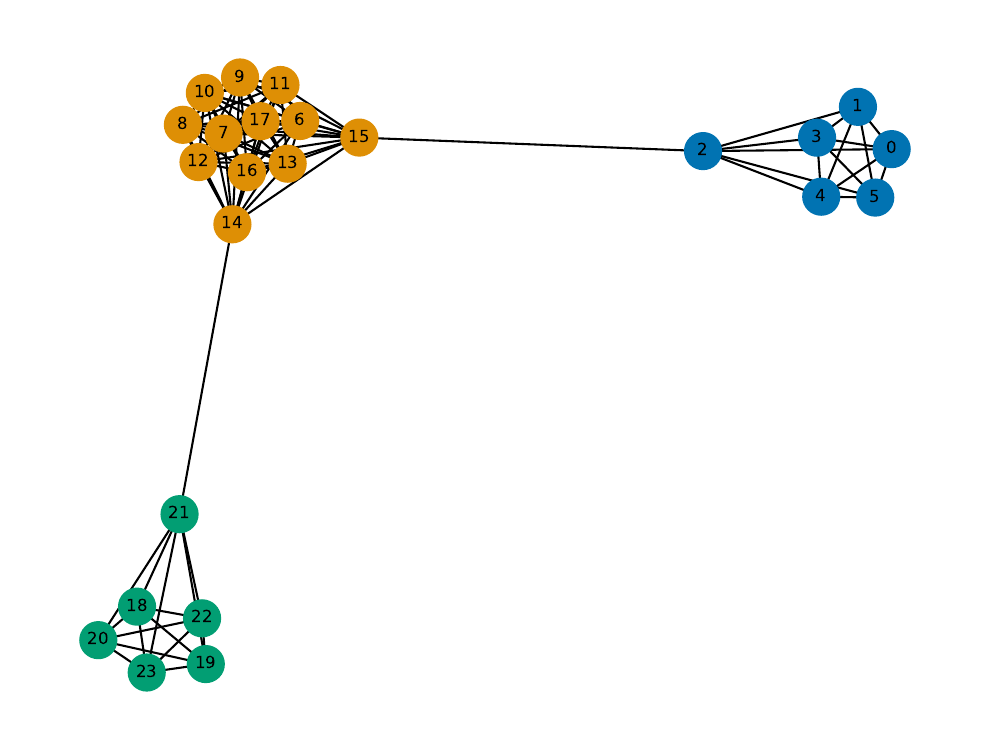}
    \put(-4,70){\textbf{(a)}}
  \end{overpic}
  \includegraphics[width=0.15\linewidth]{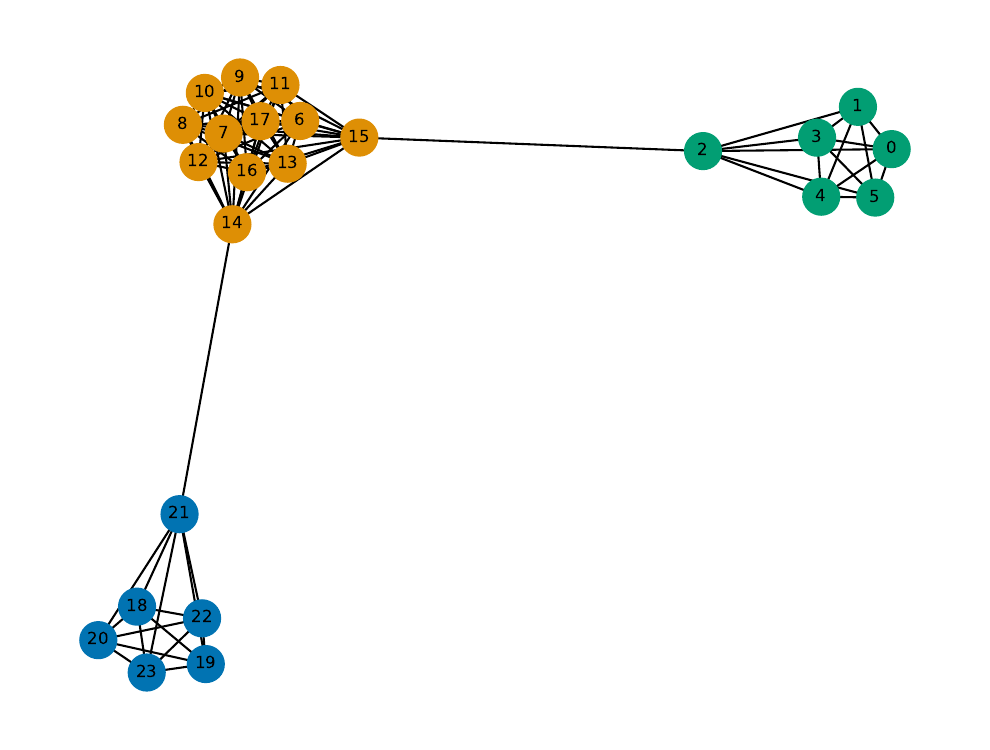}
  \quad
  \begin{overpic}[width=.15\linewidth]{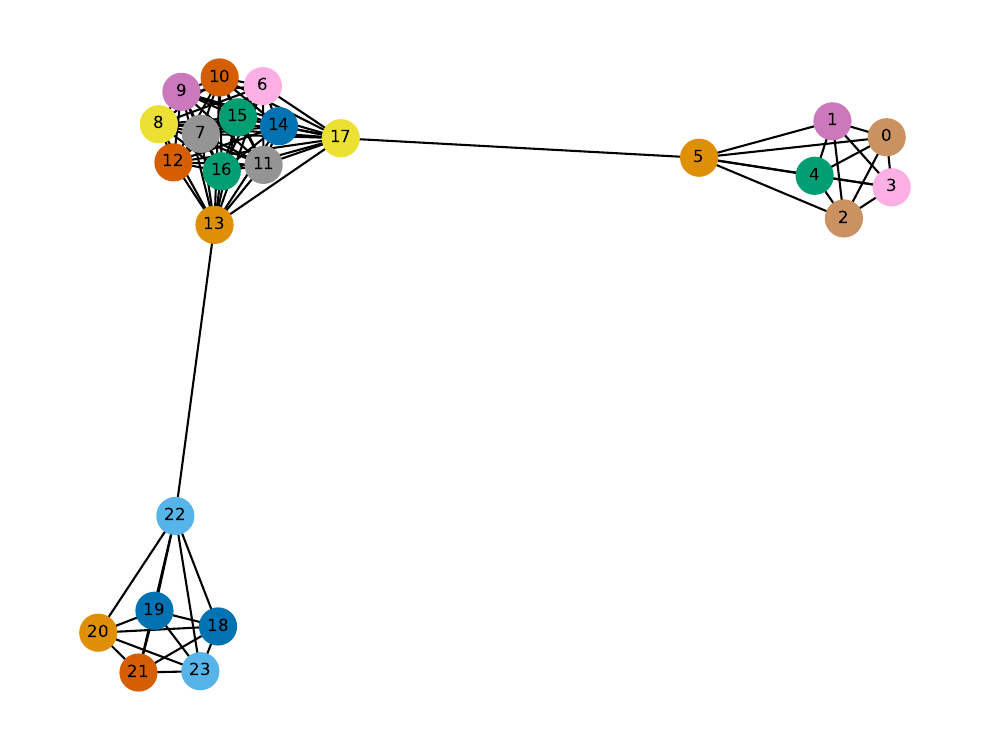}
    \put(-4,70){\textbf{(b)}}
  \end{overpic}
  \includegraphics[width=0.15\linewidth]{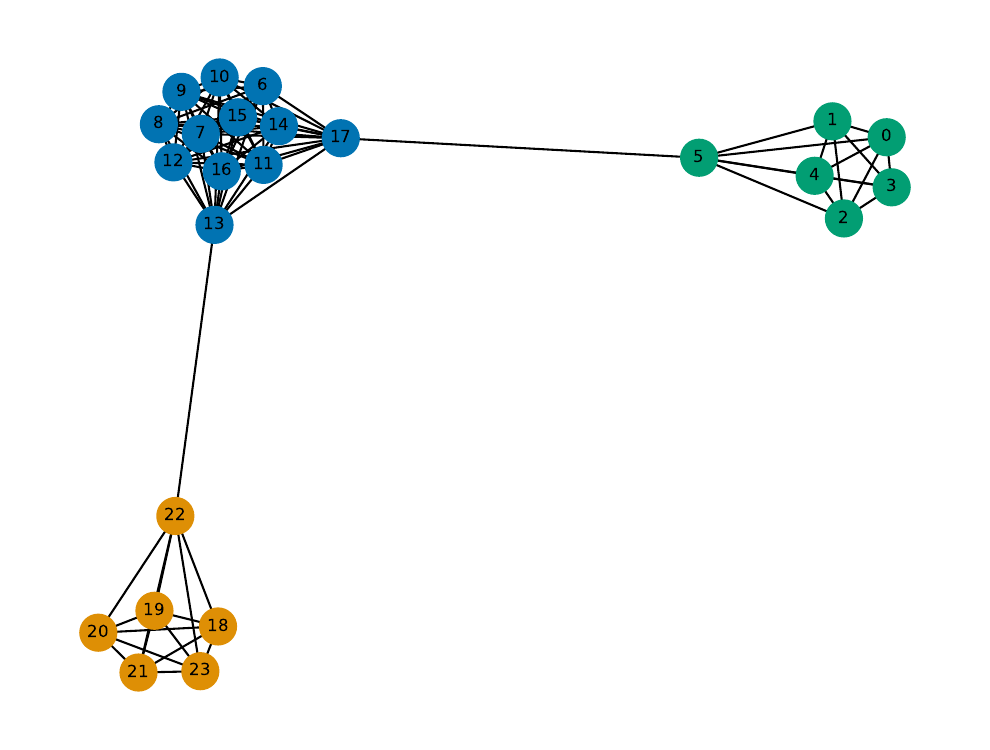}
  \quad
  \begin{overpic}[width=.15\linewidth]{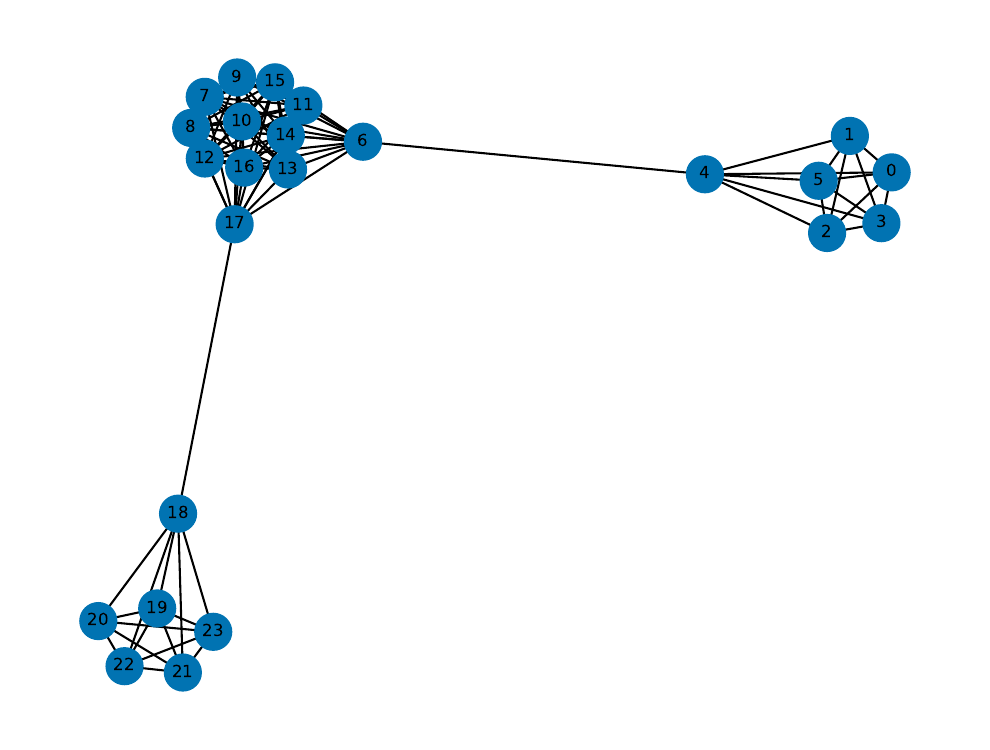}
    \put(-4,70){\textbf{(c)}}
  \end{overpic}
  \includegraphics[width=0.15\linewidth]{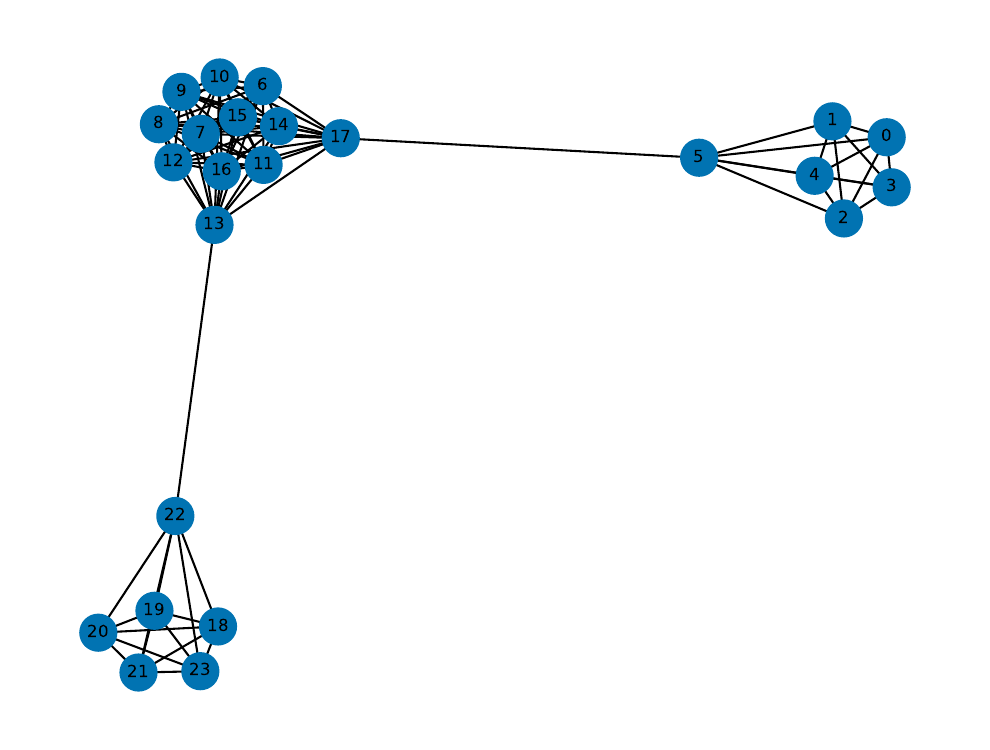}
    \caption{Three instances of a graph with different node features (left) and community assignments (right).
    We detect communities using MinCut-Pool, which employs a GCN and uses a spectral clustering objective.
    \textbf{(a)} Features correspond exactly to the optimal partition; MinCut-Pool works as expected.
    \textbf{(b)} When all nodes have unique features, MinCut-Pool finds the expected communities.}
    \textbf{(c)} When different communities contain nodes with the same features, MinCut-Pool fails.
    \label{fig:exp1}
\end{figure*}

Graph pooling assumes that we can coarse-grain graphs in a meaningful way to summarise their structural properties.
To find such summaries, GNNs combine graph topology with features to jointly learn node representations, assuming that combining them yields informative representations.
However, results from previous deep graph pooling works suggest that learning informative representations often fails.
Specifically, ``no-pool'' is a strong baseline that often outperforms more sophisticated pooling-based methods:
no-pool merely uses a GNN to learn node representations by optimising a global aggregation of them for a downstream task, but without assumptions about structural properties, such as clusters, or what characterises important nodes.

\begin{wrapfigure}[15]{r}{0.3\textwidth}
    \centering
    \vspace*{-1.5\baselineskip}
    \includegraphics[width=\linewidth]{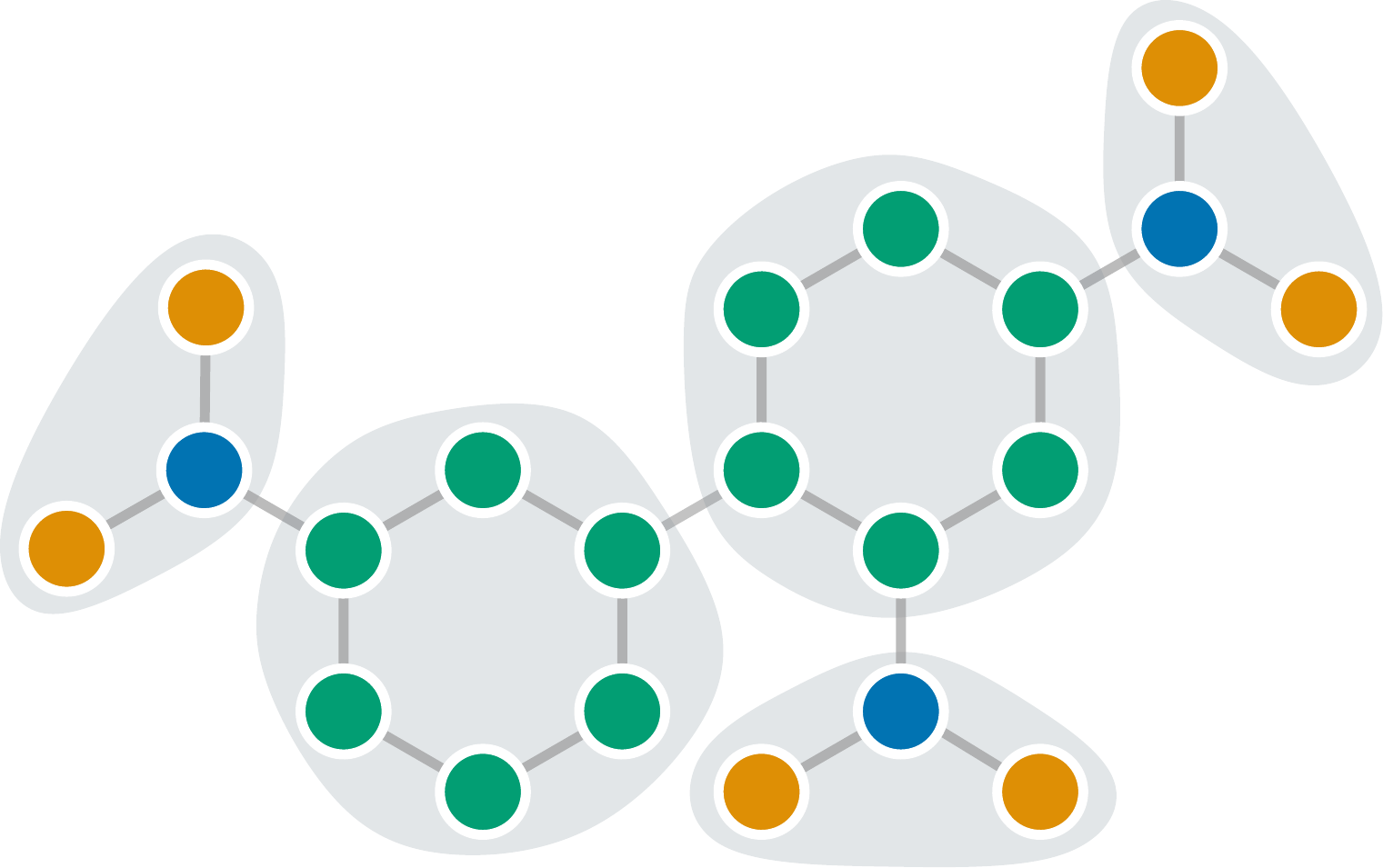}
    \caption{A graph from the Mutag dataset. The node features, shown in colours, do not align well with the topological communities, shown in grey and detected via spectral clustering of the graph Laplacian.
    }
    \label{fig:exp5}
\end{wrapfigure}
In this work, we study the interplay between topology and features, specifically, how it helps or hinders GNN pooling performance.
\Cref{fig:exp1} illustrates that, for good pooling performance, GNNs must be capable of assigning (1) nodes with dissimilar representations to the same cluster (\cref{fig:exp1}b), and (2) nodes with similar representations to different clusters (\cref{fig:exp1}c).
While the former is easy, the latter is challenging.
Why?
Consider clustering nodes with an MLP using the nodes' features, similar to traditional k-means clustering, which ignores topology.
Because the MLP implements a deterministic function, it must assign any two nodes with the same representation to the same cluster, regardless of their topological relationship.
Whether these clusters provide a meaningful summary of the graph's structure depends on the \textit{alignment between node features and topology}.
By combining features with topology, GNNs hold the promise to provide better clusters.
However, when nodes with the same features are situated in neighbourhoods that look alike, GNNs learn the same representation for those nodes and assign them to the same cluster, even if they are topologically far apart \cite{9311759}.
We analyse feature-topology alignments in empirical datasets, develop a notion of alignment between features and topology, and formalise the requirements for identifying optimal clusters.

\subsection{Feature-Topology Alignment in Empirical Networks}
How well do node features align with the topological communities in empirical networks?
We study this alignment and find that, unfortunately, \textbf{the alignment between node features and graph topology is generally poor in empirical datasets}.
As an example, consider an instance from the Mutag dataset~\citep{Morris+2020}, shown in \Cref{fig:exp5}.
For an empirical analysis, we use spectral clustering, denoted $\operatorname{SC}$, to cluster (1) the node features $\mathbf{X}$, and (2) the adjacency matrix $\mathbf{A}$, and measure their alignment with normalised mutual information (NMI) \citep{JMLR:v11:vinh10a} as $\operatorname{NMI}\left(\operatorname{SC}\left(\mathbf{A}\right), \operatorname{SC}\left(\mathbf{X}\right)\right)$.
We also consider clusters obtained with a GCN and MinCut loss, both of which view graph data from a spectral perspective, allowing us to stay within the same conceptual framework for our analyses.
\Cref{fig:nmi} shows the alignments between features, topology, and their combination using a GCN for six datasets, with and without Laplacian positional encodings (PEs).

\subsection{Requirements for generating optimal partitions.}
Here, we identify the key conditions required for a pooling operator $f$ to generate optimal assignments.
We consider $f\left(\mathbf{X}\right) = \sigma\left(\mathbf{HW}\right)$, where $\mathbf{W}$ are learnable weights, $\sigma$ is a non-linear activation function, here ReLU, and $\mathbf{H}$ are the node representations, typically learned with a GNN with $\mathbf{H} = \textsc{GNN}(\mathbf{A}, \mathbf{X})$.
However, for simplicity, we begin by using the raw node features as representations and set $\mathbf{H} = \mathbf{X}$.
\begin{wrapfigure}[19]{r}{0.5\textwidth}
    \centering
    \includegraphics[width=\linewidth]{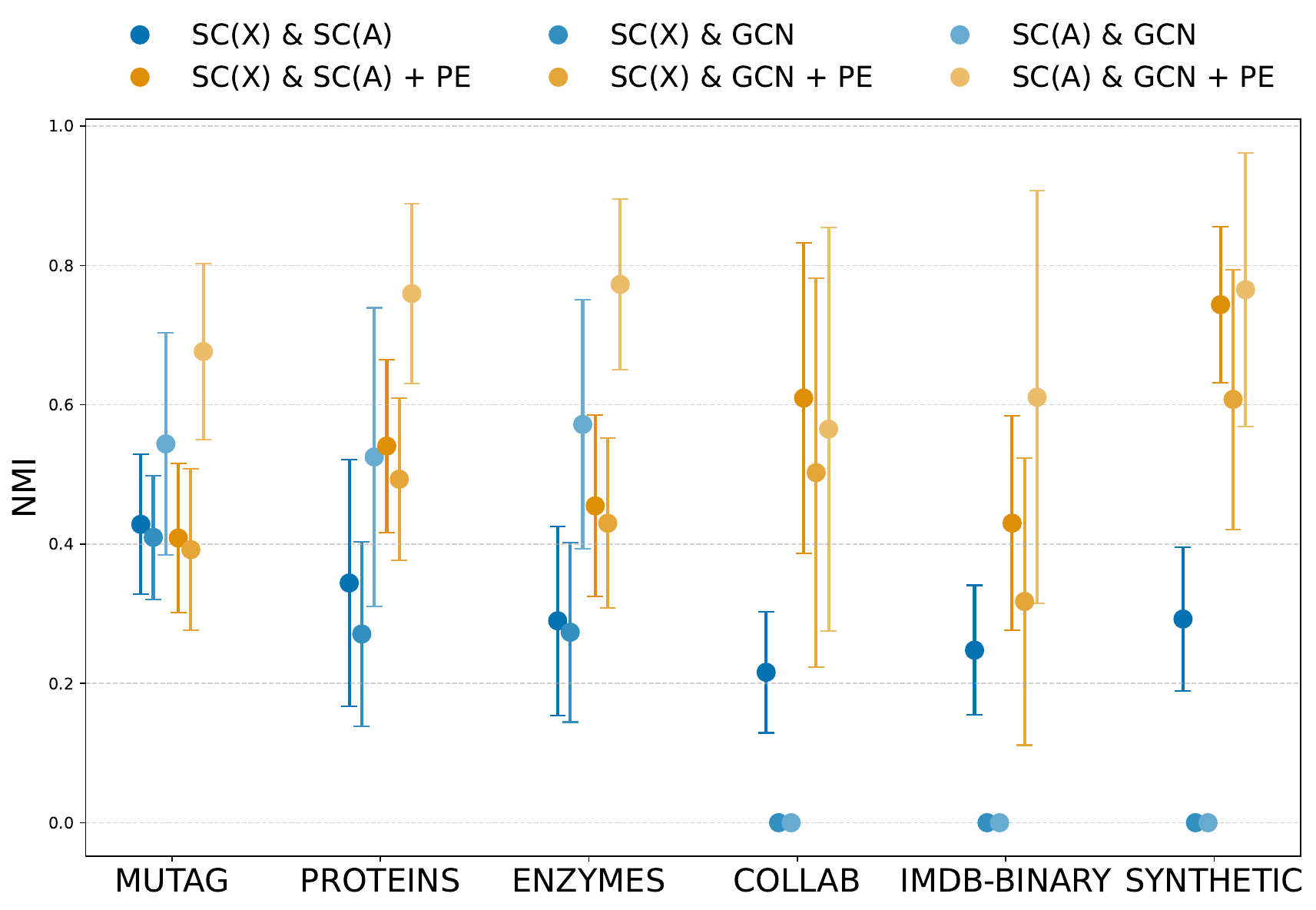}
    \caption{Pairwise normalised mutual information (NMI) between spectral clustering of features, $SC(\mathbf{X})$, topology, $SC(\mathbf{A})$, and communities via GCN and MinCut loss. We also tested including Laplacian positional encodings (PE).}
    \label{fig:nmi}
\end{wrapfigure}

\begin{definition}
    An optimal assignment operator $f_{opt}$ generates an optimal assignment $\mathbf{S}_{opt}$.
    An assignment $\mathbf{S}_{opt}$ is optimal for graph $G$ if there is no other $\mathbf{S}$ with $\mathcal{L}_{topo}(G, \mathbf{S}) < \mathcal{L}_{topo}\left(G, \mathbf{S}_{opt}\right)$.
\end{definition}

First, when the features correspond perfectly to the cluster labels for the optimal partition, there is a trivial mapping $f$ that produces $\mathbf{S}_{opt}$.
\Cref{fig:exp1}a shows an example where features $\mathbf{X}$ correspond to the desired partition, and clustering with a GCN and MinCut loss returns $\mathbf{S}_{opt}$.
\begin{proposition}
\label{theorem:align}
    Given a graph G with node features $\mathbf{X}$ and an optimal assignment $\mathbf{S}_\text{opt}$, there is a $\mathbf{W}$ such that $f(\mathbf{X}) = \mathbf{S}_\text{opt}$ if (1) any two nodes that have the same features belong to the same group, and (2) all nodes in the same group have the same features, that is, $\forall v_i, v_j \in V\colon x_i = x_j \Leftrightarrow g_i = g_j$.
\end{proposition}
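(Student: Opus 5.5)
The plan is to build $\mathbf{W}$ explicitly. The condition $x_i = x_j \Leftrightarrow g_i = g_j$ means the map from groups to feature vectors is well defined and injective. So the $n=|V|$ rows of $\mathbf{X}$ take exactly $k$ distinct values $c_1,\dots,c_k \in \mathbb{R}^d$, one per group $g_1,\dots,g_k$ of $P_{\mathbf{S}_\text{opt}}$. Hence it suffices to find $\mathbf{W}\in\mathbb{R}^{d\times k}$ with $\sigma(c_j^\top \mathbf{W}) = e_j^\top$ for every $j$, where $e_j$ is the $j$-th standard basis vector. Then row $i$ of $f(\mathbf{X})=\sigma(\mathbf{X}\mathbf{W})$ equals $e_{j}^\top$ exactly when $v_i\in g_j$, i.e.\ $f(\mathbf{X})=\mathbf{S}_\text{opt}$.

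The first step handles the case where $c_1,\dots,c_k$ are linearly independent. Stack them as $\mathbf{C}\in\mathbb{R}^{k\times d}$. Then $\mathbf{C}$ has full row rank and a right inverse $\mathbf{C}^+$ with $\mathbf{C}\mathbf{C}^+=\mathbf{I}_k$. Set $\mathbf{W}=\mathbf{C}^+$. Since $\mathbf{C}\mathbf{W}=\mathbf{I}_k$ is already nonnegative and binary, ReLU acts as the identity and we are done.

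The main obstacle is that distinct vectors need not be linearly independent. For example, $k>d$ can occur, or some $c_j$ may be a combination of the others, including $c_j=0$, which ReLU maps to $0$ for every $\mathbf{W}$. I would handle this by reading the affine map $\mathbf{H}\mathbf{W}$ as including a bias, i.e.\ appending a constant-one column to $\mathbf{X}$ as is standard for linear layers. Even so, a single linear layer followed by ReLU cannot in general produce a one-hot output for arbitrary distinct points. The general claim therefore needs an additional assumption, which I would make explicit.

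The assumption I would try first is that the distinct feature vectors, augmented with a constant $1$, are linearly independent. This holds, for instance, for one-hot categorical node features, which is the typical setting (e.g.\ atom types in Mutag). Under this assumption the pseudoinverse construction above applies verbatim. Alternatively, if $f$ is allowed to be an MLP, I would separate the finitely many distinct points $c_j$. First take a generic direction $u$ with distinct projections $u^\top c_j$. Then use ReLU combinations of the form $\sigma(t-a)-2\sigma(t-b)+\sigma(t-c)$, which build a narrow "hat" indicator of each projected value, giving $e_j$ on $c_j$ and $0$ on the others. Finally, I would note that the converse direction is not needed, since the statement only asserts sufficiency.
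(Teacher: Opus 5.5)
Your argument is correct under the independence assumption you state explicitly, and it reaches the conclusion by a more explicit route than the paper. The paper never constructs $\mathbf{W}$ for the raw features. It re-encodes $\mathbf{X}$ as a one-hot indicator matrix $\mathbf{X}'$ over the distinct feature values. The feature--group bijection then makes $\mathbf{X}'$ equal to $\mathbf{S}_\text{opt}$ up to a column permutation, and the paper takes $\mathbf{W}=\mathbf{I}$.

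Your right-inverse construction contains the paper's argument as the special case where $\mathbf{C}$ is a permutation of $\mathbf{I}_k$. The difference is that you work on $\mathbf{X}$ itself and name the hypothesis that makes a single layer enough: the distinct feature vectors must be linearly independent, or affinely independent if a bias column is added.

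You are also right that some such hypothesis is unavoidable. Suppose scalar features $1,2,3$ label three different groups. Every affine $\ell$ satisfies $\ell(2)=\tfrac12\bigl(\ell(1)+\ell(3)\bigr)$, so $\sigma(\ell(1))=\sigma(\ell(3))=0$ forces $\sigma(\ell(2))=0$. Hence no $\mathbf{W}$, with or without bias, produces the one-hot output. The paper's step ``represent the node features in a one-hot indicator matrix'' is therefore not something $f$ itself can do from arbitrary raw features. It silently assumes categorical features, which is the setting the paper adopts explicitly later, when node colours are one-hot encoded.

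The paper's route buys brevity within that categorical setting. Yours buys three things:
\begin{itemize}
\item a precise sufficient condition for the single-layer $f$;
\item an identification of where the statement fails as literally written;
\item a repair of the fully general case through the hat-function MLP, at the cost of enlarging the model class.
\end{itemize}
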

\begin{proof}
By assumption, there is a bijection between node features and groups.
Therefore, we can represent the node features in a one-hot indicator matrix $\mathbf{X}'$ that is, up to permutation, equal to the optimal assignment $\mathbf{S}_\text{opt}$.
Setting $\mathbf{W} = \textbf{I}$ produces the desired groups $\mathbf{S}_\text{opt}$ from $\mathbf{X}'$.
\end{proof}

\begin{wrapfigure}[16]{r}{0.5\textwidth}
    \centering
    \vspace*{-1\baselineskip}
    \begin{overpic}[width=\linewidth]{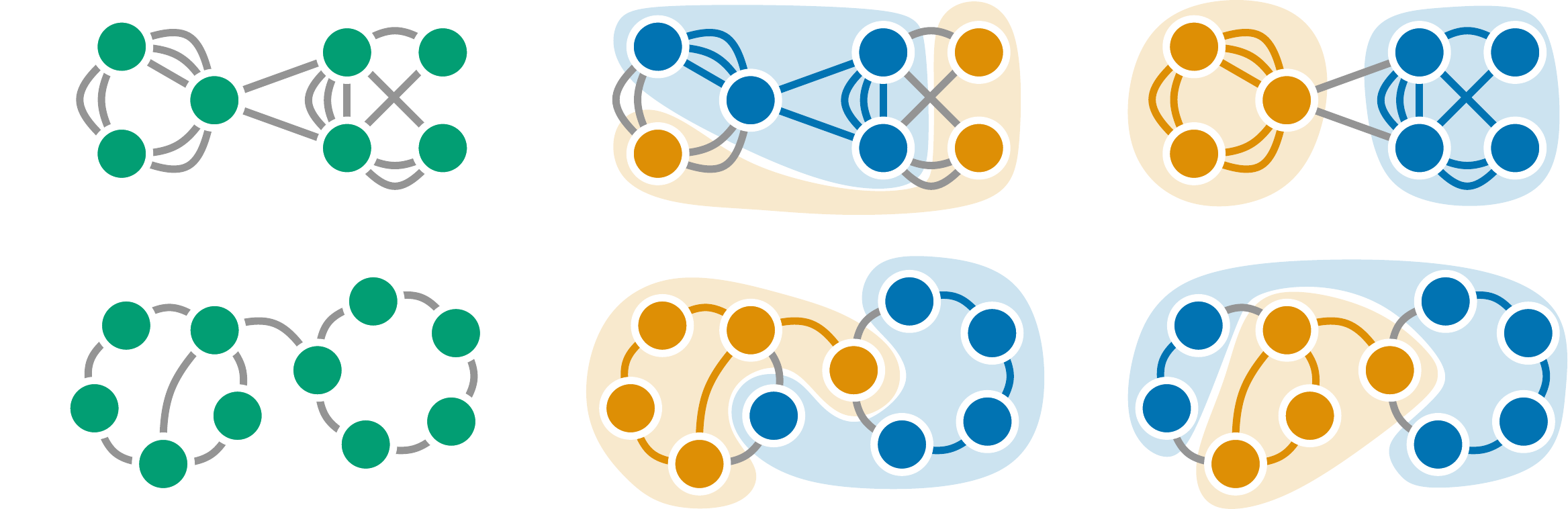}
        \put(0,30){\textbf{(a)}}
        \put(32.5,30){\textbf{(b)}}
        \put(67,30){\textbf{(c)}}
    \end{overpic}
    \caption{Clustering networks with a GCN and MinCut loss. The detected clusters depend on the initial node features and the number of GNN layers.
    \textbf{(a)} Raw node features and topological communities. We use multiedges to keep the examples simple.
    \textbf{(b)} Two layers: the GCN cannot detect the communities because the GCN produces identical representations for nodes in different communities.
    \textbf{(c)} Five layers: the GCN can detect the communities in the top graph, but not in the bottom graph.
    }
    \label{fig:clustering-mincut}
\end{wrapfigure}
Second, relaxing Proposition~\ref{theorem:align}, $f$ can generate the optimal assignment $\mathbf{S}_\text{opt}$ even when nodes in the same group have different features, as long as no two nodes in different groups have the same features.
This is the case for a GCN trained with MinCut loss as shown in \Cref{fig:exp1}b.
\begin{proposition}
\label{theorem:injective}
    Given features $\mathbf{X}$, graph $G$, and optimal assignment $\mathbf{S}_\text{opt}$, there is a $\mathbf{W}$ such that 
    $f\left(X\right) = \mathbf{S}_\text{opt}$
    if nodes with the same features are in the same group, $\forall v_i, v_j \in V\!\colon x_i = x_j \Rightarrow g_i = g_j$.
\end{proposition}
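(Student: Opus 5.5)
The plan mirrors the proof of Proposition~\ref{theorem:align}, with the bijection replaced by a surjection. Let $\mathcal{F} = \{x_i : v_i \in V\}$ be the set of distinct feature vectors, with $m = |\mathcal{F}|$. The hypothesis $x_i = x_j \Rightarrow g_i = g_j$ says exactly that the group membership factors through the features. Hence there is a well-defined map $\phi\colon \mathcal{F} \to \{1,\dots,k\}$ with $g_i = g_{\phi(x_i)}$ for every node $v_i$. The map $\phi$ need not be injective, since several feature values may share a group, but it is a function. This is the only place the assumption enters.

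As in the earlier proof, I would first re-encode the features as a one-hot indicator matrix $\mathbf{X}' \in \{0,1\}^{|V| \times m}$, where $\mathbf{X}'_{il} = 1$ iff $x_i$ is the $l$-th element of $\mathcal{F}$. I then define $\mathbf{W} \in \{0,1\}^{m \times k}$ by $\mathbf{W}_{lj} = 1$ iff $\phi$ maps the $l$-th feature value to group $j$. Each row of $\mathbf{W}$ contains exactly one $1$. Therefore $(\mathbf{X}'\mathbf{W})_{ij} = \mathbf{W}_{l(i), j}$, which equals $1$ iff $v_i \in g_j$. So $\mathbf{X}'\mathbf{W} = \mathbf{S}_\text{opt}$ exactly, not just up to permutation, because the columns of $\mathbf{W}$ are indexed directly by groups. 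Since all entries lie in $\{0,1\}$, applying ReLU leaves the matrix unchanged, and $f(\mathbf{X}') = \sigma(\mathbf{X}'\mathbf{W}) = \mathbf{S}_\text{opt}$.

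The main obstacle is the passage from the raw $\mathbf{X}$ to the one-hot $\mathbf{X}'$. The earlier proposition also relies on this step implicitly, but the statement is about $f(\mathbf{X})$ with a single linear layer. If one insists on raw features, I would argue as follows. The finitely many distinct rows of $\mathbf{X}$ can be mapped to arbitrary targets only if they are affinely separable, for example when they are linearly independent. In general this is not guaranteed: three collinear distinct features with the middle one in a different group cannot be handled by a single $\sigma(\mathbf{X}\mathbf{W})$. So I would follow the paper's convention and treat the one-hot re-encoding as a preprocessing step, or as the output of a preceding injective embedding layer, and note that the remainder of the argument is then exact. The point worth stressing is that, unlike in Proposition~\ref{theorem:align}, $\mathbf{W}$ is no longer the identity. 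It is a many-to-one merging matrix, which is precisely what relaxing the reverse implication permits.
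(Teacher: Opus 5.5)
Your proof is correct and is essentially the paper's argument. The paper also re-encodes the features one-hot as $\mathbf{X}'$, starts from $\mathbf{W}=\mathbf{I}$, and overwrites the weights of every feature value in a group with those of one representative, which is your many-to-one merging matrix; your direct $m\times k$ construction via $\phi$ makes this exact (output indexed by groups rather than up to permutation), and your caveat that a single $\sigma(\mathbf{X}\mathbf{W})$ on raw features need not suffice is a fair point the paper leaves implicit.
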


\begin{proof}
    Consider two nodes $v_i, v_j$ in the same group $g$ but with different features, $x_i \neq x_j$.
    As before, we represent the node features in a one-hot indicator matrix $\mathbf{X}'$, and set $\mathbf{W} = \mathbf{I}$ to assign nodes with identical features to the same group.
    We further select a column $\mathbf{W}_{\cdot,i}$ corresponding to any one node $v_i \in g$ for each group $g$ and change the other corresponding columns $v_j \in g, j \neq i$ of the given group to $\mathbf{W}_{\cdot,j} = \mathbf{W}_{\cdot,i}$. Consequently, $f$ maps all nodes in $g$ to the same group vector defined by $v_i$.
\end{proof}

Third, when nodes in different groups share the same features, the deterministic function $f$ assigns them to the same group; $f$ cannot generate $\mathbf{S}_\text{opt}$.
\Cref{fig:exp1}b illustrates this for the MinCut objective, where optimising a GCN with the MinCut loss does not lead to the optimal assignment.
\begin{proposition}
\label{theorem:deterministic}
    Given features $\mathbf{X}$, graph $G$, and an optimal assignment $\mathbf{S}_\text{opt}$, if there are nodes $v_i \in g_i, v_j \in g_j$ that belong to different groups $g_i \neq g_j$, but have the same features, $x_i = x_j$, then there is no $\mathbf{W}$ such that $f\left(\mathbf{X}\right) = \mathbf{S}_\text{opt}$.
\end{proposition}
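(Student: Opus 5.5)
The plan is a short argument by contradiction that rests only on $f$ being a deterministic, row-wise function of the node representations. With $\mathbf{H} = \mathbf{X}$, the $i$-th row of $f(\mathbf{X}) = \sigma(\mathbf{X}\mathbf{W})$ is $\sigma(x_i^\top \mathbf{W})$, because $\sigma$ (ReLU) acts entrywise. So row $i$ of the output depends on node $v_i$ only through its feature vector $x_i$, for every choice of $\mathbf{W}$. I will state this observation first, since it is the only property of $f$ the proof needs.

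Next I take the two nodes $v_i \in g_i$ and $v_j \in g_j$ from the hypothesis, with $g_i \neq g_j$ and $x_i = x_j$. For any $\mathbf{W}$ we get $x_i^\top \mathbf{W} = x_j^\top \mathbf{W}$, hence $\sigma(x_i^\top\mathbf{W}) = \sigma(x_j^\top\mathbf{W})$. Rows $i$ and $j$ of $f(\mathbf{X})$ therefore coincide for every $\mathbf{W}$.

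It remains to compare this with $\mathbf{S}_\text{opt}$. Since we work with non-overlapping partitions, $\mathbf{S}_\text{opt}$ is binary with exactly one nonzero entry per row, namely $(\mathbf{S}_\text{opt})_{i,g_i} = 1$. Because $g_i \neq g_j$, rows $i$ and $j$ of $\mathbf{S}_\text{opt}$ are distinct one-hot vectors. Hence $f(\mathbf{X})$ and $\mathbf{S}_\text{opt}$ differ in at least one of these two rows, whatever $\mathbf{W}$ is, and no $\mathbf{W}$ achieves $f(\mathbf{X}) = \mathbf{S}_\text{opt}$.

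The main subtlety is how equality is interpreted. The earlier propositions allow a one-hot recoding $\mathbf{X}'$ of the features and equality "up to permutation" of group labels, so I need to check that neither weakens the argument. Column permutations preserve both facts used: rows $i$ and $j$ of $f(\mathbf{X})$ are equal, and rows $i$ and $j$ of $\mathbf{S}_\text{opt}$ are distinct. Any injective recoding $x \mapsto x'$ sends equal features to equal codes, so $x_i' = x_j'$ and the same argument goes through. I will add one sentence noting that the conclusion holds for any deterministic row-wise map of node representations, which covers both cases.
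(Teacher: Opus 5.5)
Your proof is correct and follows the same route as the paper. Since $f$ acts row-wise and deterministically on the node representations, $x_i = x_j$ forces rows $i$ and $j$ of $f(\mathbf{X})$ to coincide for every $\mathbf{W}$, whereas the corresponding rows of $\mathbf{S}_\text{opt}$ are distinct one-hot vectors; you spell out (through the entrywise action of $\sigma$ and the robustness to column permutations and injective recodings) what the paper compresses into ``$f$ categorises nodes based on their representations alone''.
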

\begin{proof}
    Consider two nodes $v_i, v_j$ with the same features, $x_i = x_j$, but in different groups, $g_i \neq g_j$.
    Because $f$ categorises nodes based on their representations alone, we have $f\left(x_i\right) = f\left(x_j\right)$, regardless of $\mathbf{W}$, and it is impossible to assign $v_i$ and $v_j$ to different groups as required by $\mathbf{S}_\text{opt}$.
\end{proof}

\begin{proposition}
    It follows from Propositions~\ref{theorem:deterministic} and \ref{theorem:injective} that $f$ can construct the optimal assignment if node representations are unique, that is, when no two nodes have the same representation.
\end{proposition}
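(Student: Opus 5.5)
The plan is to reduce the statement directly to Proposition~\ref{theorem:injective}, using Proposition~\ref{theorem:deterministic} only to explain why uniqueness is the natural condition. Throughout, take the representations $\mathbf{H}$ as the input to $f$, so that $f(\mathbf{H}) = \sigma(\mathbf{HW})$. Propositions~\ref{theorem:align}--\ref{theorem:deterministic} were stated with $\mathbf{H} = \mathbf{X}$, but their arguments use only the rows of the input matrix. They therefore apply verbatim with $h_i$ in place of $x_i$.

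\textbf{Step 1: uniqueness gives the hypothesis of Proposition~\ref{theorem:injective}.} Suppose no two nodes share a representation, that is, $h_i = h_j \Rightarrow i = j$ for all $v_i, v_j \in V$. Take any optimal assignment $\mathbf{S}_\text{opt}$. The premise $h_i = h_j$ of the required implication $h_i = h_j \Rightarrow g_i = g_j$ then holds only when $i = j$. In that case $g_i = g_j$ trivially, so the implication holds vacuously for every pair of distinct nodes.

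\textbf{Step 2: apply Proposition~\ref{theorem:injective}.} This yields a $\mathbf{W}$ with $f(\mathbf{H}) = \mathbf{S}_\text{opt}$. Concretely, encode the $|V|$ distinct representations as a one-hot matrix $\mathbf{H}'$, which is a permutation of $\mathbf{I}_{|V|}$. Then choose $\mathbf{W} \in \{0,1\}^{|V| \times k}$ whose row for node $v_i$ is the indicator of $g_i$. This gives $\sigma(\mathbf{H}'\mathbf{W}) = \mathbf{S}_\text{opt}$, because ReLU fixes nonnegative entries. Since $\mathbf{S}_\text{opt}$ was arbitrary, this shows more: every partition, optimal or not, is realisable.

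\textbf{Step 3: interpret the role of Proposition~\ref{theorem:deterministic}.} Proposition~\ref{theorem:deterministic} says that a shared representation across two groups of $\mathbf{S}_\text{opt}$ rules out realising it. Uniqueness is the condition under which this obstruction cannot occur for any candidate partition. Hence uniqueness is exactly what guarantees realisability of $\mathbf{S}_\text{opt}$ without any knowledge of $\mathbf{S}_\text{opt}$ in advance. It is not necessary for a particular $\mathbf{S}_\text{opt}$, but it is the weakest condition that works for all of them. I would state this as a remark rather than prove it as an equivalence.

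\textbf{Main obstacle.} The only delicate point is the passage from raw, possibly real-valued representations to the one-hot encoding $\mathbf{H}'$. That passage is not a linear map of $\mathbf{H}$ followed by ReLU in general. I would handle it as the earlier propositions implicitly do, treating the one-hot recoding as part of the representation. Alternatively, one could note that for finitely many distinct vectors $h_i$ a generic projection separates them, and then use a thresholded or extra-layer construction. I would flag this modelling assumption explicitly rather than try to prove a single-layer linear separation.
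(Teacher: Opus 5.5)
Your proposal is correct and matches the paper's implicit argument. Uniqueness makes the hypothesis $h_i = h_j \Rightarrow g_i = g_j$ of Proposition~\ref{theorem:injective} hold vacuously, which yields the required $\mathbf{W}$, while Proposition~\ref{theorem:deterministic} only shows that shared representations are the sole obstruction. Your explicit $\mathbf{W}$ with group-indicator rows is a tidier version of the paper's column-copying construction, and the one-hot recoding you flag is the same unstated modelling assumption that the paper's proofs of Propositions~\ref{theorem:align} and \ref{theorem:injective} rely on.
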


\Cref{theorem:align,theorem:deterministic,theorem:injective} provide simple criteria that explain when $f$ can produce the optimal assignment $\mathbf{S}_\text{opt}$, or when this is impossible.
So far, $f$ operated on the raw node features $\mathbf{X}$; in practice, pooling uses GNNs to jointly learn node representations $\mathbf{H}$ from node features $\mathbf{X}$ and topology $\mathbf{A}$ via message passing.
Regardless, the same limitations apply when assignments are based on $\mathbf{H}$.
GNNs may learn the same representation $h$ for nodes that belong to different clusters.
Contrary to intuition, we show in \Cref{fig:clustering-mincut} that increasing the number of message passing iterations---so the GNN has sufficient ``reach'' to observe the entire graph---does not solve the problem.
In the following, \textbf{we consider how node features can be aligned with the topology so they serve as an informative proxy}.

\section{Methodology -- Colouring-based Feature Evaluation}
\label{sec:method}
We motivated why community-based pooling methods rely on the alignment between node features and graph topology.
However, as we have seen, alignment cannot be assumed for empirical data.
In this section, we develop a measure to assess the quality of a dataset's features for graph pooling.
We interpret node features as colours, and relate the quality of a colouring to the minimum possible number of ``mis-assignments'' for mappings from colours to communities.
We also characterise the transferability of colours from seen to unseen graphs, which is crucial for graph pooling in the inductive setting.
Our notion of transferability leads us to requirements for encodings that can improve pooling performance. 

Using colours as proxies for node features, we relate the quality of a colouring to whether it enables a pooling operator $\operatorname{SEL}_\Theta$ to make correct predictions, that is, predictions $\hat{y}$ that minimise the discrepancy with ground truth $y$.
A node colouring $\zeta \colon V \mapsto C$ is a mapping that assigns a colour $\zeta \left(v_i\right) \in C$ to each node $v_i \in V$.
We represent node colours as one-hot encodings, set $f = \textsc{SEL}_\Theta$, and recap Propositions~\ref{theorem:align} to \ref{theorem:deterministic} to characterise the relationship between a colouring $\zeta$ and partition~$P$:
\begin{description}
    \item[By Proposition~\ref{theorem:align}:] All nodes $V_c \subseteq V$ with the same colour $c \in C$ \underline{can} be mapped to the same group $g \in P$ because $\textsc{SEL}_\Theta$
    learns a mapping.
    \item[By Proposition~\ref{theorem:injective}:] Nodes with different colours $V_{c_1}, V_{c_2} \subseteq V, c_1 \neq c_2$ \underline{can} be mapped to the same group $g \in P$ if $\textsc{SEL}_\Theta$
    is not injective, which holds for most neural network implementations of $\textsc{SEL}_\Theta$.
    \item[By Proposition~\ref{theorem:deterministic}:] All nodes $V_c \subseteq V$ with the same colour $c \in C$ \underline{must} be mapped to the same group $g \in P$ because $\textsc{SEL}_\Theta$
    is deterministic. 
\end{description}

\subsection{Colouring Validity}
\begin{definition}
    Given a graph $G = \left(V,E\right)$ and a colouring $\zeta \colon V \mapsto C$, a colour $c \in C$ is \textbf{valid} if all nodes with that colour belong to the same group.
    Furthermore, a colouring $\zeta$ is valid if all colours $c \in C$ are valid, that is, $\forall c \in C \colon \exists g \in P\colon g \supseteq \left\{ v \in V \,|\, \zeta \left(v\right) = c \right\}$.
\end{definition}
\Cref{fig:valid-match} shows examples of valid and invalid colourings.
We measure validity using the indicator 
\begin{equation}
    \mathbf{1}_\zeta \left(c, g\right) = \left| \left\{\zeta \left(v\right) \,|\, v \in g \right\} \cap \left\{c\right\} \right|
\end{equation}
which, given colouring $\zeta$, is $1$ if group $g$ contains a node with colour $c$, and $0$ otherwise.
The number of groups in partition $P$ that contain nodes with colour $c$ is
\begin{equation}
    \gamma \left(c, P \,|\, \zeta\right) = \sum_{g \in P} \mathbf{1}_\zeta \left(c, g\right),
    \label{eqn:validity}
\end{equation}
and the fraction of valid colours, or the relative validity of colouring $\zeta$, is
\begin{align}
    \Gamma \left(\zeta \,|\, P\right) = \frac{ \left| \left\{ c \in \mathcal{C} \,|\, \gamma \left(c, P \,|\, \zeta\right) = 1 \right\} \right| }{ \left| C \right| }
\end{align}
We designed $\Gamma\left(\zeta \,|\, P\right)$ as a measure of how well colouring $\zeta$ reflects partition $P$, with the property:
\begin{theorem}
    Given a graph $G$ with a partition $P$ and a colouring $\zeta$, $\Gamma\left( \zeta \,|\, P\right)$ is the probability that a randomly chosen colour $c \in \mathcal{C}$ is valid, that is, that $c$ appears in exactly one group.
\end{theorem}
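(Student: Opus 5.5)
The plan is to read the statement as a fact about the uniform distribution on the colour set. Concretely, I will draw a colour $c$ uniformly from $\mathcal{C}$, so each colour has probability $1/|\mathcal{C}|$. Then the probability of any event $A \subseteq \mathcal{C}$ is $|A|/|\mathcal{C}|$. I take $\mathcal{C}$ to be the set of colours actually used by $\zeta$, i.e.\ $\mathcal{C} = \zeta(V)$, which I identify with $C$ as the paper does in the denominator of $\Gamma$. This matters because a colour assigned to no node would give $\gamma = 0$ and would be neither valid nor in exactly one group.

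The first step is to show that, for a colour $c$ that is used, ``$c$ is valid'' is equivalent to ``$\gamma(c, P \,|\, \zeta) = 1$''. By \eqref{eqn:validity}, $\gamma(c,P\,|\,\zeta)$ counts the groups $g \in P$ with $\mathbf{1}_\zeta(c,g)=1$, that is, the groups containing at least one node of colour $c$.

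I will use that $P$ is a partition (the non-overlapping setting fixed in the preliminaries), so every node of colour $c$ lies in exactly one group.
\begin{itemize}
\item If $c$ is valid, some $g$ contains all nodes of colour $c$. By disjointness, no other group contains any of them. Since $c$ is used, $g$ does contain such a node, so $\gamma = 1$.
\item Conversely, if $\gamma = 1$, exactly one group $g$ contains nodes of colour $c$. Every node of colour $c$ lies in some group, hence in $g$. So $g \supseteq \{v \mid \zeta(v)=c\}$, and $c$ is valid.
\end{itemize}
This also confirms the paper's phrasing ``$c$ appears in exactly one group''.

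With the equivalence in hand, the event ``the random colour is valid'' is the set $\{c \in \mathcal{C} \mid \gamma(c,P\,|\,\zeta)=1\}$. Its probability under the uniform law is that set's cardinality divided by $|\mathcal{C}|$, which is exactly $\Gamma(\zeta\,|\,P)$ by definition. This finishes the argument.

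The only real obstacle is the bookkeeping in the equivalence step. It needs two assumptions: the groups are disjoint and cover $V$, and every colour in $\mathcal{C}$ is realised by some node. Without these, $\gamma=1$ and validity could come apart, for example through overlapping groups or through unused colours with $\gamma=0$. I will state both explicitly at the start of the proof.
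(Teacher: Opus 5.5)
Your argument is correct and matches the paper's implicit justification. The paper gives no separate proof and treats the property as immediate from defining $\Gamma$ as the fraction of colours with $\gamma(c,P\,|\,\zeta)=1$; your explicit equivalence between validity and $\gamma=1$, which uses that $P$ is a partition and that $\mathcal{C}=\zeta(V)$, is the bookkeeping needed to make that rigorous. One small correction: under the paper's formal definition ($\exists g \in P\colon g \supseteq \{v \mid \zeta(v)=c\}$), an unused colour is vacuously valid, not ``neither valid''. That is exactly why it would break the equivalence (valid but $\gamma=0$), as your last paragraph correctly says, and why the assumption $\mathcal{C}=\zeta(V)$ is needed.
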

Clearly, when all nodes have unique colours, we have $\Gamma\left( \zeta \,|\, P\right) = 1$ because $\gamma\left( c, P\right) = 1 ~ \forall c \in C$.
To analyse how colouring validity depends on the number of colours, we use molecules from the \nobreak{Mutag} dataset as an example.
We assign different numbers of node colours uniformly at random, use $r \in \left\{0, 1, 2, 3, \infty\right\}$ Weisfeiler-Leman colour-refinement (CR) iterations, and compare to optimal spectral partitions.
In \Cref{fig:qrqr}a, we show the average $\Gamma\left( \zeta \,|\, P\right)$ as a function of the relative number of colours, $\nicefrac{k}{\left|V\right|}$.
We find that colouring validity increases with the number of available colours, which experimentally supports our argument that the most colourful colouring enables the optimal partition.

\begin{figure}[t]
    \vspace*{-.5\baselineskip}
    \centering
    \begin{overpic}[width=.75\linewidth]{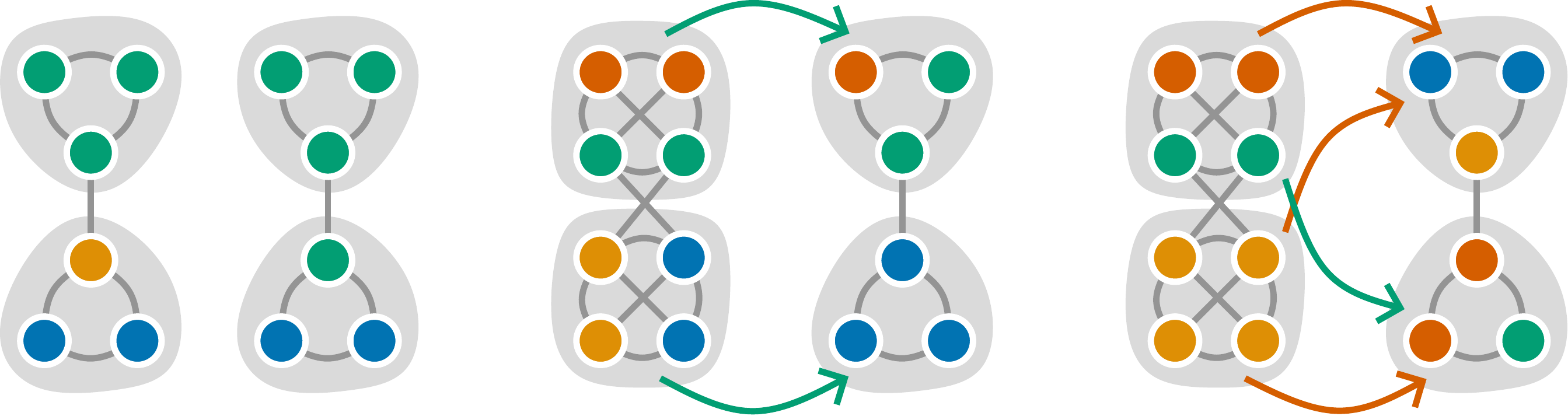}
        \put(-3,25){\textbf{(a)}}
        \put(3,-2){\small{valid}}
        \put(16.5,-2){\small{invalid}}
        
        \put(30,25){\textbf{(b)}}
        \put(40.5,27.5){\tiny$\left\{
            \tikz \filldraw[draw=palered,fill=palered] (0,0) circle (2pt);,
            \tikz \filldraw[draw=fadedgreen,fill=fadedgreen] (0,0) circle (2pt);
        \right\} \supseteq \left\{
            \tikz \filldraw[draw=palered,fill=palered] (0,0) circle (2pt);,
            \tikz \filldraw[draw=fadedgreen,fill=fadedgreen] (0,0) circle (2pt);
        \right\}$}
        \put(40.5,-2.5){\tiny$\left\{
            \tikz \filldraw[draw=windowsblue,fill=windowsblue] (0,0) circle (2pt);,
            \tikz \filldraw[draw=dustyorange,fill=dustyorange] (0,0) circle (2pt);
        \right\} \supseteq \left\{
            \tikz \filldraw[draw=windowsblue,fill=windowsblue] (0,0) circle (2pt);
        \right\}$}

        \put(67,25){\textbf{(c)}}
        \put(78,27.5){\tiny$\left\{
            \tikz \filldraw[draw=palered,fill=palered] (0,0) circle (2pt);,
            \tikz \filldraw[draw=fadedgreen,fill=fadedgreen] (0,0) circle (2pt);
        \right\} \not\supseteq \left\{
            \tikz \filldraw[draw=windowsblue,fill=windowsblue] (0,0) circle (2pt);,
            \tikz \filldraw[draw=dustyorange,fill=dustyorange] (0,0) circle (2pt);
        \right\}$}
        \put(80,-2.5){\tiny$\left\{
            \tikz \filldraw[draw=dustyorange,fill=dustyorange] (0,0) circle (2pt);
        \right\} \not\supseteq \left\{
            \tikz \filldraw[draw=fadedgreen,fill=fadedgreen] (0,0) circle (2pt);,
            \tikz \filldraw[draw=palered,fill=palered] (0,0) circle (2pt);
        \right\}$}
        \put(86,9.5){\tiny$\supseteq$}
        \put(86,16){\tiny$\not\supseteq$}
    \end{overpic}
    \vspace*{1\baselineskip}
    \caption{Colouring validity and transferability.
    \textbf{(a)} Colouring examples: a colouring is valid if every colour appears in at most one group.
    \textbf{(b)} Matching colourings: groups can be transferred from the source/seen partition (left) to the target/unseen partition (right) because, for every group in the target partition, there is a group in the source partition whose colours are a superset of the target group's colours.
    \textbf{(c)} Non-matching colourings: groups cannot be transferred because there is a group in the target partition (right) whose colours are not a subset of any group in the source partition (left).
    }
    \label{fig:valid-match}
\end{figure}

\subsection{Colouring Transferability}
In the inductive setting of graph pooling, $\textsc{SEL}_\Theta$ is trained and evaluated on disjoint sets of graphs, meaning $\textsc{SEL}_\Theta$ must be transferable to unseen graphs.
We introduce the notion of \textit{matching} colourings for a seen $G_s$ and unseen graph $G_u$, measuring if a similar group $\hat{g} \in G_s$ was seen for $g \in G_u$.
\begin{definition}
    Let $G_u$ and $G_s$ be graphs with partitions $P_u, P_s$, and colourings $\zeta_u, \zeta_s$, respectively.
    A group $g_u \in P_u$ \textbf{matches} a group $g_s \in P_s$ if its colours are a subset of $g_s$' colours.
    Furthermore, colouring $\zeta_u$ matches $\zeta_s$ if there is a map $\mu \colon P_u \mapsto P_s$ such that all groups match the mapped group.
\end{definition}

\Cref{fig:valid-match} shows examples of matching and non-matching colourings.
To define a measure of how well colourings match, we first introduce the indicator 
\begin{align}
\mathbf{1}_{\zeta_s, \zeta_u}\left(g_s, g_u\right) = \begin{cases}
    1 \quad \text{ if } \{\zeta_u(v)\,|\, v \in g_u\} \subseteq \{\zeta_s(v)\,|\, v \in g_s\} \\
    0 \quad \text{ otherwise }
\end{cases}
\end{align}
that shows whether an unseen group $g_u$ matches a seen group $g_s$ given the colourings $\zeta_u$ and $\zeta_s$.
We count how many groups in $P_s$ match the unseen group $g_u$,
\begin{equation}
    \lambda(P_s, g_u \,|\, \zeta_s, \zeta_u) = \sum_{g_s \in P_s} \mathbf{1}_{\zeta_s, \zeta_u}(g_s, g_u),
\end{equation}
and define $\Lambda\left( \zeta_s, \zeta_u \,|\, P_s, P_u \right)$, the indicator of matching colourings between $P_s$ and $P_u$, as:
\begin{align}
    \Lambda\left( \zeta_s, \zeta_u \,|\, P_s, P_u\right) = \begin{cases}
        1 \text{ if } \lambda\left( P_s, g_u \,|\, \zeta_s, \zeta_u \right) \geq 1 \,\forall\, g_u \in P_u
        \\
        0 \text{ otherwise}
    \end{cases}
    \label{eqn:transferability}
\end{align}
The measure $\Lambda\left( \zeta_s, \zeta_u \,|\, P_s, P_u \right)$ is designed to have the following property:
\begin{theorem}
    Given a set of unseen graphs $G_u \in \mathcal{G}_u$ and a $G_s$, $\sum_{G_u \in \mathcal{G}_u} \Lambda\left(\zeta_s, \zeta_u \,|\, P_s, P_u\right) / \left| \mathcal{G}_u \right|$ is the probability that the colouring $\zeta_u$ of a randomly chosen graph $G_u \in \mathcal{G}_u$ matches the seen $\zeta_s$.
\end{theorem}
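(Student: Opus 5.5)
The plan is to reduce the statement to the definition of probability under the uniform distribution on the finite set $\mathcal{G}_u$, combined with a pointwise check that $\Lambda$ is exactly the indicator of the event ``$\zeta_u$ matches $\zeta_s$''. The argument has two steps, and only the first involves any content.

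\textbf{Step 1: $\Lambda$ is the indicator of matching.} Fix $G_s$ with partition $P_s$ and colouring $\zeta_s$, and fix one unseen $G_u$. For a pair of groups, $\mathbf{1}_{\zeta_s,\zeta_u}(g_s,g_u)=1$ holds precisely when $\{\zeta_u(v)\mid v\in g_u\}\subseteq\{\zeta_s(v)\mid v\in g_s\}$. By the definition of group matching, this is exactly the statement that $g_u$ matches $g_s$. Hence $\lambda(P_s,g_u\mid\zeta_s,\zeta_u)$ counts the seen groups that $g_u$ matches, and $\lambda\ge 1$ if and only if some $g_s$ is matched by $g_u$. The two directions then go as follows.
\begin{itemize}
\item \emph{$\Lambda=1$ implies matching.} If $\Lambda=1$, every $g_u\in P_u$ has at least one matching seen group. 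Choosing one such group for each $g_u$ defines a map $\mu\colon P_u\to P_s$ under which every $g_u$ matches $\mu(g_u)$. So $\zeta_u$ matches $\zeta_s$. The choice is finite, since $P_u$ is finite, so no choice principle is needed.
\item \emph{Matching implies $\Lambda=1$.} If $\zeta_u$ matches $\zeta_s$ via some $\mu$, then each $g_u$ matches $\mu(g_u)$. Hence $\lambda\ge 1$ for all $g_u$, and $\Lambda=1$.
\end{itemize}
The only point needing care is that the definition places no injectivity requirement on $\mu$. The existence of a map is therefore equivalent to the pointwise condition ``every unseen group matches some seen group''. This equivalence is the crux; once it is settled, the rest is bookkeeping.

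\textbf{Step 2: average the indicator.} Draw $G_u$ uniformly at random from $\mathcal{G}_u$, each graph with probability $1/|\mathcal{G}_u|$. Since $\mathcal{G}_u$ is a finite set of unseen graphs, this uniform measure is well defined. The probability of the event $M$ that the drawn graph's colouring matches $\zeta_s$ is then
\begin{equation*}
\Pr[M] \;=\; \mathbb{E}[\mathbf{1}_M] \;=\; \frac{1}{|\mathcal{G}_u|}\sum_{G_u\in\mathcal{G}_u}\mathbf{1}_M(G_u).
\end{equation*}
By Step 1, $\mathbf{1}_M(G_u)=\Lambda(\zeta_s,\zeta_u\mid P_s,P_u)$ for every $G_u$. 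Substituting this gives
\begin{equation*}
\Pr[M] \;=\; \frac{1}{|\mathcal{G}_u|}\sum_{G_u\in\mathcal{G}_u}\Lambda\left(\zeta_s,\zeta_u\,|\,P_s,P_u\right),
\end{equation*}
which is the claimed expression.
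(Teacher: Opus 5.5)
Your proof is correct. The paper gives no separate proof and simply asserts that $\Lambda$ is designed to have this property; your argument, that $\Lambda$ is exactly the indicator that a (not necessarily injective) map $\mu$ exists and that the uniform average of an indicator over $\mathcal{G}_u$ is the probability of the event, is the natural unpacking of that claim, relying tacitly on the uniform reading of ``randomly chosen'' and on $\mathcal{G}_u \neq \emptyset$.
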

Colourings $\zeta_u, \zeta_s$ that assign the same colour to every node are trivially perfectly transferable for any $P_s, P_u$ with $\Lambda\left(\zeta_s, \zeta_u \,|\, P_s, P_u\right) = 1$.
In \Cref{fig:qrqr}a-b, we show that transferability generally suffers with more colours, whereas colouring validity increases with more colours.
Our definition of transferability in \Cref{eqn:transferability} is binary---a colouring is either transferable or it is not---which may be a limitation in practice; we consider continuous variants of transferability in \Cref{sec:variants}.

\subsection{Combined Colouring Quality}
We define the combined colouring quality $Q$ as the limiting factor of $\Gamma\left( \zeta \,|\, P\right)$ and $\Lambda\left( \zeta_s, \zeta_u \,|\, P_s, P_u\right)$,
\begin{align}
    Q \left( \zeta_s, \zeta_u \,|\, P_s, P_u\right) = \min \left( \Gamma\left( \zeta_s \cup \zeta_u \,|\, P_s \cup P_u \right), \Lambda\left( \zeta_s, \zeta_u \,|\, P_s, P_u \right) \right).
    \label{eqn:quality}
\end{align}
$Q$ can be computed in $\mathcal{O}(|V_u| |P_s| + |V_s|\log |C_s|)$; see \Cref{sec:variants} for details.

\begin{figure*}[t]
  \centering
  \begin{overpic}[width=.23\linewidth]{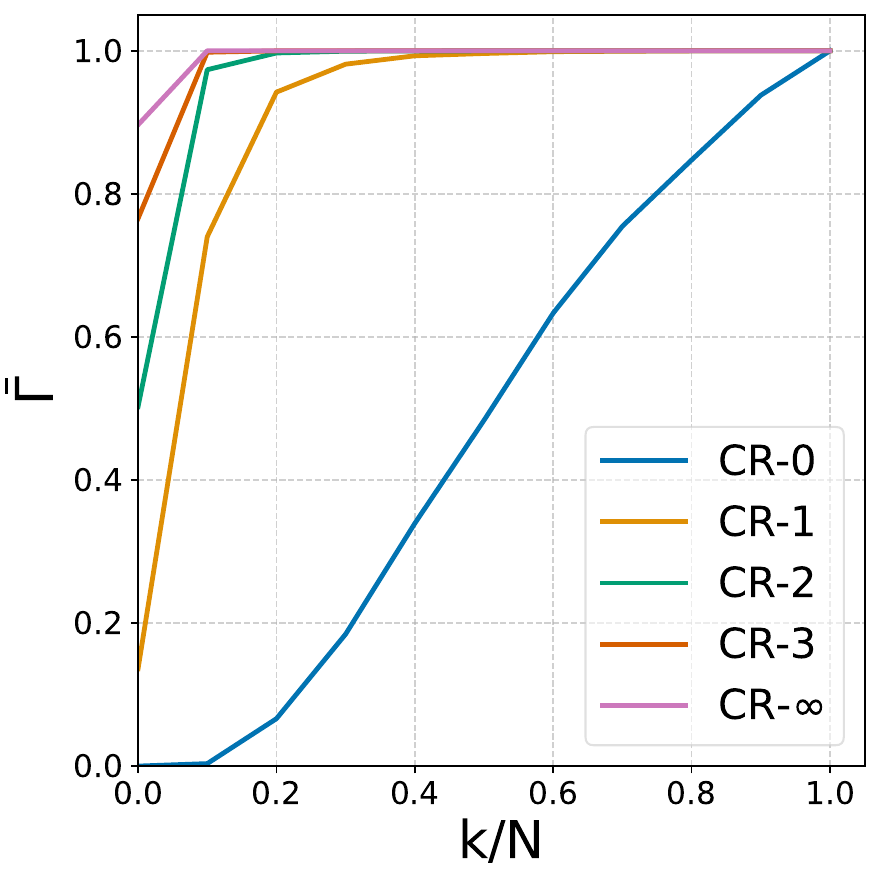}
    \put(-5,95){\textbf{(a)}}
  \end{overpic}
  \hfill
  \begin{overpic}[width=.23\linewidth]{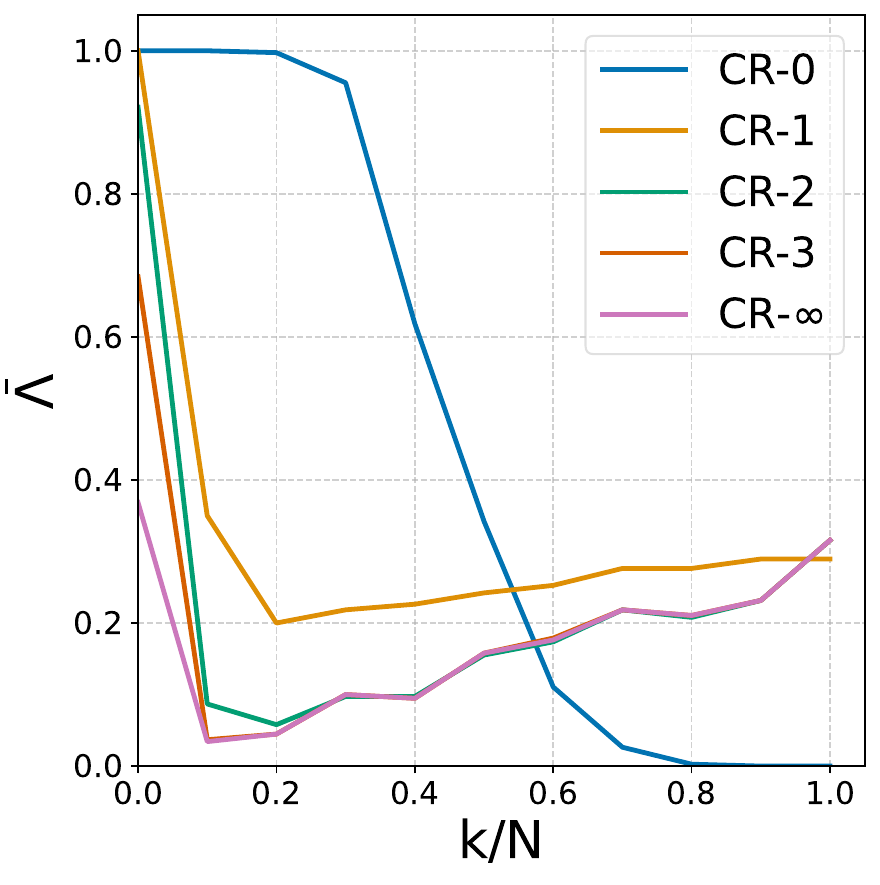}
    \put(-5,95){\textbf{(b)}}
  \end{overpic}
  \hfill
  \begin{overpic}[width=.23\linewidth]{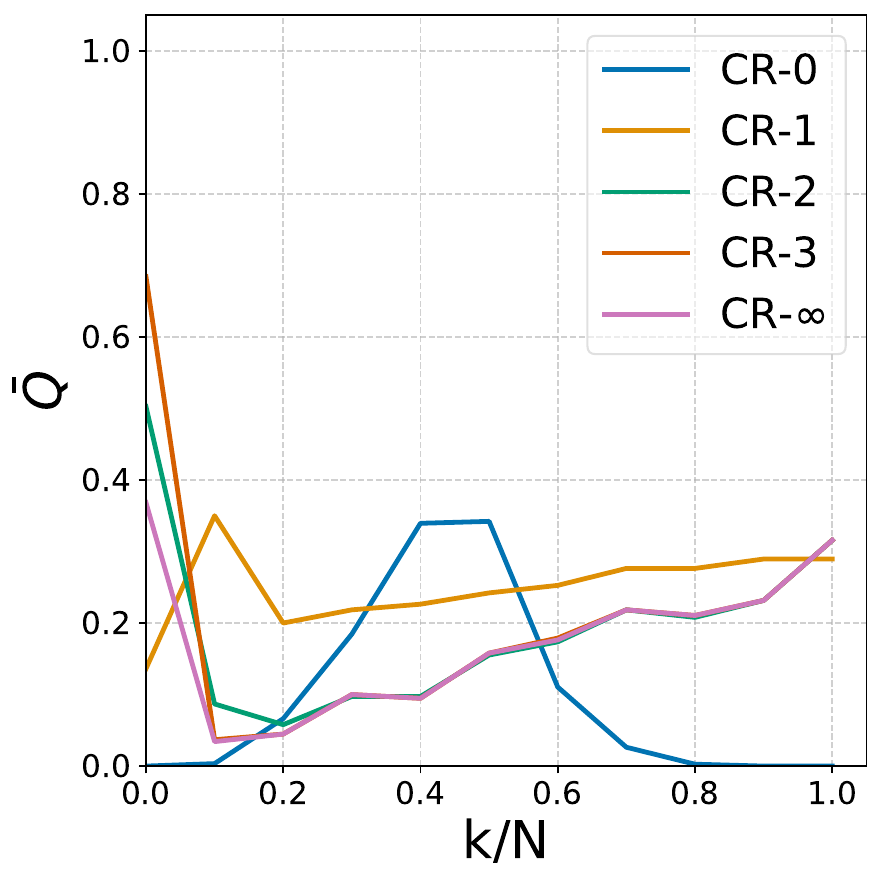}
    \put(-4,95){\textbf{(c)}}
  \end{overpic}
  \hfill
  \begin{overpic}[width=.23\linewidth]{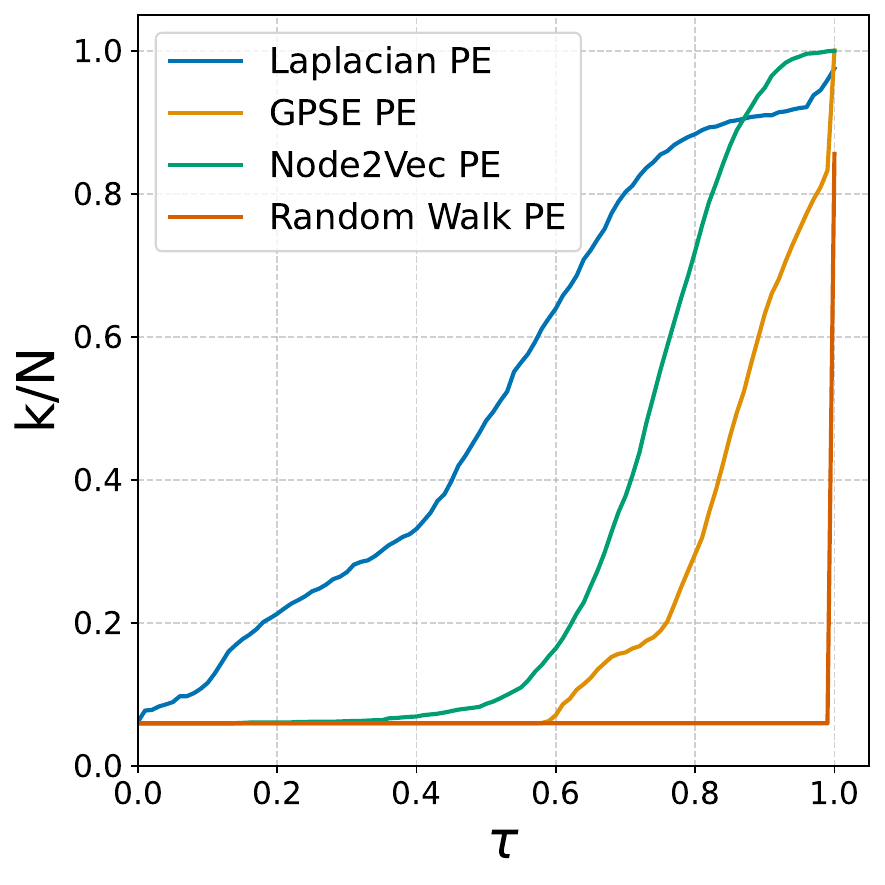}
    \put(-6,95){\textbf{(d)}}
  \end{overpic}
  \vspace*{-.5\baselineskip}
  \caption{\textsc{Mutag} dataset: \textbf{(a)} average colouring validity, \textbf{(b)} transferability, and \textbf{(c)} combined quality as a function of the relative number of colours $k/N$ with respect to spectral partitions of the graphs' adjacency matrices. We assign $k$ colours at random to $N$ nodes and apply $\{1,2,3,4,\infty\}$ colour-refinement steps. \textbf{(d)} Relative number of colours for different PEs as a function of threshold $\tau$.
  }
  \vspace*{-1.\baselineskip}
\label{fig:qrqr}
\end{figure*}

\subsection{Colour Refinement and Feature Resolution in GNNs}
\begin{wrapfigure}[16]{r}{0.40\textwidth}
    \vspace*{-.25\baselineskip}
    \centering
    \begin{overpic}[width=.4\linewidth]{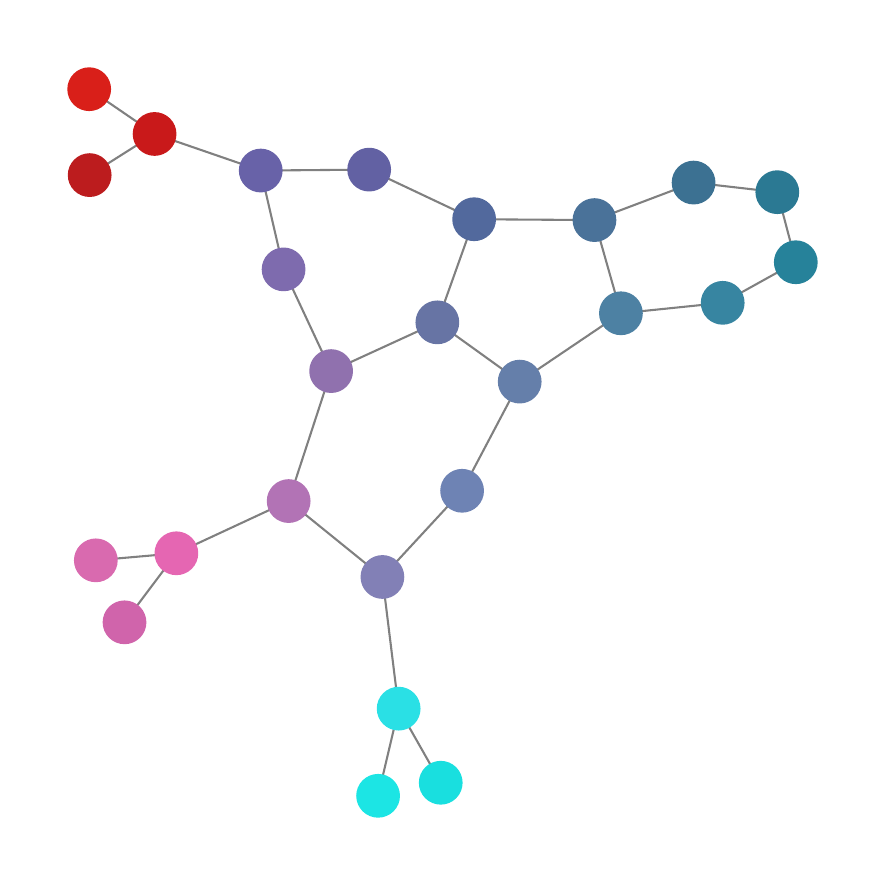}
        \put(-10,100){\textbf{(a)} Laplacian}
    \end{overpic}
    \quad
    \begin{overpic}[width=.4\linewidth]{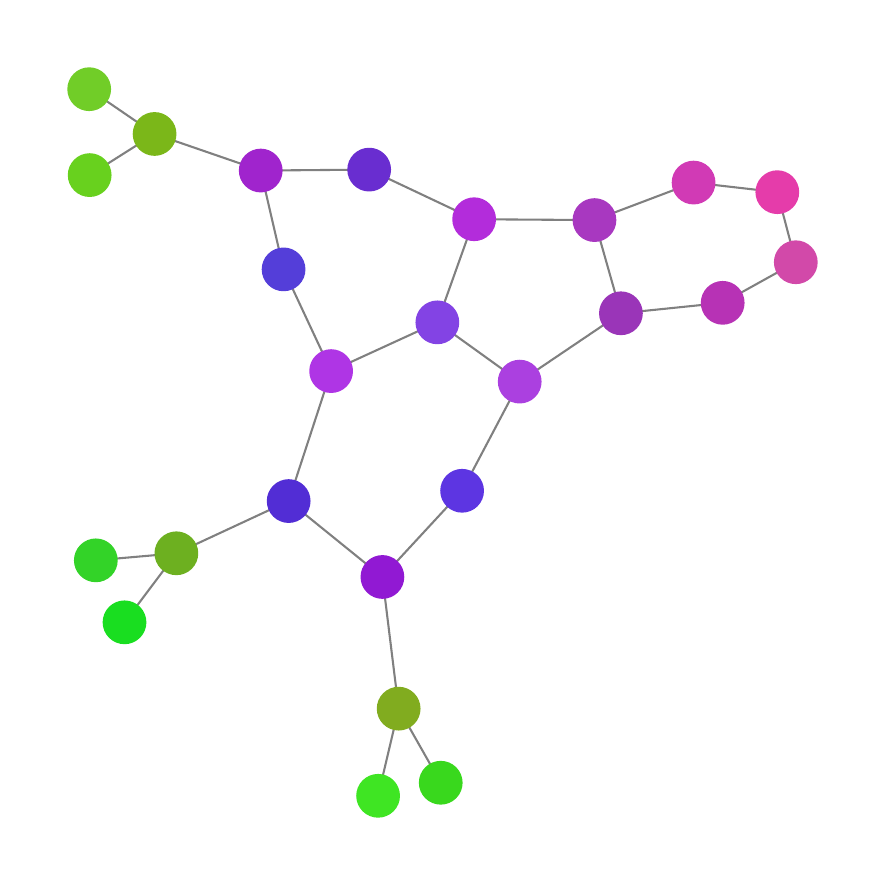}
        \put(-10,100){\textbf{(b)} GPSE}
    \end{overpic}\\
    \vspace*{.75\baselineskip}
    \begin{overpic}[width=.4\linewidth]{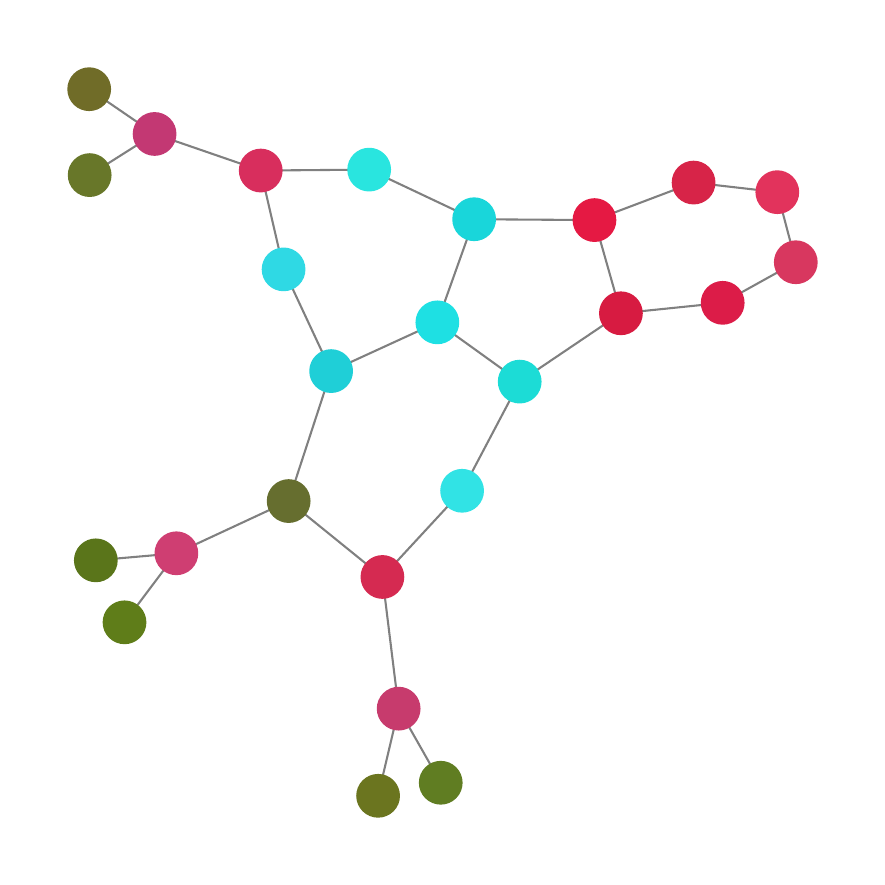}
        \put(-10,100){\textbf{(c)} Random Walk}
    \end{overpic}
    \quad
    \begin{overpic}[width=.4\linewidth]{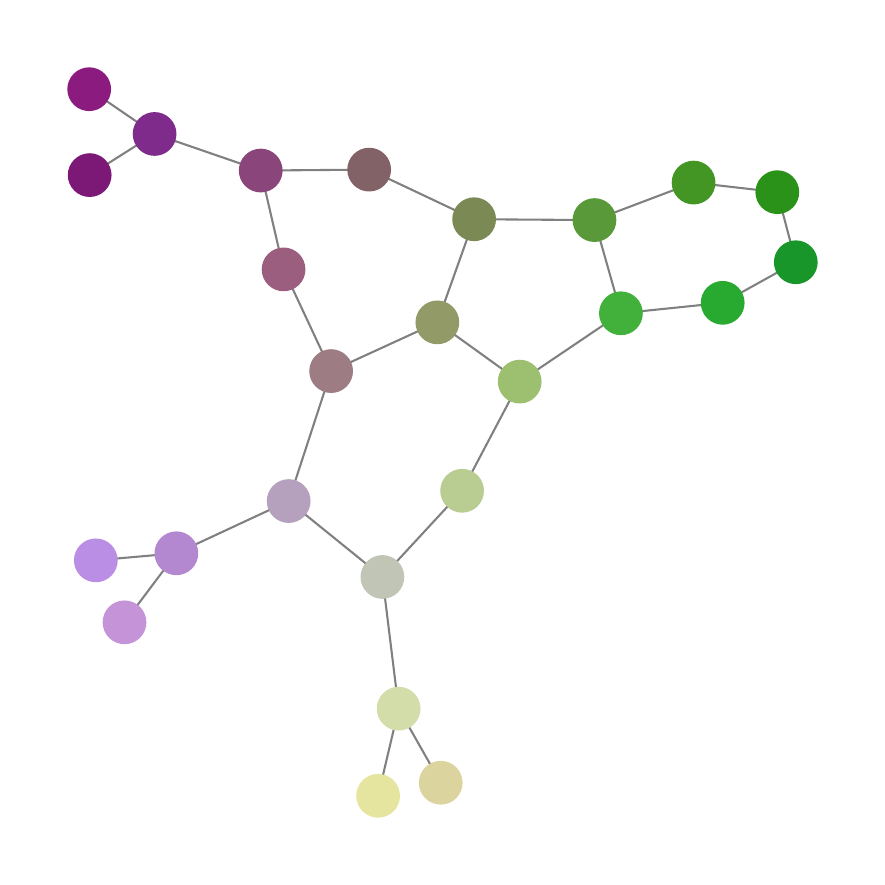}
        \put(-10,100){\textbf{(d)} node2vec}
    \end{overpic}
    \vspace*{-.5\baselineskip}
    \caption{Positional encodings on an example graph from the Mutag dataset.}
    \label{fig:pe-examples}
    \vspace{-1em}
\end{wrapfigure}
Moving beyond interpreting nodes' raw features $\mathbf{X}$ as colours, and using them to assign nodes to groups, we ask:
How do GNNs affect the colours, their validity, and transferability?
GNNs combine node features $\mathbf{X}$ with the graph's topology $\mathbf{A}$ to obtain node representations $\mathbf{H}$, which we interpret as topology-refined features.
Because GNNs used for pooling are at most as expressive~\cite{bianchi2023expressiv_pooling} as the Weisfeiler-Leman graph isomorphism test~\cite{xu2018how,10.1609/aaai.v33i01.33014602}, we can directly utilise the standard colour-refinement algorithm to obtain topology-refined colours instead of considering specific GNN architectures.
\Cref{fig:qrqr} shows that colour-refinement first leads to an increase in $\Lambda$ and a decrease in $\Gamma$, suggesting that it produces more colours based on the topology.

To apply our colouring-quality measure to empirical data, we map their feature vectors to discrete colours.
We assign the same colour to any two nodes $v_i, v_j$ whose feature vectors' cosine similarity is at least $\tau$, that is, $\operatorname{sim}_\text{cos}(x_i, x_j) \ge \tau \Rightarrow \zeta\left(v_i\right) = \zeta\left(v_j\right)$.
This induces a transitive relationship: nodes $v_i, v_k$ with $\operatorname{sim}_\text{cos}\left(x_i, x_k\right) < \tau$ receive the same colour when they are connected by a path of sufficiently similar nodes.
Selecting the optimal resolution parameter $\tau$ involves optimising $Q$ and requires a dataset-specific trade-off: a lower $\tau$ results in fewer colours, increasing $\Lambda$, whereas a higher $\tau$ yields more colours, increasing $\Gamma$.
\Cref{fig:qrqr}d shows this increase in relative colour count with respect to an increasing $\tau$ for different PEs.

\section{Evaluation}

\begin{table*}[!t]
    \centering
    \caption{
    Average colouring quality $\overline{Q}$ for empirical, random (same, mixed, distinct), and positional encoding node features for different numbers of colour-refinement (CR) iterations with respect to spectral partitions on the Mutag dataset. $\tau$ is the optimal resolution threshold, $\overline{k/N}$ the resulting colour-to-node ratio before colour refinement, and NMI the agreement between node classes obtained with a GCN and MinCut objective and the respective spectral clusters of the corresponding graphs' adjacency matrices. 
    }
    \resizebox{\linewidth}{!}{
    \begin{tabular}{l|cccc|cccc|cccc|cccc|cccc}
        \toprule
          & \multicolumn{4}{c|}{CR-0} & \multicolumn{4}{c|}{CR-1} &  \multicolumn{4}{c|}{CR-2} &  \multicolumn{4}{c|}{CR-3} & \multicolumn{4}{c}{CR-$\infty$} \\
           & $\overline{Q}$ & \gr{NMI} & $\tau$ & $\overline{k/N}$  & $\overline{Q}$ & \gr{NMI} & $\tau$ & $\overline{k/N}$ & $\overline{Q}$ & \gr{NMI} & $\tau$ & $\overline{k/N}$ & $\overline{Q}$ & \gr{NMI} & $\tau$ & $\overline{k/N}$ & $\overline{Q}$ & \gr{NMI} &$\tau$ & $\overline{k/N}$\\
         \midrule
         Empiric. Features & 0.34 & \gr{0.00} & - & 0.19 & 0.53 & \gr{0.40} & - & 0.19 & 0.47 & \gr{0.52} & - & 0.19 & 0.26 & \gr{0.53} & - & 0.19 & 0.13 & \gr{0.32} & - & 0.19     \\
         \midrule
         Random: same      &  0.00 & \gr{0.00} &  - &  0.06 &  0.14 & \gr{0.00} &  - &  0.06 &  0.50 & \gr{0.00} &  - &  0.06 &  \textbf{0.39} & \gr{0.00} &  - &  0.06 &  0.29 & \gr{0.53} &  - &  0.06  \\
         Random: mixed     &  0.31 & \gr{0.00} &  - &  0.52 &  0.28 & \gr{0.00} &  - &  0.52 &  0.14 & \gr{0.67} &  - &  0.52 &  0.14 & \gr{0.68} &  - &  0.52 &  0.14 & \gr{0.15}&  - &  0.52  \\
         Randim: distinct  &  0.00 & \gr{0.00} &  - &  1.00 &  0.32 & \gr{0.66} &  - &  1.00 &  0.32 & \gr{0.67} &  - &  1.00 &  0.32 & \gr{0.68} &  - &  1.00 &  0.32 & \gr{\textbf{0.65}}&  - &  1.00  \\
         \midrule
         Random Walk PE   &  0.37 & \gr{0.24} &  1.00 &  0.86 &  0.37 & \gr{0.39} &  1.00 &  0.86 &  0.50 & \gr{0.41} &  0.00 &  0.06 &  \textbf{0.39} & \gr{0.60} &  0.00 &  0.06 &  0.29 & \gr{0.54} &  0.00 &  0.06 \\
         Laplacian PE     &  \textbf{0.79} & \gr{\textbf{0.72}} &  0.48 &  0.45 &  \textbf{0.72} & \gr{\textbf{0.67}} &  0.25 &  0.24 &  \textbf{0.63} & \gr{\textbf{0.74}} &  0.12 &  0.14 &  \textbf{0.39} & \gr{\textbf{0.69}} &  0.00 &  0.06 &  \textbf{0.34} & \gr{0.54} &  0.99 &  0.96 \\
         GPSE             &  0.71 & \gr{0.00} &  0.89 &  0.60 &  0.68 & \gr{0.39} &  0.83 &  0.39 &  0.61 & \gr{0.48} &  0.64 &  0.11 &  \textbf{0.39} & \gr{0.60} &  0.00 &  0.06 &  \textbf{0.34} & \gr{0.43} &  0.96 &  0.77 \\
         Node2Vec PE      &  0.72 & \gr{0.00} &  0.71 &  0.41 &  0.58 & \gr{0.64} &  0.62 &  0.20 &  0.59 & \gr{0.61} &  0.52 &  0.09 &  \textbf{0.39} & \gr{0.64} &  0.00 &  0.06 &  0.32 & \gr{0.02} &  1.00 &  1.00 \\
         \bottomrule
    \end{tabular}
    }
    \label{tab:qrdata}
    \vspace{-1em}
\end{table*}

\label{sec:guide}
The question our colouring-quality measure $Q$ aims to answer is: how well do the given node features identify certain node groups using community-based pooling?
We consider empirical, random, and positional encodings as features for graph classification tasks on standard datasets and apply $\{1,2,3,4,\infty\}$ colour-refinement iterations, where $\infty$ denotes ``until convergence''.
For random features, we consider three cases: (1) \textit{same} assigns all nodes the same features, (2) \textit{distinct} assigns all nodes distinct features, and (3) \textit{mixed} assigns distinct features to half of the nodes, the other half's features are drawn, with replacement, from the first half's features.
For positional encodings, we use Laplacian~\cite{pe_laplacian}, GPSE~\cite{pe_gpse}, Random Walk~\cite{pe_rw}, and node2vec~\cite{10.1145/2939672.2939754}.
Spectral clustering of the graphs' adjacency matrices determines target partitions.
We generalise $Q$ for datasets with multiple graphs by (1) taking an 80/20 seen/unseen split, (2) computing the average colouring validity $\overline{\Gamma}$ for all graphs, (3) consider all seen-unseen graph pairs to determine $\overline{\Lambda}$, the fraction of matched unseen graphs, and (4) set the average colouring quality $\overline{Q}$ to be the minimum of $\overline{\Gamma}$ and $\overline{\Lambda}$ (details in \Cref{sec:variants}).
Our code is available on GitHub\footnote{\url{https://github.com/jvpichowski-research/2026-features-for-graph-pooling}}.

\Cref{tab:qrdata} shows the results of our systematic evaluation on the Mutag dataset; \Cref{sec:quality} contains results for further datasets.
We find that PEs, especially Laplacian and node2vec, generally yield better $Q$-scores than empirical or random features, likely because PEs are designed to capture graph structure.
\Cref{fig:pe-examples} shows PEs for a graph from the Mutag dataset, reduced to a single dimension via UMAP~\cite{mcinnes2020umapuniformmanifoldapproximation}:
The Laplacian and node2vec PEs capture the graph's community structure better than GPSE and Random Walk PEs, which assign the same colour to nodes far apart in the graph, suggesting that Laplacian and node2vec PEs are better suited for pooling applications.
We also find that a small number of colour-refinement iterations can be useful when using empirical or random features, but is detrimental to PEs in most cases, a phenomenon likely explained by the over-smoothing of node features.
In addition, we use a GCN with a MinCut objective and $\{0,1,2,3,\infty\}$ layers to classify nodes, and use NMI to compare the resulting classes with the communities obtained from the respective graph's adjacency matrix via spectral clustering.
We choose a GCN because it also adopts a spectral perspective \cite{kipf2017semisupervised}.
The NMI values reported in \Cref{tab:qrdata} and \Cref{sec:quality} align with our $Q$-score and confirm that Laplacian and node2vec PEs are the most useful features.

\paragraph{Experimental Reevaluation of Pooling Operators.}
\begin{wrapfigure}[10]{r}{0.35\textwidth}
    \centering
    \vspace*{-1.5\baselineskip}
    \includegraphics[width=\linewidth]{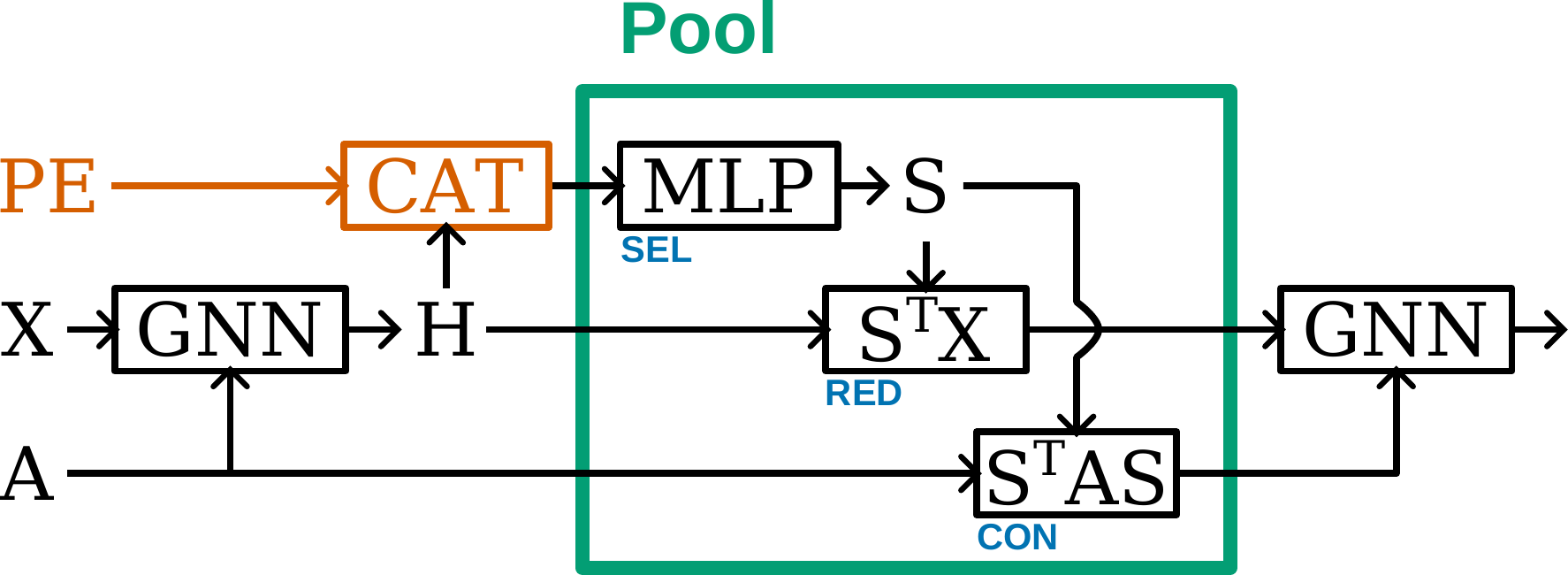}
    \caption{Block diagram of our pooling setup including positional encodings (PEs) in terms of the \textsc{SEL-RED-CON} framework \cite{sel-red-con}.}
    \label{fig:setup}
\end{wrapfigure}
To ensure that the implicit assumption behind pooling---that node features align with topological clusters---is met, we incorporate positional encodings into the pooling setup.
\Cref{fig:setup} illustrates the setup, with further details in \Cref{alg:setup}, \Cref{sec:setup}.
The improvement over previous pooling setups is that, in addition to the intermediate node representations $\mathbf{H}$, the \textsc{SEL} operation also has access to PEs that, by construction, carry information about the graph's topology.
We do not apply message passing to the PEs, as we found colour refinement generally degrades PEs' colouring quality (\Cref{tab:qrdata}).
We test five pooling methods, namely DiffPool \cite{NEURIPS2018_e77dbaf6}, DMoN \cite{JMLR:v24:20-998}, JBPool \cite{Bianchi_2023_jb}, MDL-Pool \cite{mdlpool}, and MinCut \cite{pmlr-v119-bianchi20a}, each 
with and without positional encodings, and compare their performance against the no-pool baseline, which embeds graphs by taking a global maximum of the nodes' features.
\Cref{fig:main-results} visualises our results (details in \Cref{sec:main-tab}, \Cref{tab:main}).
\begin{figure}
    \includegraphics[width=\linewidth]{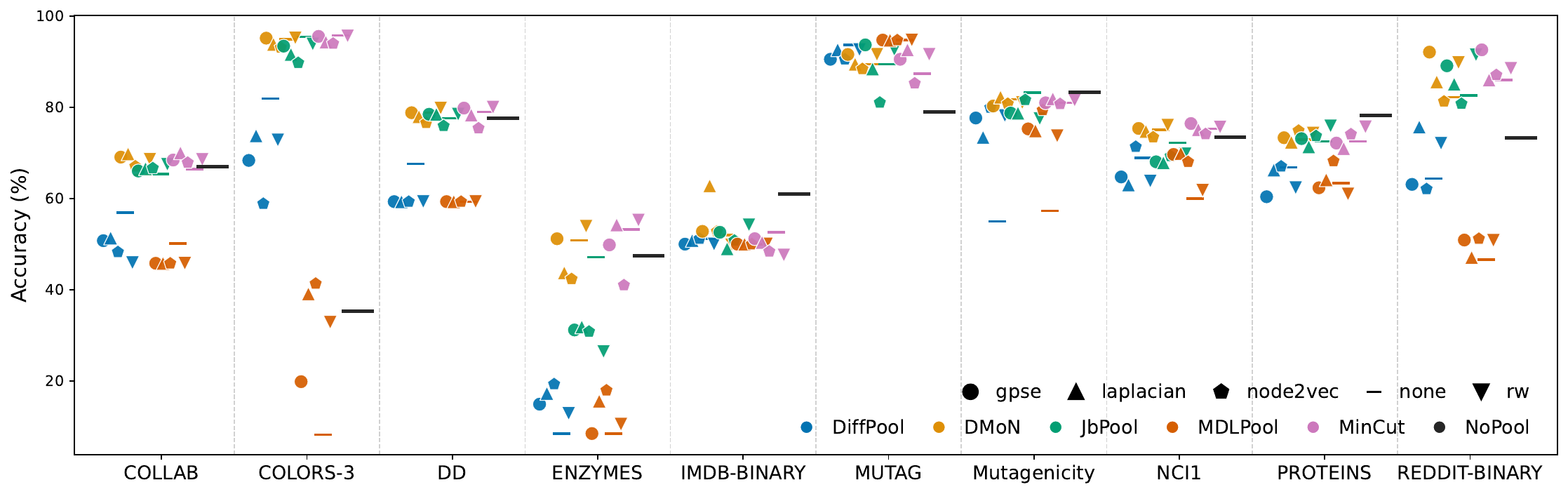}
    \caption{Downstream graph-classification performance for five pooling methods, with and without positional encodings, and no-pool.}
    \label{fig:main-results}
\end{figure}

\begin{wrapfigure}[16]{r}{0.42\textwidth}
    \vspace*{-.5\baselineskip}
    \centering
    \begin{overpic}[width=.45\linewidth]{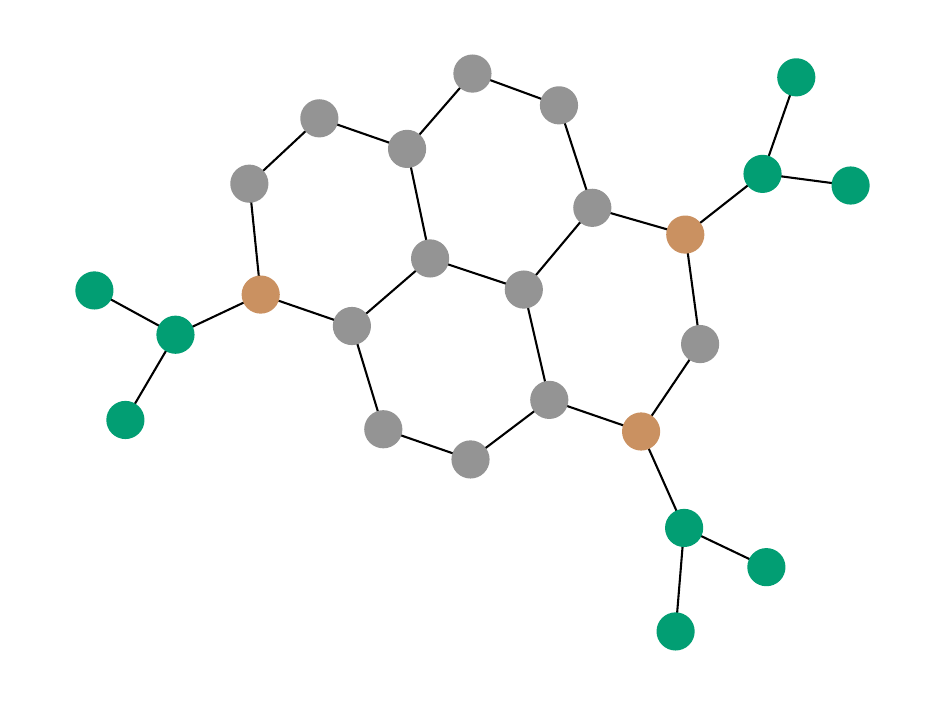}
        \put(0,60){\textbf{(a)}}
    \end{overpic}
    \quad
    \begin{overpic}[width=.45\linewidth]{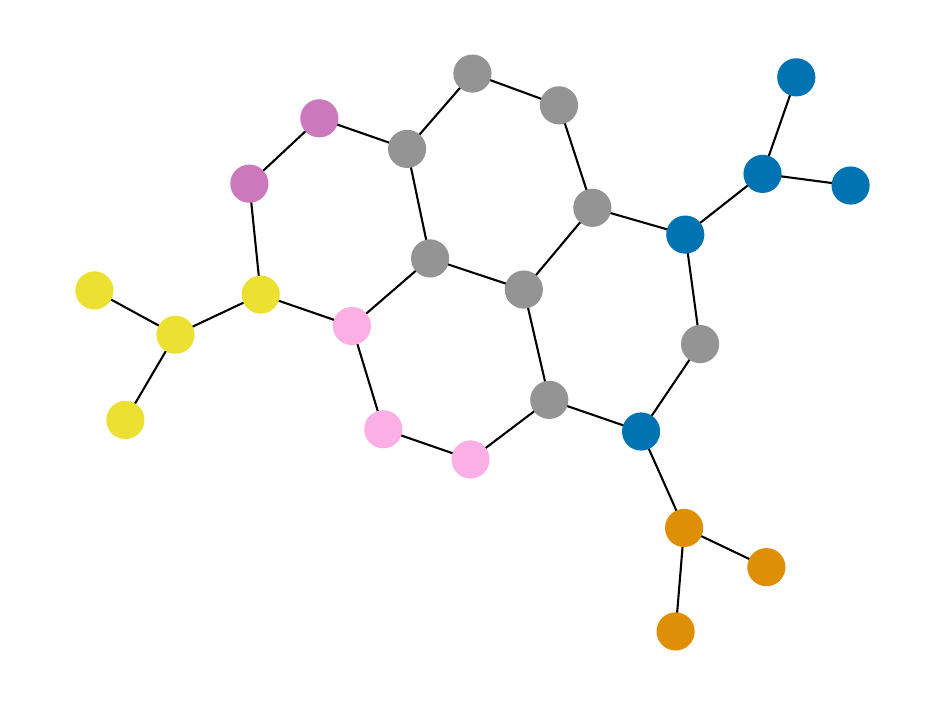}
        \put(0,60){\textbf{(b)}}
    \end{overpic}
    \quad
    \begin{overpic}[width=.45\linewidth]{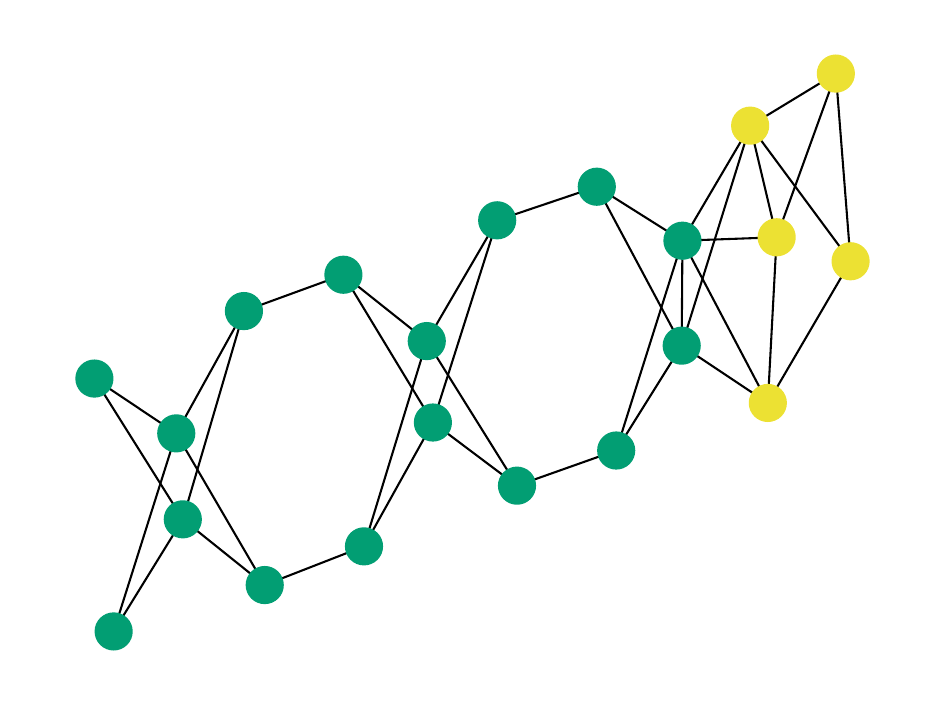}
        \put(0,55){\textbf{(c)}}
    \end{overpic}
    \quad
    \begin{overpic}[width=.45\linewidth]{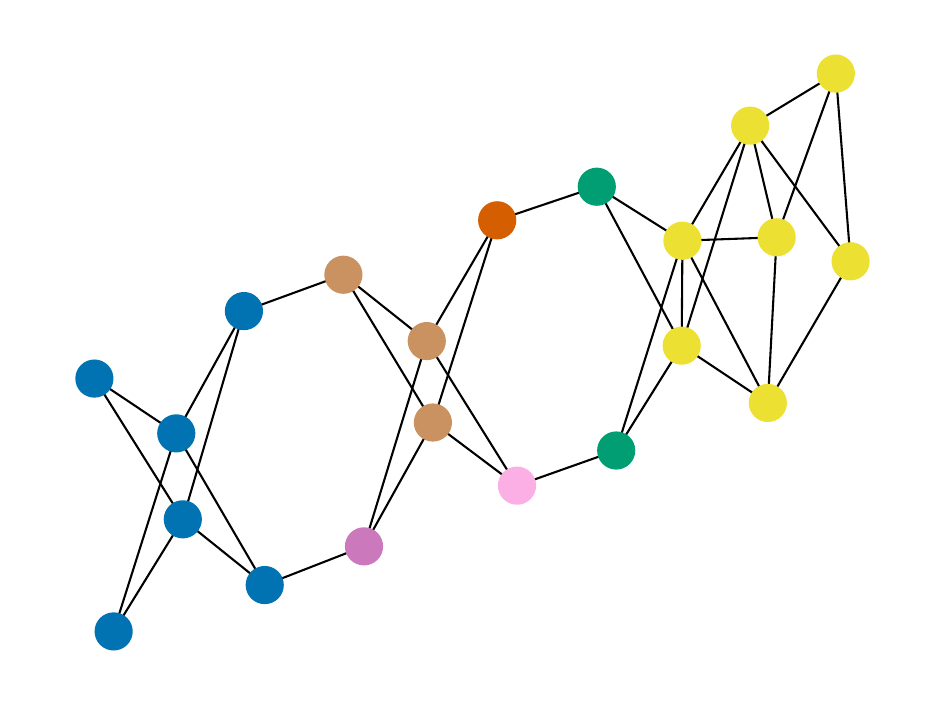}
        \put(0,55){\textbf{(d)}}
    \end{overpic}
    \vspace*{-.5\baselineskip}
    \caption{Improved group assignments obtained with Laplacian PEs.
    \textbf{(a)} \textsc{Mutag}, no PEs
    \textbf{(b)} \textsc{Mutag}, with PEs
    \textbf{(c)} \textsc{Proteins}, no PEs
    \textbf{(d)} \textsc{Proteins}, with PEs.
    }
    \label{fig:learned-assignments}
    \vspace{-1em}
\end{wrapfigure}
Our experimental results indicate a connection between node features, graph topology, and pooling---including PEs in the pooling setup can improve downstream performance.
However, whether community-based pooling performs better or worse than the no-pool baseline depends on the dataset: 
For example, on the \textsc{Reddit-B} and \textsc{Mutag} datasets, pooling without PEs already outperforms the no-pool baseline; including PEs further improves the downstream performance in most cases.
\Cref{fig:learned-assignments} illustrates cases where PEs enable discovering clusters that better align with our intuition of topological clusters.
In contrast, for the \textsc{Proteins} and \textsc{IMDB-B} datasets, no-pool performs better than pooling without PEs; while using PEs for pooling improves pooling, it also often leads to weaker downstream performance than not using PEs.
Consequently, whether to apply pooling to a dataset should be carefully considered.
While previous works blame poor graph classification performance on shortcomings of pooling methods, our results indicate that pooling suitability is an inherent property of datasets.

\section{Conclusion, Limitations, and Future Work}
Community-based deep graph pooling combines the graph topology $\mathbf{A}$ with node features $\mathbf{X}$ to jointly learn communities for coarse-graining the graph, and is often used in graph classification tasks.
We argued that, for this to function, features and topology must ``align'': in \Cref{theorem:align,theorem:injective,theorem:deterministic}, we formulated simple conditions that enable or prevent identifying the desired communities.
Based on these conditions, we developed a colouring-based framework and quality measure (\Cref{eqn:quality}) for assessing the suitability of features for identifying given communities.
We applied our feature quality measure to empirical graph datasets and found that alignment between features $\mathbf{X}$ and the topology $\mathbf{A}$ cannot generally be assumed; often, a limited number of colour-refinement iterations can improve the situation.
However, we found that positional encodings (PEs) are considerably more aligned with the graph topology and, thus, more suitable for community-based graph pooling (see \Cref{tab:qrdata} and \Cref{sec:quality}).
Motivated by these results, we incorporated positional encodings into graph pooling for a downstream graph classification task (\Cref{fig:setup}), and found that PEs often improve downstream task performance.
However, we also found that in some datasets the opposite is true: using positional encodings does not guarantee improved downstream task performance (see \Cref{fig:main-results} and \Cref{sec:main-tab}).
Whereas previous work often blames limited pooling performance on the shortcomings of pooling operators, our results suggest that pooling suitability is an inherent property of a dataset.

\vspace*{-.1\baselineskip}
\textbf{Limitations.}
\label{sec:limitations}
Our feature-quality measure is agnostic to the precise definition of node groups and is, in principle, applicable to pooling approaches that are not community-based, such as ranking-based pooling.
However, the different semantics of groups, such as \textit{important} vs.\ \textit{not important}, differ fundamentally from those of communities, and are not immediately compatible with the reference-partition-based evaluation concept we adopted.
For our evaluation, we assumed that the graphs' properties in the datasets are informative for predicting their class labels.
However, poor pooling performance may be due to low-quality class labels and the interplay between identified groups and class labels; a rigorous evaluation of class label quality is beyond the scope of this work.

\vspace*{-.1\baselineskip}
\textbf{Future Work.}
Our work highlights important gaps in current graph-pooling setups, calling for follow-up studies:
First, GNN-based graph clustering relies on a topology-refined version of node features.
However, the extent to which colour refinement incorporates topological information is limited, so that GNNs essentially ignore precise topological patterns when constructing clusters.
Developing clustering approaches for GNNs that incorporate topology more directly is a promising direction for reducing reliance on high-quality features.
Alternatively, node features could be incorporated more into pooling operators.
Second, we found that, for some datasets, GNN-based pooling for graph classification performs worse than the simple no-pool baseline.
We suspect this may be due to a mismatch between the supervision labels and the graphs' topologies and node features.
However, a systematic study of the datasets' properties is required to pinpoint the precise reasons.
Third, we anticipate that our colouring-based framework can be generalised for assessing the suitability of node features and graph topology for tasks beyond graph classification.

\begin{ack}
    JP and IS acknowledge funding by the German Ministry of  Research, Technology and Space (BMFTR) under grant agreement No. 01IS24072A (COMFORT).
    This work was partially supported by the Wallenberg AI, Autonomous Systems and Software Program (WASP) funded by the Knut and Alice Wallenberg Foundation.
\end{ack}


\clearpage
\appendix

\section{Notes on the Quality Scores}
\label{sec:variants}

\subsection{Trivial solutions for optimal scores}
\begin{figure}[h]
    \vspace*{1\baselineskip}
    \centering
    \begin{overpic}[width=0.9\linewidth]{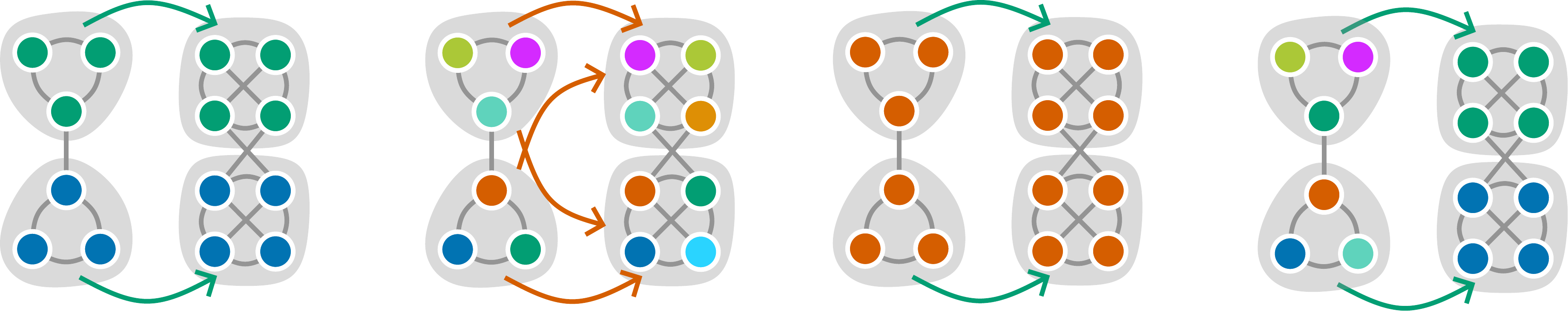}
        \put(-3,19){\textbf{(a)}}
        \put(4.5,21.5){\tiny$\left\{
            \tikz \filldraw[draw=fadedgreen,fill=fadedgreen] (0,0) circle (2pt);
        \right\} \supseteq \left\{
            \tikz \filldraw[draw=fadedgreen,fill=fadedgreen] (0,0) circle (2pt);
        \right\}$}
        \put(4.5,-2.5){\tiny$\left\{
            \tikz \filldraw[draw=windowsblue,fill=windowsblue] (0,0) circle (2pt);
        \right\} \supseteq \left\{
            \tikz \filldraw[draw=windowsblue,fill=windowsblue] (0,0) circle (2pt);
        \right\}$}
        
        \put(23,19){\textbf{(b)}}
        \put(26.5,21.5){\tiny$\left\{
            \tikz \filldraw[draw=lightbluea,fill=lightbluea] (0,0) circle (2pt);,
            \tikz \filldraw[draw=pinka,fill=pinka] (0,0) circle (2pt);,
            \tikz \filldraw[draw=lightgreena,fill=lightgreena] (0,0) circle (2pt);
        \right\} \not\supseteq \left\{
            \tikz \filldraw[draw=lightbluea,fill=lightbluea] (0,0) circle (2pt);,
            \tikz \filldraw[draw=dustyorange,fill=dustyorange] (0,0) circle (2pt);,
            \tikz \filldraw[draw=pinka,fill=pinka] (0,0) circle (2pt);,
            \tikz \filldraw[draw=lightgreena,fill=lightgreena] (0,0) circle (2pt);
        \right\}$}
        \put(26.5,-2.5){\tiny$\left\{
            \tikz \filldraw[draw=windowsblue,fill=windowsblue] (0,0) circle (2pt);,
            \tikz \filldraw[draw=palered,fill=palered] (0,0) circle (2pt);,
            \tikz \filldraw[draw=fadedgreen,fill=fadedgreen] (0,0) circle (2pt);
        \right\} \not\supseteq \left\{
            \tikz \filldraw[draw=windowsblue,fill=windowsblue] (0,0) circle (2pt);,
            \tikz \filldraw[draw=lightblueb,fill=lightblueb] (0,0) circle (2pt);,
            \tikz \filldraw[draw=palered,fill=palered] (0,0) circle (2pt);,
            \tikz \filldraw[draw=fadedgreen,fill=fadedgreen] (0,0) circle (2pt);
        \right\}$}
        \put(36,7.5){\tiny$\not\supseteq$}
        \put(36,12){\tiny$\not\supseteq$}

        \put(50,19){\textbf{(c)}}
        \put(57,21.5){\tiny$\left\{
            \tikz \filldraw[draw=palered,fill=palered] (0,0) circle (2pt);
        \right\} \supseteq \left\{
            \tikz \filldraw[draw=palered,fill=palered] (0,0) circle (2pt);
        \right\}$}
        \put(57,-2.5){\tiny$\left\{
            \tikz \filldraw[draw=palered,fill=palered] (0,0) circle (2pt);
        \right\} \supseteq \left\{
            \tikz \filldraw[draw=palered,fill=palered] (0,0) circle (2pt);
        \right\}$}
        
        \put(78,19){\textbf{(d)}}
        \put(82.5,21.5){\tiny$\left\{
            \tikz \filldraw[draw=lightbluea,fill=lightbluea] (0,0) circle (2pt);,
            \tikz \filldraw[draw=pinka,fill=pinka] (0,0) circle (2pt);,
            \tikz \filldraw[draw=fadedgreen,fill=fadedgreen] (0,0) circle (2pt);
        \right\} \supseteq \left\{
            \tikz \filldraw[draw=fadedgreen,fill=fadedgreen] (0,0) circle (2pt);
        \right\}$}
        \put(82.5,-2.5){\tiny$\left\{
            \tikz \filldraw[draw=windowsblue,fill=windowsblue] (0,0) circle (2pt);,
            \tikz \filldraw[draw=palered,fill=palered] (0,0) circle (2pt);,
            \tikz \filldraw[draw=lightbluea,fill=lightbluea] (0,0) circle (2pt);
        \right\} \supseteq \left\{
            \tikz \filldraw[draw=windowsblue,fill=windowsblue] (0,0) circle (2pt);
        \right\}$}
    \end{overpic}
    \vspace*{1\baselineskip}
    \caption{The left graph is seen and mapped to the right graph that is unseen during training.\\
    \textbf{(a)} Optimal colourings: The colours are valid $\Gamma = 1$ and transferable $\Lambda = 1$, leading to $Q = 1$.\\
    \textbf{(b)} Valid colourings: The colours are valid $\Gamma = 1$ but not transferable $\Lambda = 0$, leading to $Q = 0$.\\
    \textbf{(c)} Transferable colourings: The colours are invalid $\Gamma = 0$ but transferable $\Lambda = 1$, leading to $Q = 0$.\\
    \textbf{(d)} Optimal colourings: The colours are valid $\Gamma = 1$ and transferable $\Lambda = 1$, leading to $Q = 1$.\\
    }
    \label{fig:valid-match-2}
\end{figure}

\subsection{Implementation details and Complexity}

\subsubsection{Computation of $\Gamma(\zeta | P)$}
To efficiently compute $\Gamma$ as in \Cref{alg:gamma}, we represent colours, nodes, and groups using integer indices.
The colouring $\zeta$ is stored as a list indexed by node indices, where each entry contains the colour assigned to the corresponding node. 
This representation requires $\mathcal{O}(|V|)$ memory.
The partition $P$ consists of group indices associated with lists of node indices. 
Since each node belongs to at most one group, the total memory consumption of $P$ is also $\mathcal{O}(|V|)$.
In addition, we maintain a list $U$ that maps each colour index to a unique group index. 
This auxiliary structure requires $\mathcal{O}(|C|)$ memory.

The algorithm iterates over all nodes by traversing the groups in $P$, which takes $\mathcal{O}(|V|)$ time in total. 
For each node, its colour is retrieved in constant time using the node index, and the corresponding entry in $U$ is updated with the current group index.
If an entry in $U$ needs to be overwritten for the first time, the algorithm increases a counter variable and sets the entry as 'invalid', indicating that the same colour appears in more than one group and the assignment is therefore invalid.
The algorithm completes by returning the number of invalid colours divided by the number of all colours.

Overall, the runtime of $\Gamma$ is $\mathcal{O}(|V|)$, and its space complexity is $\mathcal{O}(|V| + |C|)$.

\begin{figure}[b!]
\centering
\begin{minipage}{\textwidth}
\begin{algorithm}[H]
\caption{$\Gamma(\zeta|P)$}\label{alg:gamma}
\begin{algorithmic}
\REQUIRE $\zeta, P$
\ENSURE $\Gamma$
\STATE $c \gets \textsc{max}(\zeta)$
\STATE $U \gets \textsc{List}(c)$
\STATE $\text{inv} \gets 0$
\STATE $\textsc{For } g_i \in P \textsc{ Do}$
\STATE $\quad \textsc{For } v_j \in g_i \textsc{ Do}$
\STATE $\quad \quad c_k \gets \zeta[i]$
\STATE $\quad \quad \textsc{If } U[k] = \text{'none'} \textsc{ Then}$
\STATE $\quad \quad \quad U[k] \gets i$
\STATE $\quad \quad \textsc{ElseIf } U[k] = \text{'invalid'} \textsc{ Then}$
\STATE $\quad \quad \quad \textsc{Continue}$
\STATE $\quad \quad \textsc{Else}$
\STATE $\quad \quad \quad U[k] \gets \text{'invalid'}$
\STATE $\quad \quad \quad \text{inv} \gets \text{inv} + 1$
\STATE $\quad \quad \textsc{EndIf}$
\STATE $\quad \textsc{EndFor}$
\STATE $\textsc{EndFor}$
\STATE $\Gamma \gets \text{inv} / c$
\end{algorithmic}
\end{algorithm}
\end{minipage}
\end{figure}

\subsubsection{Computation of $\Lambda\left( \zeta_s, \zeta_u \,|\, P_s, P_u \right)$}
For an efficient computation of $\Lambda$ as in \Cref{alg:lambda}, we first construct an auxiliary list $S$ containing one entry for each seen group. 
Each entry is implemented as a hash set storing the colours present in the corresponding group. 
Assuming a perfect hash function, queries run in amortised $\mathcal{O}(1)$ time, or $\mathcal{O}(1)$ on average in general.
Since there are at most $|V_s|$ seen nodes and each node contributes exactly one colour, the construction of these hash sets takes $\mathcal{O}(|V_s|)$ time and space.

Next, we iterate over all unseen groups in $P_u$ and examine the colours of their nodes. 
Each unseen node is processed exactly once, resulting in at most $\mathcal{O}(|V_u|)$ colour lookups.
For each unseen group, we check whether there exists a seen group that contains all the colours of the unseen group. 
This is done by iterating over all seen groups and querying their hash sets. 
Due to the constant-time complexity of hash set lookups, checking a single colour against one seen group takes $\mathcal{O}(1)$ time.
Consequently, processing a single unseen node requires at most $\mathcal{O}(|P_s|)$ time.

If no matching seen group is found for an unseen group, the algorithm terminates early and returns~0. 
Otherwise, the procedure completes and returns~1.

In summary, $\Lambda$ runs on average in time $\mathcal{O}(|V_u|,|P_s| + |V_s|)$ and uses $\mathcal{O}(|V_u| + |V_s|)$ space.
If hash sets are replaced by balanced trees of colours, the runtime becomes $\mathcal{O}(|V_u|,|P_s| + |V_s|\log |C_s|)$ due to the increased cost of constructing the data structure, while the space complexity remains unchanged.

\begin{figure}[b!]
\centering
\begin{minipage}{\textwidth}
\begin{algorithm}[H]
\caption{$\Lambda\left( \zeta_s, \zeta_u \,|\, P_s, P_u \right)$}\label{alg:lambda}
\begin{algorithmic}
\REQUIRE $\zeta_s, \zeta_u, P_s, P_u$
\ENSURE $\Lambda$
\STATE $S \gets \textsc{List}(|P_s|)$
\STATE $\textsc{For } g_i \in P_s \textsc{ Do}$
\STATE $\quad \textsc{For } v_j \in g_s \textsc{ Do}$
\STATE $\quad \quad \textsc{InsertIn}(S[i], \zeta_s[j])$
\STATE $\textsc{For } g_i \in P_u \textsc{ Do}$
\STATE $\quad \text{transferable} \gets \textsc{False}$
\STATE $\quad \textsc{For } S_j \in S \textsc{ Do}$
\STATE $\quad \quad \text{match} \gets \textsc{True}$
\STATE $\quad \quad \textsc{For } v_k \in g_i \textsc{ Do}$
\STATE $\quad \quad \quad \textsc{If } \zeta_u[k] \notin S_j \textsc{ Then}$
\STATE $\quad \quad \quad \quad \text{match} \gets \textsc{False}$
\STATE $\quad \quad \quad \quad \textsc{Break}$
\STATE $\quad \quad \quad \textsc{EndIf}$
\STATE $\quad \quad \textsc{EndFor}$
\STATE $\quad \quad \textsc{If } \text{match} = \textsc{False} \textsc{ Then}$
\STATE $\quad \quad \quad \textsc{Continue}$
\STATE $\quad \quad \textsc{EndIf}$
\STATE $\quad \quad \text{transferable} \gets \textsc{True}$
\STATE $\quad \textsc{EndFor}$
\STATE $\quad \textsc{If } \text{transferable} = \textsc{False} \textsc{ Then}$
\STATE $\quad \quad \Lambda \gets 0$
\STATE $\quad \quad \textsc{Exit}$
\STATE $\quad \textsc{EndIf}$
\STATE $\textsc{EndFor}$
\STATE $\Lambda \gets 1$
\end{algorithmic}
\end{algorithm}
\end{minipage}
\end{figure}

\subsubsection{Computation of $Q\left( \zeta_s, \zeta_u \,|\, P_s, P_u \right)$}
To compute $Q$, we compute $\Gamma$ and $\Lambda$, and return their minimum.
Consequently, the average-case runtime for computing $Q$ is $\mathcal{O}(|V_u|,|P_s| + |V_s|)$, and its worst-case runtime is $\mathcal{O}(|V_u|,|P_s| + |V_s|\log |C_s|)$.
The space complexity for computing $Q$ is $\mathcal{O}(|V_u| + |V_s|)$.

\subsection{Applying the scores to multi-graph datasets}
Pooling methods are applied to datasets with multiple train and test graphs.
We extend our scores to this setting by averaging scores across multiple graphs.

We compute $\bar{\Gamma}(\mathcal{G})$ by iterating through all graphs and taking the average:
\begin{align}
    \bar{\Gamma}(\mathcal{G}) = \frac{\sum_{G \in \mathcal{G}} \Gamma(P, \zeta)}{\left| \mathcal{G} \right|}
\end{align}
We compute $\bar{\Lambda}(\mathcal{G}_s, \mathcal{G}_u)$ by iterating through all unseen graphs and determining if there is at least one seen graph whose colouring matches:
\begin{align}
    \bar{\Lambda}(\mathcal{G}_s, \mathcal{G}_u) = \frac{\left|\{ G_u \in \mathcal{G}_u \,|\, \sum_{G_s \in \mathcal{G}_s} \Lambda(\zeta_s, \zeta_u \,|\, P_s, P_u) \geq 1 \}\right|}{\left| \mathcal{G}_u\right|}
\end{align}
We compute $\bar{Q}(\mathcal{G}_s, \mathcal{G}_u)$ as the limiting factor of $\bar{\Gamma}(\mathcal{G}_s)$ and $\bar{\Lambda}(\mathcal{G}_s, \mathcal{G}_u)$:
\begin{align}
    \bar{Q} \left( \mathcal{G}_s, \mathcal{G}_u \right) = \min \left( \bar{\Gamma}\left(  \mathcal{G}_s \cup \mathcal{G}_u \right), \bar{\Lambda}\left( \mathcal{G}_s, \mathcal{G}_u \right) \right)
\end{align}

Applying $\Gamma$ to all graphs yields the same runtime bounds $\mathcal{O}(|V|)$ but now proportional to the aggregate node count $|V^{\textsc{all}}|$ and not the node count of a single graph. 
Analogously, the runtimes of $\Lambda$ and $Q$ now depend on the total numbers of nodes $|V_u^{\textsc{all}}|, |V_s^{\textsc{all}}|$ and groups $|P_s^{\textsc{all}}|$, since $\Lambda$ is evaluated for all groups.

\subsection{Variants of transferability measure}
We investigate three variants of our transferability measure. 
The first variant is used throughout the main body of the paper. 
In the following, we consider alternative definitions of transferability and show the influence on the resulting quality curves.

\clearpage
\subsubsection{Fully transferable graphs}
A graph is considered either fully transferable or non-transferable. An unseen graph is transferable if all of its groups can be mapped to groups in any seen graph. The final score is computed as the ratio of transferable graphs in the unseen set to the total number of graphs in the unseen set (\Cref{fig:random_coloring_v1}).  
\begin{align}
    &\Lambda\left( \zeta_s, \zeta_u \,|\, P_s, P_u\right) = \begin{cases}
        1 \text{ if } \lambda\left( P_s, g_u \,|\, \zeta_s, \zeta_u \right) \geq 1 \,\forall\, g_u \in P_u
        \\
        0 \text{ otherwise}
    \end{cases}\\
    &\bar{\Lambda}(\mathcal{G}_s, \mathcal{G}_u) = \frac{\left|\{ G_u \in \mathcal{G}_u \,|\, \sum_{G_s \in \mathcal{G}_s} \Lambda(\zeta_s, \zeta_u \,|\, P_s, P_u) \geq 1 \}\right|}{\left| \mathcal{G}_u\right|}
\end{align}

\begin{figure*}[h]
\centering
\begin{subfigure}{\textwidth}
    \centering
    \includegraphics[width=0.94\linewidth]{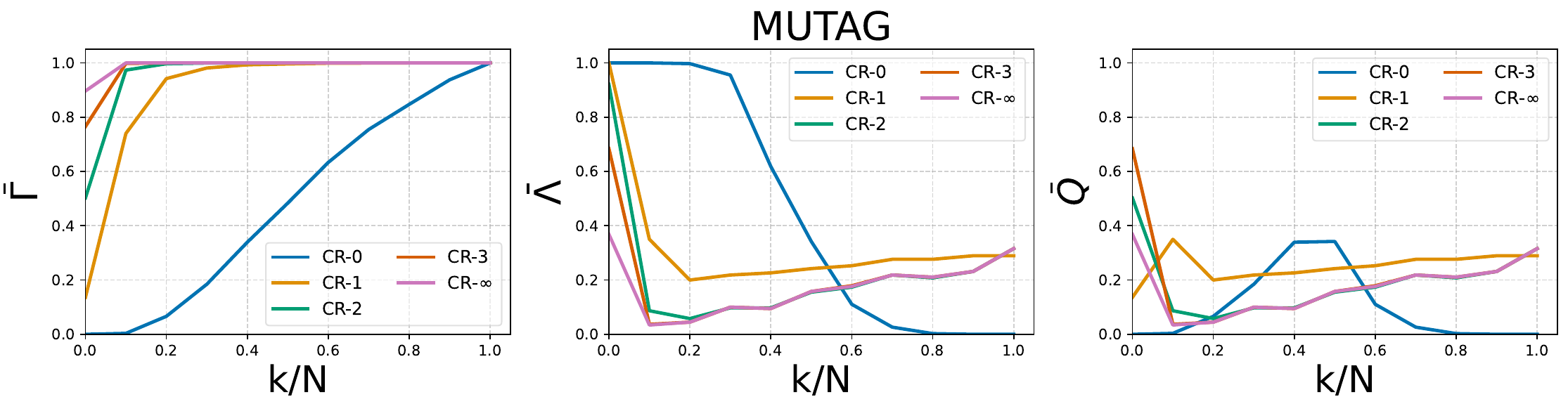}
\end{subfigure}

\begin{subfigure}{\textwidth}
    \centering
    \includegraphics[width=0.94\linewidth]{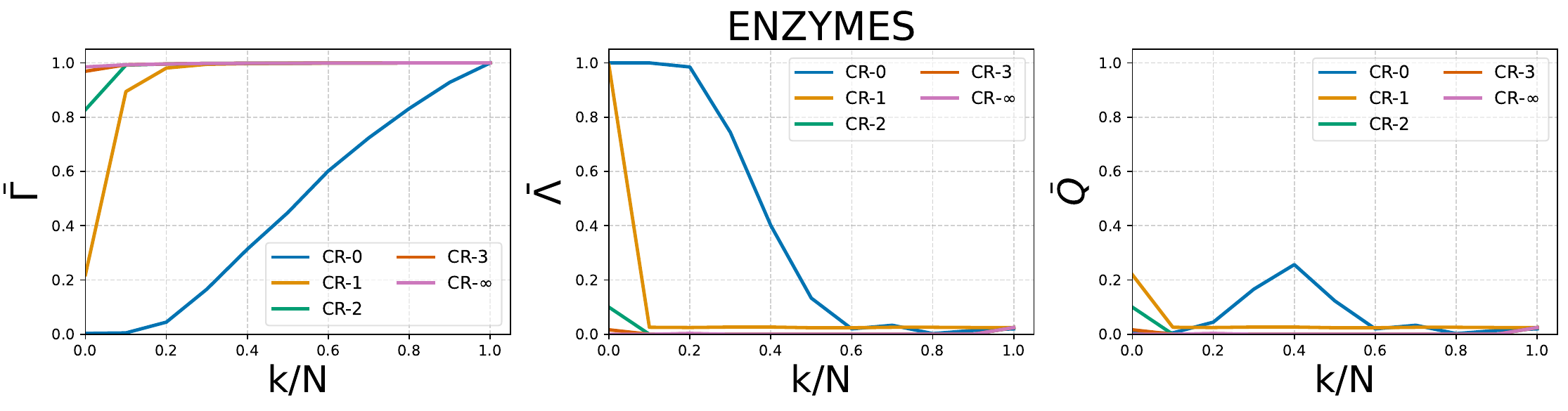}
\end{subfigure}

\begin{subfigure}{\textwidth}
    \centering
    \includegraphics[width=0.94\linewidth]{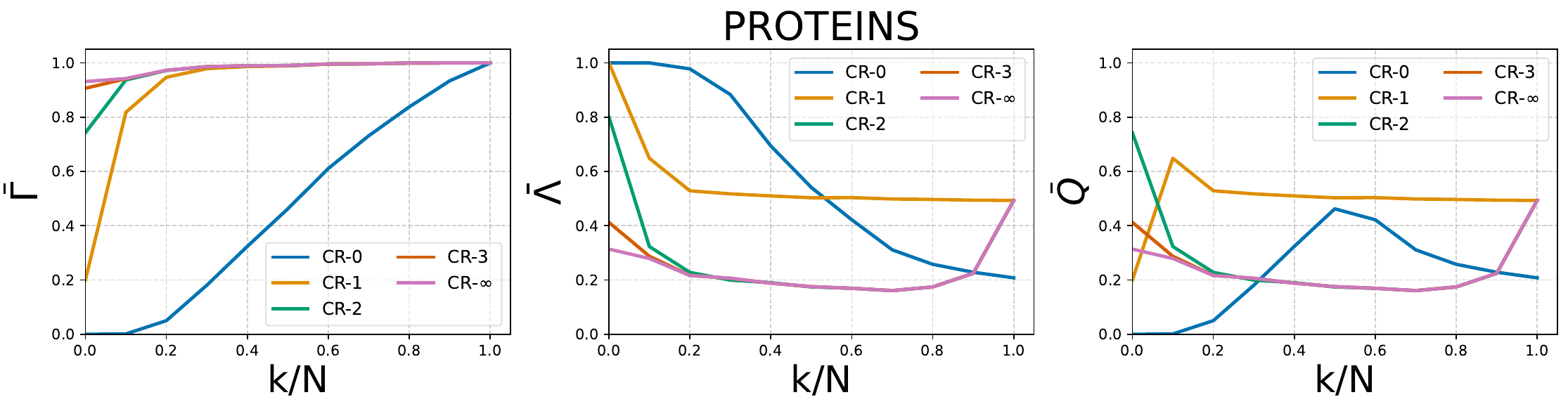}
\end{subfigure}

\begin{subfigure}{\textwidth}
    \centering
    \includegraphics[width=0.94\linewidth]{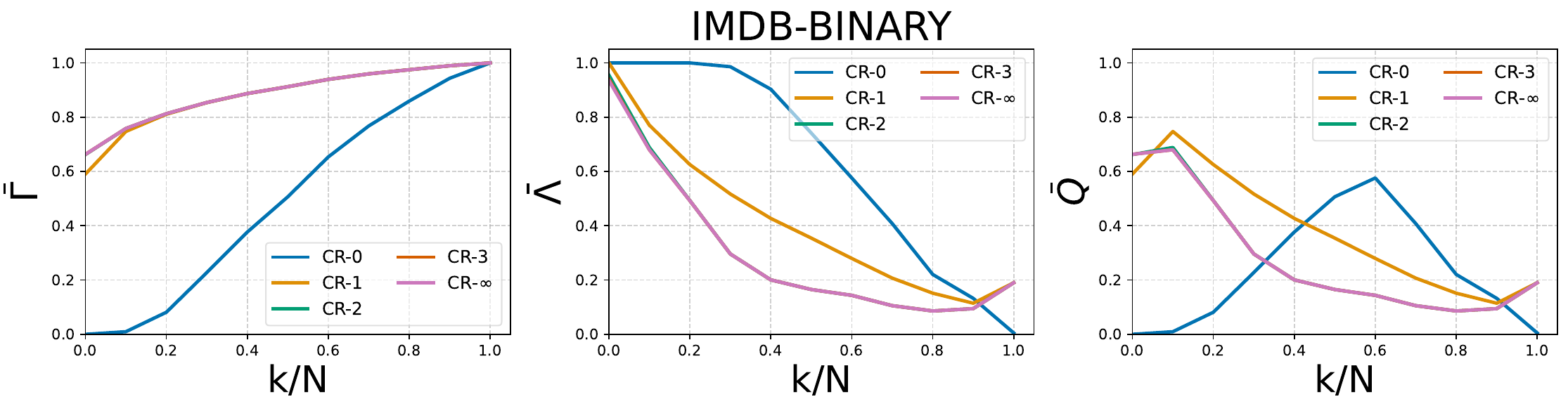}
\end{subfigure}

\begin{subfigure}{\textwidth}
    \centering
    \includegraphics[width=0.94\linewidth]{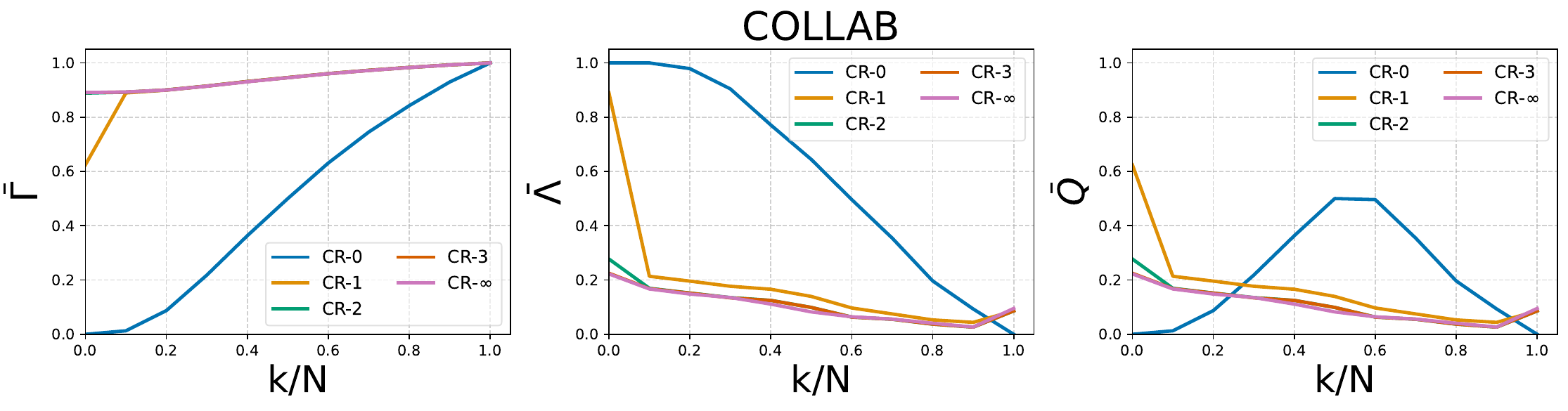}
\end{subfigure}
\caption{Main variant; considering graphs as transferable only if all groups match.}
\label{fig:random_coloring_v1}
\end{figure*}

\clearpage
\subsubsection{Ratio of transferable graphs}
Each unseen graph is assigned a score between 0 and 1, computed as the ratio of transferable groups in the graph to the total number of groups in the graph. 
For each unseen graph, we select the corresponding seen graph that leads to the highest ratio.
Then we compute the average score over the highest ratio of all unseen graphs (\Cref{fig:random_coloring_v2}).
\begin{align}
    & \Lambda\left( \zeta_s, \zeta_u \,|\, P_s, P_u\right) = \frac{ \left| \left\{ g_u \in P_u \,|\, \lambda\left( P_s, g_u \,|\, \zeta_s, \zeta_u \right) \geq 1 \right\} \right|}{\left|P_u\right|}\\
    &\bar{\Lambda}(\mathcal{G}_s, \mathcal{G}_u) = \frac{\left|\{ G_u \in \mathcal{G}_u \,|\, \max_{G_s \in \mathcal{G}_s} \Lambda(\zeta_s, \zeta_u \,|\, P_s, P_u)  \}\right|}{\left| \mathcal{G}_u\right|}
\end{align}

\begin{figure*}[h]
\centering
\begin{subfigure}{\textwidth}
    \centering
    \includegraphics[width=0.91\textwidth]{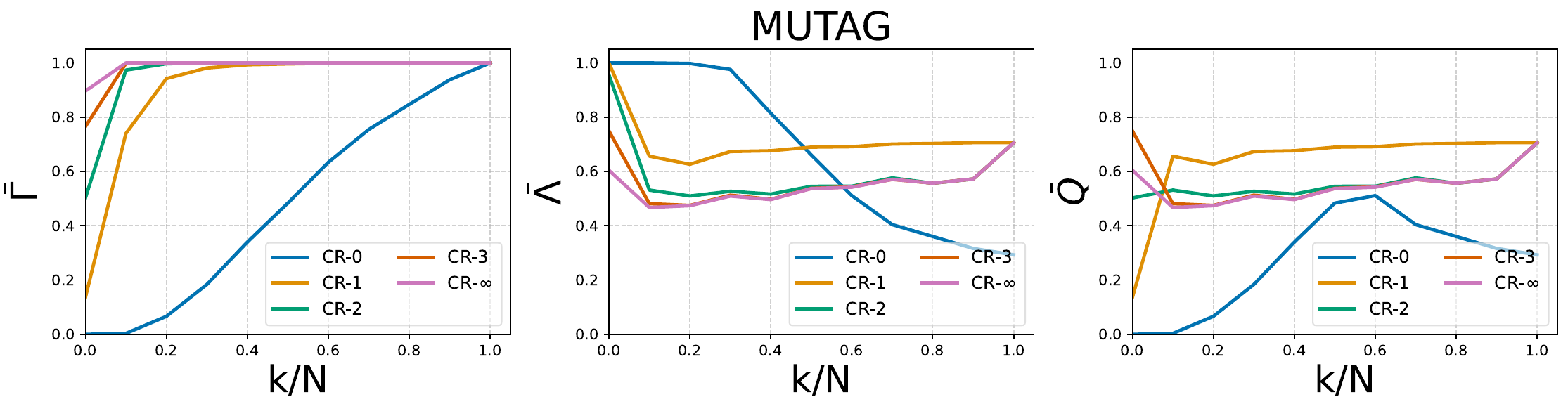}
\end{subfigure}

\begin{subfigure}{\textwidth}
    \centering
    \includegraphics[width=0.91\textwidth]{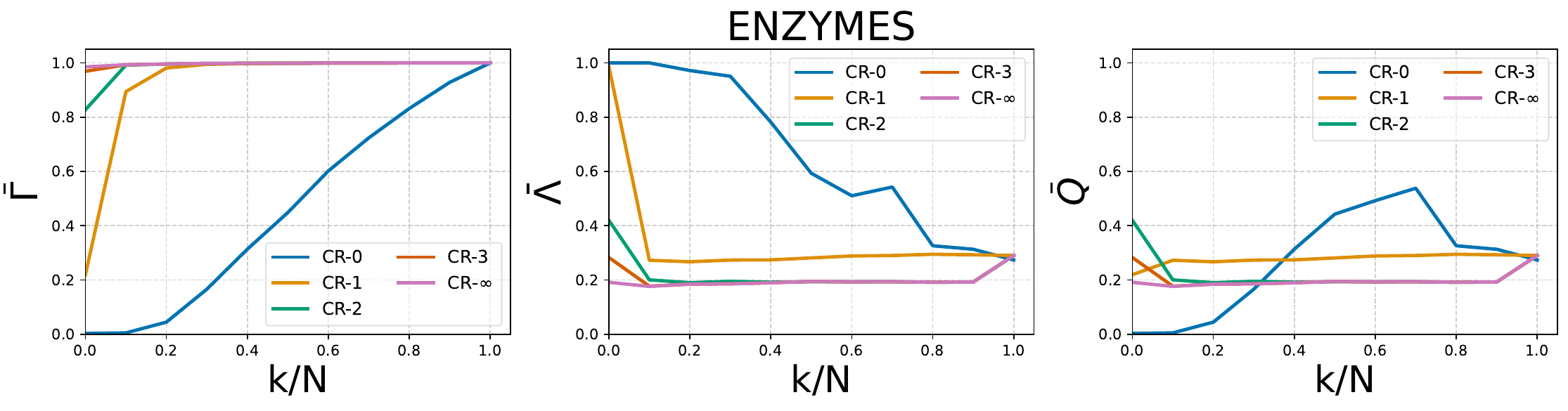}
\end{subfigure}

\begin{subfigure}{\textwidth}
    \centering
    \includegraphics[width=0.91\textwidth]{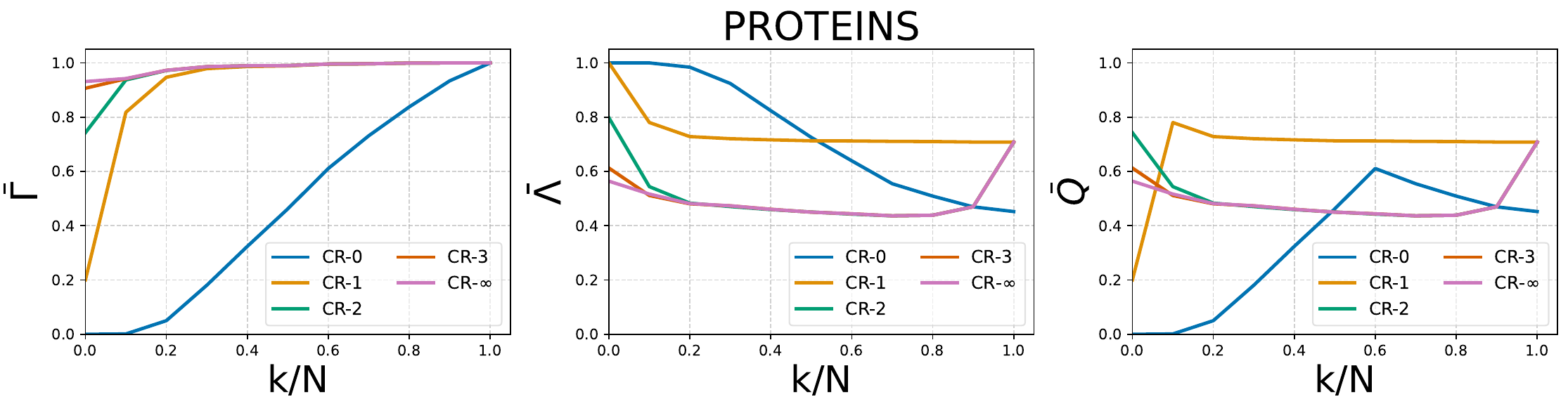}
\end{subfigure}

\begin{subfigure}{\textwidth}
    \centering
    \includegraphics[width=0.91\textwidth]{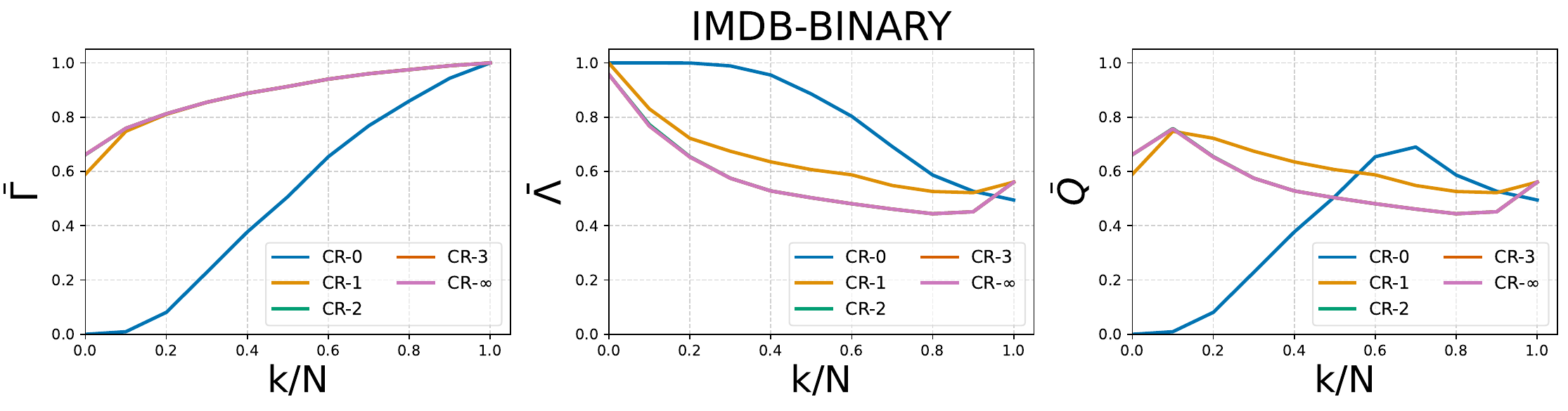}
\end{subfigure}

\begin{subfigure}{\textwidth}
    \centering
    \includegraphics[width=0.91\textwidth]{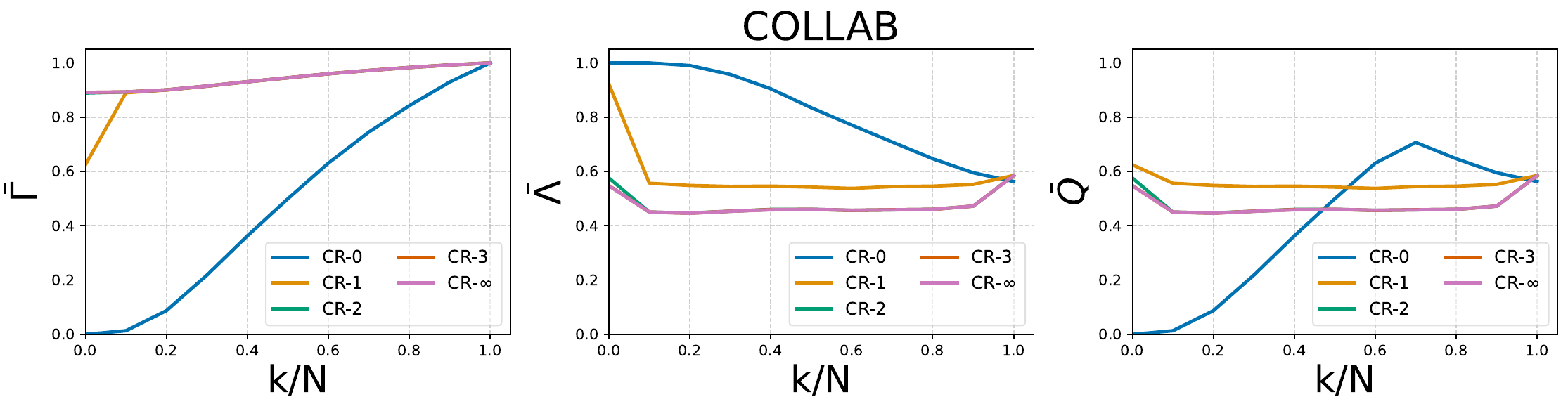}
\end{subfigure}
\caption{Ratio variant; considering the ratio to which a graph matches by counting the relative number of matching groups.}
\label{fig:random_coloring_v2}
\end{figure*}

\clearpage
\subsubsection{Transferable groups}
Each group in the unseen set is assigned a score of either 0 or 1, indicating whether it can be mapped to at least one group in the seen set. The final score is computed as the average over all groups in the unseen set. Here we calculate how many groups are transferable, not the graphs themselves (\Cref{fig:random_coloring_v3}).
\begin{align}
    &\Lambda\left( \zeta_u \,|\, \mathcal{G}_s, g_u\right) = \max_{g_s \in G_s \forall G_s \in \mathcal{G}_s} \mathbf{1}_{\zeta_s, \zeta_u}\left(g_s, g_u\right)\\
    &\bar{\Lambda}(\mathcal{G}_s, \mathcal{G}_u) = \frac{\sum_{g_u \in G_u \forall G_u \in \mathcal{G}_u} \Lambda\left( \zeta_u \,|\, \mathcal{G}_s, g_u\right)}{\sum_{G_u \in \mathcal{G}_u} |P_u|}
\end{align}

\begin{figure*}[h]
\centering
\begin{subfigure}{\textwidth}
    \centering
    \includegraphics[width=0.91\textwidth]{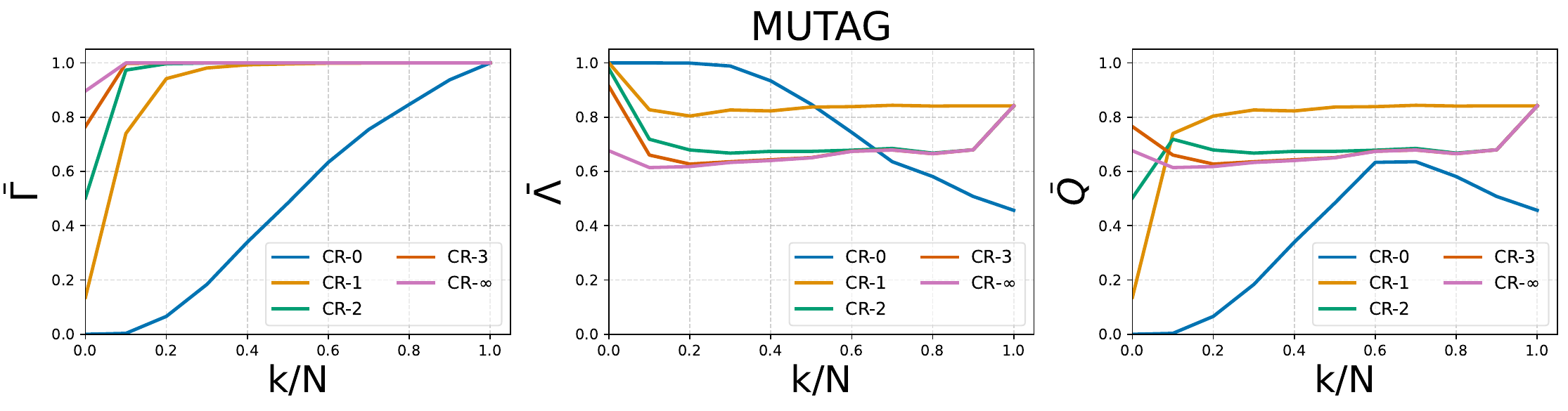}
\end{subfigure}

\begin{subfigure}{\textwidth}
    \centering
    \includegraphics[width=0.91\textwidth]{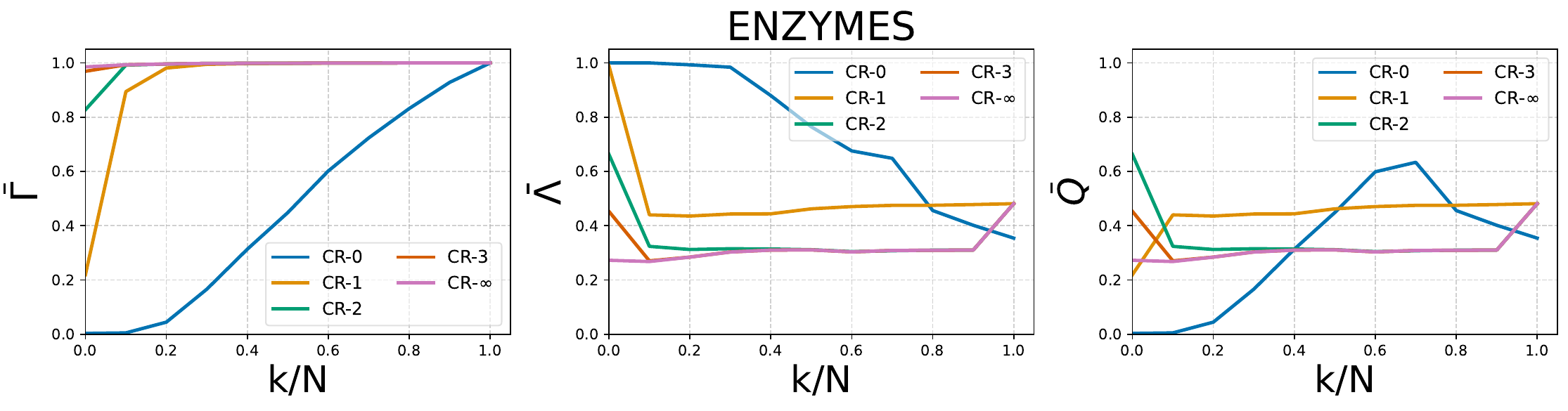}
\end{subfigure}

\begin{subfigure}{\textwidth}
    \centering
    \includegraphics[width=0.91\textwidth]{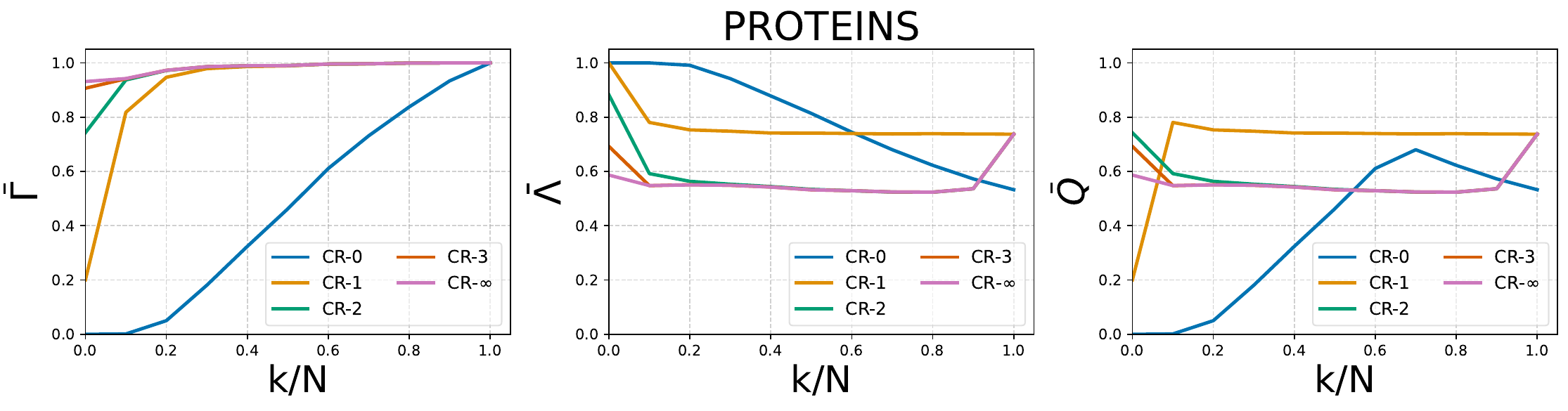}
\end{subfigure}

\begin{subfigure}{\textwidth}
    \centering
    \includegraphics[width=0.91\textwidth]{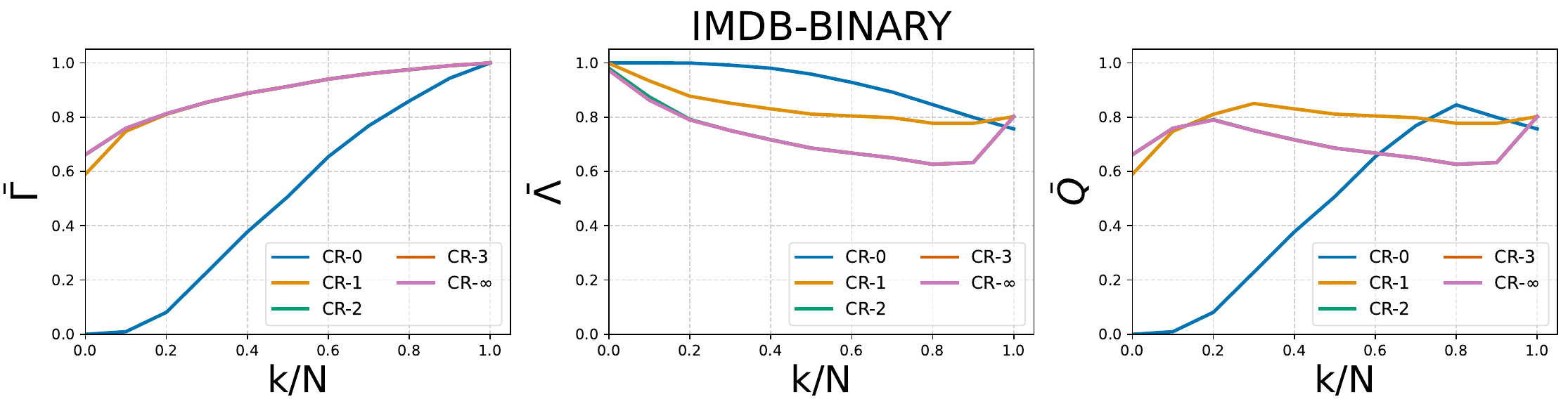}
\end{subfigure}

\begin{subfigure}{\textwidth}
    \centering
    \includegraphics[width=0.91\textwidth]{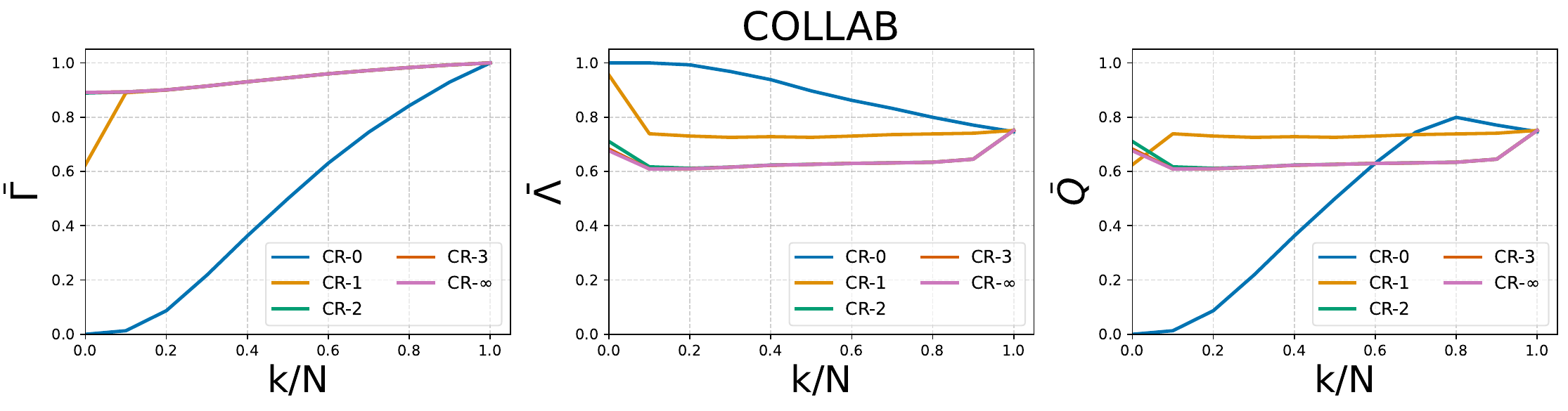}
\end{subfigure}
\caption{Group variant; counting the number of individual group matches without considering graph belonging.}
\label{fig:random_coloring_v3}
\end{figure*}

\clearpage
\section{Quality of colourings for additional datasets}
\label{sec:quality}

\begin{table*}[!th]
    \centering
    \resizebox{\linewidth}{!}{
    \begin{tabular}{l|ccc|ccc|ccc|ccc|ccc}
        \toprule
          & \multicolumn{3}{c|}{CR-0} & \multicolumn{3}{c|}{CR-1} &  \multicolumn{3}{c|}{CR-2} &  \multicolumn{3}{c|}{CR-3} & \multicolumn{3}{c}{CR-$\infty$} \\
           & $\overline{Q}$ & $\tau$ & $\overline{k/N}$  & $\overline{Q}$ & $\tau$ & $\overline{k/N}$ & $\overline{Q}$ & $\tau$ & $\overline{k/N}$ & $\overline{Q}$ & $\tau$ & $\overline{k/N}$ & $\overline{Q}$ &$\tau$ & $\overline{k/N}$\\
         \midrule
         Node Features & 0.03 & - & 0.19 & 0.03 & - & 0.19 & 0.00 & - & 0.19 & 0.00 & - & 0.19 & 0.00 & - & 0.19 \\
         \midrule
         Same      &  0.00 &  - &  0.04 &  0.21 &  - &  0.04 &  0.02 &  - &  0.04 &  0.00 &  - &  0.04 &  0.00 &  - &  0.04  \\
         Mixed     &  0.13 &  - &  0.51 &  0.02 &  - &  0.51 &  0.00 &  - &  0.51 &  0.00 &  - &  0.51 &  0.00 &  - &  0.51  \\
         Distinct  &  0.01 &  - &  1.00 &  0.03 &  - &  1.00 &  0.03 &  - &  1.00 &  0.03 &  - &  1.00 &  0.03 &  - &  1.00  \\
         \midrule
         Random Walk PE   &  \textbf{0.98} &  1.00 &  0.90 &  0.29 &  0.99 &  0.05 &  0.55 &  0.01 &  0.04 &  0.14 &  0.99 &  0.05 &  0.18 &  0.01 &  0.04 \\
         Laplacian PE     &  0.69 &  0.64 &  0.37 &  0.54 &  0.40 &  0.16 &  0.02 &  0.00 &  0.04 &  0.01 &  0.12 &  0.06 &  0.01 &  1.00 &  0.94 \\
         GPSE             &  0.50 &  0.98 &  0.26 &  0.44 &  0.96 &  0.12 &  0.03 &  1.00 &  1.00 &  0.03 &  1.00 &  1.00 &  0.03 &  1.00 &  1.00 \\
         Node2Vec PE      &  0.82 &  0.81 &  0.36 &  0.53 &  0.65 &  0.12 &  0.03 &  0.36 &  0.04 &  0.03 &  1.00 &  1.00 &  0.03 &  1.00 &  1.00 \\
         \bottomrule
    \end{tabular}
    }
    \caption{\textsc{ENZYMES}}
    \label{tab:qrdata_enzymes}
\end{table*}

\begin{table*}[!th]
    \centering
    \resizebox{\linewidth}{!}{
    \begin{tabular}{l|ccc|ccc|ccc|ccc|ccc}
        \toprule
          & \multicolumn{3}{c|}{CR-0} & \multicolumn{3}{c|}{CR-1} &  \multicolumn{3}{c|}{CR-2} &  \multicolumn{3}{c|}{CR-3} & \multicolumn{3}{c}{CR-$\infty$} \\
           & $\overline{Q}$ & $\tau$ & $\overline{k/N}$  & $\overline{Q}$ & $\tau$ & $\overline{k/N}$ & $\overline{Q}$ & $\tau$ & $\overline{k/N}$ & $\overline{Q}$ & $\tau$ & $\overline{k/N}$ & $\overline{Q}$ &$\tau$ & $\overline{k/N}$\\
         \midrule
         Node Features & 0.04 & - & 0.19 & 0.58 & - & 0.19 & 0.16 & - & 0.19 & 0.03 & - & 0.19 & 0.01 & - & 0.19     \\
         \midrule
         Same      &  0.00 &  - &  0.05 &  0.18 &  - &  0.05 &  0.34 &  - &  0.05 &  0.06 &  - &  0.05 &  0.02 &  - &  0.05  \\
         Mixed     &  0.31 &  - &  0.51 &  0.32 &  - &  0.51 &  0.01 &  - &  0.51 &  0.02 &  - &  0.51 &  0.02 &  - &  0.51  \\
         Distinct  &  0.01 &  - &  1.00 &  0.31 &  - &  1.00 &  0.31 &  - &  1.00 &  0.31 &  - &  1.00 &  0.31 &  - &  1.00  \\
         \midrule
         Random Walk PE   &  0.37 &  1.00 &  0.88 &  0.36 &  1.00 &  0.88 &  0.35 &  0.98 &  0.04 &  0.22 &  1.00 &  0.88 &  0.22 &  1.00 &  0.88 \\
         Laplacian PE     &  0.49 &  0.50 &  0.32 &  0.53 &  0.39 &  0.23 &  0.31 &  0.00 &  0.07 &  0.13 &  1.00 &  0.94 &  0.13 &  1.00 &  0.94 \\
         GPSE             &  0.49 &  0.99 &  0.50 &  0.61 &  0.98 &  0.25 &  0.36 &  0.91 &  0.06 &  0.31 &  1.00 &  1.00 &  0.31 &  1.00 &  1.00 \\
         Node2Vec PE      &  \textbf{0.65} &  0.76 &  0.27 &  0.59 &  0.67 &  0.16 &  0.34 &  0.00 &  0.04 &  0.31 &  1.00 &  1.00 &  0.31 &  1.00 &  1.00 \\
         \bottomrule
    \end{tabular}
    }
    \caption{\textsc{PROTEINS}}
    \label{tab:qrdata_proteins}
\end{table*}

\begin{table*}[!th]
    \centering
    \resizebox{\linewidth}{!}{
    \begin{tabular}{l|ccc|ccc|ccc|ccc|ccc}
        \toprule
          & \multicolumn{3}{c|}{CR-0} & \multicolumn{3}{c|}{CR-1} &  \multicolumn{3}{c|}{CR-2} &  \multicolumn{3}{c|}{CR-3} & \multicolumn{3}{c}{CR-$\infty$} \\
           & $\overline{Q}$ & $\tau$ & $\overline{k/N}$  & $\overline{Q}$ & $\tau$ & $\overline{k/N}$ & $\overline{Q}$ & $\tau$ & $\overline{k/N}$ & $\overline{Q}$ & $\tau$ & $\overline{k/N}$ & $\overline{Q}$ &$\tau$ & $\overline{k/N}$\\
         \midrule
         Node Features & 0.00 & - & 0.19 & 0.59 & - & 0.19 & 0.66 & - & 0.19 & 0.66 & - & 0.19 & 0.66 & - & 0.19     \\
         \midrule
         Same      &  0.00 &  - &  0.06 &  0.59 &  - &  0.06 &  0.66 &  - &  0.06 &  0.66 &  - &  0.06 &  0.66 &  - &  0.06  \\
         Mixed     &  0.51 &  - &  0.51 &  0.23 &  - &  0.51 &  0.09 &  - &  0.51 &  0.09 &  - &  0.51 &  0.09 &  - &  0.51  \\
         Distinct  &  0.01 &  - &  1.00 &  0.19 &  - &  1.00 &  0.19 &  - &  1.00 &  0.19 &  - &  1.00 &  0.19 &  - &  1.00  \\
         \midrule
         Random Walk PE   &  0.68 &  1.00 &  0.40 &  0.67 &  1.00 &  0.40 &  0.68 &  1.00 &  0.40 &  0.68 &  1.00 &  0.40 &  0.68 &  1.00 &  0.40 \\
         Laplacian PE     &  0.87 &  0.50 &  0.53 &  0.84 &  0.18 &  0.35 &  0.77 &  0.04 &  0.20 &  0.78 &  0.04 &  0.20 &  0.78 &  0.04 &  0.20 \\
         GPSE             &  0.54 &  0.99 &  0.18 &  0.64 &  0.99 &  0.18 &  0.67 &  0.99 &  0.18 &  0.67 &  0.99 &  0.18 &  0.67 &  0.99 &  0.18 \\
         Node2Vec PE      &  \textbf{0.88} &  0.71 &  0.45 &  0.88 &  0.59 &  0.34 &  0.84 &  0.44 &  0.24 &  0.84 &  0.44 &  0.24 &  0.84 &  0.44 &  0.24 \\
         \bottomrule
    \end{tabular}
    }
    \caption{\textsc{IMDB-BINARY}}
    \label{tab:qrdata_imdb}
\end{table*}

\begin{table*}[!th]
    \centering
    \resizebox{\linewidth}{!}{
    \begin{tabular}{l|ccc|ccc|ccc|ccc|ccc}
        \toprule
          & \multicolumn{3}{c|}{CR-0} & \multicolumn{3}{c|}{CR-1} &  \multicolumn{3}{c|}{CR-2} &  \multicolumn{3}{c|}{CR-3} & \multicolumn{3}{c}{CR-$\infty$} \\
           & $\overline{Q}$ & $\tau$ & $\overline{k/N}$  & $\overline{Q}$ & $\tau$ & $\overline{k/N}$ & $\overline{Q}$ & $\tau$ & $\overline{k/N}$ & $\overline{Q}$ & $\tau$ & $\overline{k/N}$ & $\overline{Q}$ &$\tau$ & $\overline{k/N}$\\
         \midrule
         Node Features & 0.00 & - & 0.19 & 0.62 & - & 0.19 & 0.17 & - & 0.19 & 0.17 & - & 0.19 & 0.17 & - & 0.19     \\
         \midrule
         Same      &  0.00 &  - &  0.02 &  0.62 &  - &  0.02 &  0.17 &  - &  0.02 &  0.17 &  - &  0.02 &  0.17 &  - &  0.02  \\
         Mixed     &  0.47 &  - &  0.50 &  0.03 &  - &  0.50 &  0.01 &  - &  0.50 &  0.01 &  - &  0.50 &  0.01 &  - &  0.50  \\
         Distinct  &  0.00 &  - &  1.00 &  0.10 &  - &  1.00 &  0.10 &  - &  1.00 &  0.10 &  - &  1.00 &  0.10 &  - &  1.00  \\
         \midrule
         Random Walk PE   &  0.41 &  1.00 &  0.62 &  0.66 &  0.98 &  0.03 &  0.40 &  1.00 &  0.62 &  0.40 &  1.00 &  0.62 &  0.40 &  1.00 &  0.62 \\
         Laplacian PE     &  0.71 &  0.65 &  0.21 &  0.71 &  0.28 &  0.05 &  0.21 &  0.98 &  0.55 &  0.21 &  0.98 &  0.55 &  0.21 &  0.98 &  0.55 \\
         GPSE             &  0.52 &  0.99 &  0.10 &  0.65 &  0.95 &  0.04 &  0.18 &  0.93 &  0.03 &  0.18 &  0.99 &  0.10 &  0.18 &  0.99 &  0.10 \\
         Node2Vec PE      &  \textbf{0.78} &  0.77 &  0.26 &  0.70 &  0.44 &  0.04 &  0.18 &  0.04 &  0.02 &  0.17 &  0.00 &  0.02 &  0.17 &  0.00 &  0.02 \\
         \bottomrule
    \end{tabular}
    }
    \caption{\textsc{COLLAB}}
    \label{tab:qrdata_collab}
\end{table*}

\newpage
\section{Experimental Setup and Reproducibility}
\label{sec:setup}
For our experiments, we compare different pooling methods under a unified experimental setup. 
We use standard datasets and models provided by PyG and apply the same set of hyperparameters across all models, including a learning rate of $1e-3$, the AdamW optimiser, a dropout rate of 0.5, 200 hidden channels, and a PE dimension of 6. 
We train for 15000 epochs with early stopping and patience of 50.
We fix an $80/10/10$ split and repeat each experiment 5 times with different seeds.
The used datasets are released as part of the \textit{TUDataset} collection by \cite{Morris+2020}.
We use the \textsc{Mutag}, \textsc{PROTEINS}, \textsc{ENZYMES}, \textsc{COLLAB}, and \textsc{IMDB-BINARY} datasets in our analysis and double the list of datasets for the main study by also including \textsc{COLORS}~\cite{castellana2025bnpoolbayesiannonparametricapproach}, \textsc{DD}, \textsc{Mutagenicity}, \textsc{NCI1}, and \textsc{REDDIT-BINARY}.
\Cref{alg:setup} presents the model procedure and our proposed enhancement.

Our implementation is available on GitHub at \url{https://github.com/jvpichowski-research/2026-features-for-graph-pooling}. 
All datasets and positional encodings are obtained with PyG.
We report standard deviations in \Cref{tab:main} and discuss influencing factors in the main text in \Cref{sec:guide}. 

Experiments are executed on an Nvidia L40.
The training and inference runtimes are comparable to the reference implementations provided by PyG, as our modifications do not introduce additional computational complexity to the models.

\begin{figure}[h!]
\centering
\begin{minipage}{\textwidth}
\begin{algorithm}[H]
\caption{Pooling GNN - Forward Pass}\label{alg:setup}
\begin{algorithmic}
\REQUIRE $X, A, \textsc{Objective}_{\textsc{pool}}$
\ENSURE $X_{\mathcal{G}}, \mathcal{L}_{\textsc{aux}}$
\STATE \textcolor{BrickRed}{$\rho \gets \textsc{Laplacian}(A)$}
\STATE \textcolor{gray}{$X' \gets \sigma_{\textsc{ReLU}}(\textsc{Conv}_{\Theta_1}(X, A))$}
\STATE \textcolor{BrickRed}{$H \gets [\textsc{Detach}(X), \rho]$}
\STATE $S \gets \sigma_{\textsc{Softmax}}(HW_{\Theta_p})$ \hspace{4em}\COMMENT{ SEL }
\STATE $X_p \gets S^TX'$ \hspace{8.3em}\COMMENT{ RED }
\STATE $A_p \gets S^TAS$ \hspace{8.2em}\COMMENT{ CON }
\STATE \textcolor{gray}{$X'_p \gets \sigma_{\textsc{ReLU}}(\textsc{Conv}_{\Theta_2}(X_p, A_p))$}
\STATE \textcolor{gray}{$X_{\mathcal{G}} \gets \textsc{Readout}_{\textsc{max}}(X_p')$}
\STATE $\mathcal{L}_{\textsc{aux}} \gets \textsc{Objective}_{\textsc{pool}}(S, A)$
\end{algorithmic}
\end{algorithm}
\caption*{\Cref{alg:setup}: Adapted pooling procedure. \textcolor{gray}{Gray:} The base (\textit{nopool}) GNN for graph classification consists of two convolution layers and a max read-out function. Black: The SRC-pooling framework adds the SEL, RED and CON operations based on a linear layer together with a pooling objective. \textcolor{BrickRed}{Red:} Our adaptation feeds the Laplacian PE with concatenation directly into the SEL operation. For efficiency, the Laplacian PE is calculated once in advance and reused in every pass. The intermediate embeddings are detached such that the correctly working pooling does not influence the downstream task GNN.}
\end{minipage}
\end{figure}

\newpage

\section{Detailed Results}
\label{sec:main-tab}

\input{main-tab}


\end{document}

%% file: main-tab.tex
\begin{table*}[!htb]
\centering
\resizebox{0.9\textwidth}{!}{%
\small
\setlength{\tabcolsep}{3pt}
\begin{tabular}{l|c@{}H:c@{}H:c@{}H:c@{}H:c@{}H}
\toprule
Operator 
& \multicolumn{2}{c|}{COLLAB}
& \multicolumn{2}{c|}{COLORS-3}
& \multicolumn{2}{c|}{DD}
& \multicolumn{2}{c|}{ENZYMES}
& \multicolumn{2}{c}{IMDB-BINARY} \\
\cmidrule(lr){2-3}\cmidrule(lr){4-5}\cmidrule(lr){6-7}\cmidrule(lr){8-9}\cmidrule(lr){10-11}
 & ACC & LOSS & ACC & LOSS & ACC & LOSS & ACC & LOSS & ACC & LOSS \\
\midrule
diffpool (laplacian) & $51.36 \pm 7.69$ & $0.01 \pm 0.01$ & $73.78 \pm 14.84$ & $0.41 \pm 0.23$ & $59.32 \pm 0.00$ & $0.01 \pm 0.01$ & $17.29 \pm 5.00$ & $0.11 \pm 0.08$ & $50.80 \pm 1.79$ & $0.04 \pm 0.03$ \\
diffpool (node2vec) & $48.28 \pm 5.55$ & $0.04 \pm 0.05$ & $58.88 \pm 7.16$ & $0.27 \pm 0.05$ & $59.32 \pm 0.00$ & $0.04 \pm 0.03$ & ${19.32 \pm 4.01}$ & $0.05 \pm 0.06$ & ${51.20 \pm 2.68}$ & $0.04 \pm 0.04$ \\
diffpool (gpse) & $50.76 \pm 7.12$ & $0.04 \pm 0.05$ & $68.36 \pm 13.44$ & $0.49 \pm 0.25$ & $59.32 \pm 0.00$ & $0.04 \pm 0.07$ & $14.92 \pm 7.33$ & $0.09 \pm 0.05$ & $50.00 \pm 0.00$ & $0.05 \pm 0.07$ \\
diffpool (rw) & $45.92 \pm 0.27$ & $0.05 \pm 0.08$ & $72.83 \pm 5.40$ & $0.40 \pm 0.24$ & $59.32 \pm 0.00$ & $0.08 \pm 0.11$ & $12.88 \pm 5.44$ & $0.02 \pm 0.02$ & $50.00 \pm 0.00$ & $0.05 \pm 0.06$ \\
diffpool (none) & ${56.92 \pm 4.77}$ & $0.14 \pm 0.25$ & ${81.88 \pm 1.48}$ & $0.54 \pm 0.10$ & ${67.63 \pm 11.38}$ & $0.07 \pm 0.14$ & $8.47 \pm 0.00$ & $0.00 \pm 0.00$ & ${51.20 \pm 1.10}$ & $0.00 \pm 0.00$ \\
\hdashline
dmon (laplacian) & ${69.80 \pm 1.44}$ & $0.43 \pm 0.00$ & $93.82 \pm 0.87$ & $1.35 \pm 0.00$ & $77.97 \pm 1.59$ & $0.51 \pm 0.00$ & $43.73 \pm 3.26$ & $0.20 \pm 0.01$ & ${62.80 \pm 2.59}$ & $0.12 \pm 0.01$ \\
dmon (node2vec) & $67.08 \pm 0.58$ & $0.42 \pm 0.00$ & $93.09 \pm 0.79$ & $1.35 \pm 0.00$ & $76.61 \pm 1.14$ & $0.52 \pm 0.00$ & $42.37 \pm 3.20$ & $0.21 \pm 0.00$ & $52.80 \pm 7.43$ & $0.16 \pm 0.00$ \\
dmon (gpse) & $69.08 \pm 1.04$ & $0.43 \pm 0.00$ & $95.16 \pm 0.75$ & $1.35 \pm 0.00$ & $78.81 \pm 1.99$ & $0.51 \pm 0.00$ & $51.19 \pm 2.21$ & $0.22 \pm 0.00$ & $52.80 \pm 7.53$ & $0.16 \pm 0.00$ \\
dmon (rw) & $68.60 \pm 0.60$ & $0.43 \pm 0.00$ & ${95.21 \pm 0.55}$ & $1.36 \pm 0.00$ & ${79.83 \pm 1.63}$ & $0.51 \pm 0.00$ & ${53.90 \pm 6.93}$ & $0.24 \pm 0.00$ & $49.80 \pm 2.28$ & $0.16 \pm 0.00$ \\
dmon (none) & $65.16 \pm 0.67$ & $0.42 \pm 0.00$ & $94.90 \pm 0.48$ & $1.36 \pm 0.00$ & $78.81 \pm 1.34$ & $0.51 \pm 0.00$ & $50.85 \pm 6.23$ & $0.23 \pm 0.00$ & $52.00 \pm 7.38$ & $0.16 \pm 0.00$ \\
\hdashline
jbpool (laplacian) & $66.56 \pm 0.33$ & $-0.19 \pm 0.02$ & $91.62 \pm 1.54$ & $-0.10 \pm 0.01$ & ${78.47 \pm 1.29}$ & $-0.06 \pm 0.00$ & $31.86 \pm 6.39$ & $-0.20 \pm 0.02$ & $49.00 \pm 1.73$ & $-0.28 \pm 0.04$ \\
jbpool (node2vec) & $66.68 \pm 0.92$ & $-0.19 \pm 0.02$ & $89.76 \pm 1.20$ & $-0.13 \pm 0.01$ & $75.93 \pm 4.51$ & $-0.17 \pm 0.01$ & $30.85 \pm 1.42$ & $-0.37 \pm 0.05$ & $50.80 \pm 3.27$ & $-0.30 \pm 0.04$ \\
jbpool (gpse) & $66.04 \pm 2.03$ & $-0.12 \pm 0.00$ & $93.41 \pm 1.57$ & $-0.11 \pm 0.01$ & ${78.47 \pm 2.29}$ & $-0.05 \pm 0.00$ & $31.19 \pm 6.52$ & $-0.18 \pm 0.00$ & $52.60 \pm 5.13$ & $-0.22 \pm 0.01$ \\
jbpool (rw) & ${67.48 \pm 0.79}$ & $-0.11 \pm 0.00$ & $93.87 \pm 1.46$ & $-0.11 \pm 0.01$ & ${78.47 \pm 2.58}$ & $-0.06 \pm 0.00$ & $26.44 \pm 5.03$ & $-0.19 \pm 0.00$ & ${54.20 \pm 2.95}$ & $-0.21 \pm 0.01$ \\
jbpool (none) & $65.40 \pm 1.21$ & $-0.08 \pm 0.00$ & ${95.44 \pm 0.49}$ & $-0.11 \pm 0.00$ & $77.63 \pm 1.29$ & $-0.05 \pm 0.00$ & ${47.12 \pm 6.16}$ & $-0.23 \pm 0.01$ & $50.20 \pm 0.45$ & $-0.21 \pm 0.00$ \\
\hdashline
mdlpool (laplacian) & $45.80 \pm 0.00$ & $5.77 \pm 0.00$ & $39.11 \pm 15.58$ & $5.56 \pm 0.51$ & ${59.32 \pm 0.00}$ & $7.46 \pm 0.38$ & $15.59 \pm 4.70$ & $4.93 \pm 0.02$ & ${50.00 \pm 0.00}$ & $4.04 \pm 0.03$ \\
mdlpool (node2vec) & $45.80 \pm 0.00$ & $5.77 \pm 0.00$ & ${41.36 \pm 7.84}$ & $5.48 \pm 0.29$ & ${59.32 \pm 0.00}$ & $6.83 \pm 0.11$ & ${17.97 \pm 5.13}$ & $4.78 \pm 0.08$ & ${50.00 \pm 0.00}$ & $4.06 \pm 0.00$ \\
mdlpool (gpse) & $45.80 \pm 0.00$ & $5.77 \pm 0.00$ & $19.85 \pm 11.74$ & $5.81 \pm 0.43$ & ${59.32 \pm 0.00}$ & $7.83 \pm 0.02$ & $8.47 \pm 0.00$ & $5.12 \pm 0.21$ & ${50.00 \pm 0.00}$ & $4.06 \pm 0.00$ \\
mdlpool (rw) & $45.80 \pm 0.00$ & $5.77 \pm 0.00$ & $32.87 \pm 15.87$ & $5.56 \pm 0.25$ & ${59.32 \pm 0.00}$ & $7.86 \pm 0.06$ & $10.51 \pm 3.67$ & $4.94 \pm 0.02$ & ${50.00 \pm 0.00}$ & $4.06 \pm 0.00$ \\
mdlpool (none) & ${50.12 \pm 6.08}$ & $5.77 \pm 0.00$ & $8.21 \pm 0.00$ & $5.33 \pm 0.12$ & ${59.32 \pm 0.00}$ & $8.31 \pm 1.03$ & $8.47 \pm 0.00$ & $4.93 \pm 0.00$ & ${50.00 \pm 0.00}$ & $4.06 \pm 0.00$ \\
\hdashline
mincut (laplacian) & ${70.04 \pm 1.50}$ & $0.30 \pm 0.00$ & $94.30 \pm 0.65$ & $0.29 \pm 0.00$ & $78.31 \pm 1.29$ & $0.35 \pm 0.00$ & $54.24 \pm 2.94$ & $0.18 \pm 0.00$ & $50.40 \pm 6.99$ & $0.17 \pm 0.00$ \\
mincut (node2vec) & $67.88 \pm 1.08$ & $0.30 \pm 0.00$ & $94.00 \pm 0.82$ & $0.29 \pm 0.00$ & $75.42 \pm 1.80$ & $0.35 \pm 0.00$ & $41.02 \pm 4.58$ & $0.24 \pm 0.00$ & $48.40 \pm 1.52$ & $0.17 \pm 0.00$ \\
mincut (gpse) & $68.44 \pm 0.86$ & $0.29 \pm 0.00$ & $95.55 \pm 1.01$ & $0.29 \pm 0.00$ & $79.83 \pm 1.52$ & $0.35 \pm 0.00$ & $49.83 \pm 3.08$ & $0.23 \pm 0.00$ & $51.20 \pm 6.26$ & $0.17 \pm 0.00$ \\
mincut (rw) & $68.56 \pm 0.22$ & $0.30 \pm 0.00$ & $95.62 \pm 0.66$ & $0.26 \pm 0.00$ & ${80.00 \pm 1.54}$ & $0.35 \pm 0.00$ & ${55.25 \pm 4.42}$ & $0.24 \pm 0.00$ & $47.60 \pm 2.61$ & $0.17 \pm 0.00$ \\
mincut (none) & $66.48 \pm 0.18$ & $0.29 \pm 0.00$ & ${95.72 \pm 0.41}$ & $0.29 \pm 0.00$ & $78.98 \pm 1.84$ & $0.35 \pm 0.00$ & $53.22 \pm 6.41$ & $0.23 \pm 0.00$ & ${52.60 \pm 7.60}$ & $0.17 \pm 0.00$ \\
\hdashline
nopool & $67.04 \pm 0.48$ & $0.00 \pm 0.00$ & $35.31 \pm 1.88$ & $0.00 \pm 0.00$ & $77.63 \pm 2.04$ & $0.00 \pm 0.00$ & $47.46 \pm 2.68$ & $0.00 \pm 0.00$ & $61.00 \pm 3.54$ & $0.00 \pm 0.00$ \\
\bottomrule
\end{tabular}%
}

\vspace{0.5em}

\resizebox{0.88\textwidth}{!}{%
\small
\setlength{\tabcolsep}{3pt}
\begin{tabular}{l|c@{}H:c@{}H:c@{}H:c@{}H:c@{}H}
\toprule
Operator 
& \multicolumn{2}{c|}{MUTAG}
& \multicolumn{2}{c|}{Mutagenicity}
& \multicolumn{2}{c|}{NCI1}
& \multicolumn{2}{c|}{PROTEINS}
& \multicolumn{2}{c}{REDDIT-BINARY} \\
\cmidrule(lr){2-3}\cmidrule(lr){4-5}\cmidrule(lr){6-7}\cmidrule(lr){8-9}\cmidrule(lr){10-11}
 & ACC & LOSS & ACC & LOSS & ACC & LOSS & ACC & LOSS & ACC & LOSS \\
\midrule
diffpool (laplacian) & $92.63 \pm 4.71$ & $0.06 \pm 0.13$ & $73.44 \pm 11.06$ & $0.04 \pm 0.08$ & $63.02 \pm 6.34$ & $0.08 \pm 0.03$ & $66.27 \pm 2.36$ & $0.06 \pm 0.03$ & ${75.70 \pm 9.28}$ & $0.02 \pm 0.04$ \\
diffpool (node2vec) & $90.53 \pm 6.86$ & $0.08 \pm 0.08$ & ${79.95 \pm 1.23}$ & $0.01 \pm 0.00$ & ${71.37 \pm 0.80}$ & $0.01 \pm 0.00$ & ${67.06 \pm 4.25}$ & $0.02 \pm 0.01$ & $62.10 \pm 12.64$ & $0.11 \pm 0.07$ \\
diffpool (gpse) & $90.53 \pm 9.42$ & $0.05 \pm 0.08$ & $77.66 \pm 2.11$ & $0.03 \pm 0.05$ & $64.73 \pm 6.08$ & $0.11 \pm 0.07$ & $60.39 \pm 0.88$ & $0.03 \pm 0.02$ & $63.10 \pm 5.56$ & $0.11 \pm 0.15$ \\
diffpool (rw) & $92.63 \pm 4.71$ & $0.01 \pm 0.00$ & $78.17 \pm 0.56$ & $0.02 \pm 0.02$ & $63.80 \pm 3.57$ & $0.06 \pm 0.04$ & $62.35 \pm 2.36$ & $0.05 \pm 0.03$ & $72.10 \pm 11.36$ & $0.00 \pm 0.00$ \\
diffpool (none) & ${93.68 \pm 2.35}$ & $0.01 \pm 0.00$ & $54.99 \pm 4.61$ & $0.00 \pm 0.00$ & $68.88 \pm 2.55$ & $0.00 \pm 0.00$ & $66.86 \pm 9.69$ & $0.01 \pm 0.01$ & $64.40 \pm 6.86$ & $0.03 \pm 0.07$ \\
\hdashline
dmon (laplacian) & $89.47 \pm 3.72$ & $0.06 \pm 0.02$ & ${82.25 \pm 1.32}$ & $0.16 \pm 0.01$ & $74.73 \pm 2.02$ & $0.03 \pm 0.00$ & $72.35 \pm 2.89$ & $0.74 \pm 0.00$ & $85.60 \pm 0.74$ & $2.61 \pm 0.00$ \\
dmon (node2vec) & $88.42 \pm 4.40$ & $0.06 \pm 0.01$ & $80.80 \pm 1.30$ & $0.23 \pm 0.00$ & $73.46 \pm 1.28$ & $0.12 \pm 0.00$ & ${74.90 \pm 1.12}$ & $0.75 \pm 0.00$ & $81.30 \pm 1.15$ & $2.62 \pm 0.00$ \\
dmon (gpse) & ${91.58 \pm 2.88}$ & $0.08 \pm 0.00$ & $80.28 \pm 1.44$ & $0.27 \pm 0.00$ & $75.37 \pm 0.90$ & $0.17 \pm 0.00$ & $73.33 \pm 0.82$ & $0.76 \pm 0.00$ & ${92.10 \pm 1.60}$ & $2.61 \pm 0.00$ \\
dmon (rw) & ${91.58 \pm 2.88}$ & $0.08 \pm 0.00$ & $81.03 \pm 1.78$ & $0.27 \pm 0.00$ & ${76.05 \pm 0.87}$ & $0.16 \pm 0.00$ & $74.31 \pm 0.82$ & $0.76 \pm 0.00$ & $89.80 \pm 2.75$ & $2.61 \pm 0.00$ \\
dmon (none) & $89.47 \pm 3.72$ & $0.08 \pm 0.00$ & $81.78 \pm 0.73$ & $0.27 \pm 0.00$ & $75.02 \pm 1.34$ & $0.16 \pm 0.00$ & $72.94 \pm 1.12$ & $0.76 \pm 0.00$ & $82.20 \pm 0.84$ & $2.61 \pm 0.00$ \\
\hdashline
jbpool (laplacian) & $88.42 \pm 4.40$ & $-0.53 \pm 0.07$ & $78.74 \pm 1.23$ & $-0.25 \pm 0.08$ & $67.85 \pm 1.87$ & $-0.27 \pm 0.07$ & $71.37 \pm 1.28$ & $-0.16 \pm 0.04$ & $85.10 \pm 2.30$ & $-0.03 \pm 0.00$ \\
jbpool (node2vec) & $81.05 \pm 7.98$ & $-0.58 \pm 0.03$ & $81.64 \pm 1.61$ & $-0.45 \pm 0.01$ & $69.32 \pm 1.41$ & $-0.43 \pm 0.02$ & $73.73 \pm 2.72$ & $-0.26 \pm 0.02$ & $80.80 \pm 1.04$ & $-0.04 \pm 0.00$ \\
jbpool (gpse) & ${93.68 \pm 2.35}$ & $-0.32 \pm 0.04$ & $78.74 \pm 1.46$ & $-0.18 \pm 0.01$ & $68.05 \pm 2.46$ & $-0.19 \pm 0.02$ & $73.14 \pm 1.78$ & $-0.11 \pm 0.00$ & $89.10 \pm 1.78$ & $-0.03 \pm 0.00$ \\
jbpool (rw) & $92.63 \pm 2.88$ & $-0.37 \pm 0.04$ & $77.52 \pm 0.88$ & $-0.20 \pm 0.03$ & $69.76 \pm 0.75$ & $-0.21 \pm 0.05$ & ${75.88 \pm 2.03}$ & $-0.11 \pm 0.00$ & ${91.50 \pm 1.32}$ & $-0.03 \pm 0.00$ \\
jbpool (none) & $89.47 \pm 0.00$ & $-0.38 \pm 0.00$ & ${83.23 \pm 0.43}$ & $-0.29 \pm 0.00$ & ${72.24 \pm 1.61}$ & $-0.32 \pm 0.01$ & $72.55 \pm 1.55$ & $-0.11 \pm 0.01$ & $82.60 \pm 0.89$ & $-0.03 \pm 0.00$ \\
\hdashline
mdlpool (laplacian) & ${94.74 \pm 0.00}$ & $3.99 \pm 0.00$ & $74.85 \pm 5.64$ & $4.57 \pm 0.00$ & ${69.90 \pm 5.76}$ & $4.65 \pm 0.00$ & $64.12 \pm 4.08$ & $4.59 \pm 0.42$ & $47.10 \pm 2.75$ & $6.52 \pm 0.00$ \\
mdlpool (node2vec) & ${94.74 \pm 0.00}$ & $3.99 \pm 0.00$ & ${79.39 \pm 1.91}$ & $4.59 \pm 0.02$ & $68.05 \pm 9.06$ & $4.57 \pm 0.06$ & ${68.24 \pm 3.83}$ & $4.65 \pm 0.03$ & ${51.20 \pm 9.07}$ & $6.56 \pm 0.07$ \\
mdlpool (gpse) & ${94.74 \pm 0.00}$ & $3.99 \pm 0.00$ & $75.27 \pm 1.73$ & $4.57 \pm 0.00$ & $69.66 \pm 2.38$ & $4.65 \pm 0.00$ & $62.35 \pm 2.36$ & $4.77 \pm 0.01$ & $50.90 \pm 8.58$ & $6.58 \pm 0.14$ \\
mdlpool (rw) & ${94.74 \pm 0.00}$ & $3.99 \pm 0.00$ & $73.72 \pm 11.25$ & $4.57 \pm 0.00$ & $61.80 \pm 6.41$ & $4.65 \pm 0.00$ & $60.98 \pm 2.63$ & $4.85 \pm 0.20$ & $50.80 \pm 4.72$ & $6.52 \pm 0.00$ \\
mdlpool (none) & ${94.74 \pm 0.00}$ & $3.99 \pm 0.00$ & $57.33 \pm 6.04$ & $4.57 \pm 0.00$ & $60.00 \pm 9.62$ & $4.65 \pm 0.00$ & $63.33 \pm 7.89$ & $4.76 \pm 0.00$ & $46.60 \pm 2.53$ & $8.03 \pm 1.03$ \\
\hdashline
mincut (laplacian) & ${92.63 \pm 2.88}$ & $0.14 \pm 0.04$ & ${81.87 \pm 1.26}$ & $0.17 \pm 0.02$ & $75.12 \pm 0.71$ & $0.12 \pm 0.04$ & $70.98 \pm 2.03$ & $0.21 \pm 0.00$ & $86.00 \pm 1.17$ & $0.37 \pm 0.00$ \\
mincut (node2vec) & $85.26 \pm 11.41$ & $0.16 \pm 0.00$ & $80.75 \pm 0.38$ & $0.22 \pm 0.00$ & $74.15 \pm 1.22$ & $0.22 \pm 0.00$ & $74.12 \pm 0.88$ & $0.25 \pm 0.00$ & $87.10 \pm 1.88$ & $0.37 \pm 0.00$ \\
mincut (gpse) & $90.53 \pm 2.35$ & $0.16 \pm 0.00$ & $80.98 \pm 0.80$ & $0.22 \pm 0.00$ & ${76.44 \pm 1.25}$ & $0.22 \pm 0.00$ & $72.16 \pm 1.12$ & $0.25 \pm 0.00$ & ${92.60 \pm 0.96}$ & $0.37 \pm 0.00$ \\
mincut (rw) & $91.58 \pm 2.88$ & $0.16 \pm 0.00$ & $81.55 \pm 0.80$ & $0.22 \pm 0.00$ & $75.66 \pm 0.50$ & $0.21 \pm 0.00$ & ${75.69 \pm 1.28}$ & $0.25 \pm 0.00$ & $88.50 \pm 1.77$ & $0.37 \pm 0.00$ \\
mincut (none) & $87.37 \pm 2.88$ & $0.13 \pm 0.00$ & $80.98 \pm 0.48$ & $0.22 \pm 0.00$ & $75.27 \pm 0.92$ & $0.22 \pm 0.00$ & $72.55 \pm 0.69$ & $0.25 \pm 0.00$ & $86.00 \pm 1.90$ & $0.37 \pm 0.00$ \\
\hdashline
nopool & $78.95 \pm 13.42$ & $0.00 \pm 0.00$ & ${83.33 \pm 0.69}$ & $0.00 \pm 0.00$ & $73.41 \pm 0.88$ & $0.00 \pm 0.00$ & ${78.24 \pm 2.24}$ & $0.00 \pm 0.00$ & $73.30 \pm 1.35$ & $0.00 \pm 0.00$ \\
\bottomrule
\end{tabular}%
}

\caption{Performance comparison across all methods and datasets. \Cref{sec:setup} provides implementation details and experiment setup.}
\label{tab:main}

\end{table*}